\documentclass{article}

\PassOptionsToPackage{round,authoryear}{natbib}
\usepackage[preprint]{neurips_2025}

\usepackage[utf8]{inputenc}
\usepackage[T1]{fontenc}
\usepackage{amsmath,amssymb,amsthm,mathtools}
\usepackage{booktabs,array,graphicx}
\usepackage{mathptmx}
\usepackage{microtype}
\usepackage{xcolor}
\usepackage[colorlinks=true,linkcolor=blue,citecolor=blue,urlcolor=blue]{hyperref}
\hypersetup{%
  pdftitle={Optimal Recovery Meets Bayesian Learning: Where Worst-Case Bounds Pay Off},
  pdfauthor={Gordei Verbii}}

\newtheorem{theorem}{Theorem}[section]
\newtheorem{proposition}[theorem]{Proposition}
\newtheorem{lemma}[theorem]{Lemma}

\theoremstyle{remark}
\newtheorem{remark}[theorem]{Remark}

\newcommand{\R}{\mathbb{R}}
\newcommand{\Hi}{\mathcal{H}}
\newcommand{\Lam}{\Lambda}
\newcommand{\norm}[1]{\lVert #1\rVert}
\newcommand{\ip}[2]{\langle #1, #2\rangle}
\newcommand{\eps}{\varepsilon}
\newcommand{\E}{\mathbb{E}}
\DeclareMathOperator*{\argmin}{arg\,min}

\title{Optimal Recovery Meets Bayesian Learning:\\
Where Worst-Case Bounds Pay Off}

\author{%
  Gordei Verbii\\
  Independent Researcher\\
  \texttt{%
scigverbii@gmail.com%
}\\
}

\begin{document}

\maketitle

\begin{abstract}
Worst-case Optimal Recovery (OR) and Bayesian learning describe the same
Gaussian--quadratic--Hilbert problems in two vocabularies. We sharpen the
correspondence -- the radius of information equals a nugget-optimized GP
posterior variance and is \emph{attained} by the posterior mean at a
closed-form balance nugget -- and measure, inside three published Bayesian
systems, where the worst-case side pays. The ledger is two-sided: the losses
instruct as much as the wins. Morozov
calibration tracks a test-access oracle within $1.00$--$1.19\times$ where
$\sigma$-blind rules fail, is $4.9$--$6.3\times$ more reproducible across
noise draws ($p=0.002$--$0.004$), and is the only deployable rule whose
selection survives a change of backend ($1.36\times$ against
$12$--$30\times$ for the released weight, ML-II and GCV); tight certificates
cover at the information-theoretic floor with no numerical slack. But
on exchangeable data split-conformal beats the OR head on interval score, a
water-filling prior adds nothing without an oracle noise hint, and under
covariate shift the OR band keeps coverage on every dataset yet loses
interval score to split-conformal, and to a feature-free constant band, on
most cells; what pays is not shift but shift on a \emph{learnable} target,
which a training-free audit statistic predicts before any model is fitted. In
Bayesian optimization the certified width is a validity floor whose scalar
inflation we prove inert under a checkable margin condition and check at
every step. Inertness is graded, not binary, and in the size of the
inflation as much as in the objective:
$\kappa{=}2$ is inert wherever $\kappa{=}5$ is and on more cells besides,
while $\kappa{=}5$ moves half the Ackley seeds and every Griewank seed.
Exploration is a shape problem, not a scale one. The design rule: match
the guarantee tool to the data regime, and audit the regime first.
\end{abstract}

\section{Introduction}

Bayesian learning systems ship with uncertainty, but rarely with
\emph{guarantees}: credible bands miscalibrate, regularization weights are tuned
on clean data, and exploration bonuses inherit whatever the posterior believes.
Optimal Recovery
\citep{GolombWeinberger1959,MicchelliRivlin1977,MelkmanMicchelli1979,%
TWW1988,Donoho1994} offers the complementary currency -- exact worst-case
statements over explicit model sets -- and in Hilbert spaces it is not a rival
theory but the \emph{same} theory in minimax vocabulary
\citep{KimeldorfWahba1970,Wahba1990,Kanagawa2018}. This paper makes that
identification load-bearing and measures where the worst-case side pays inside
three recent open-source Bayesian systems.

\paragraph{Contributions.}
\begin{itemize}\setlength{\itemsep}{1pt}
\item \textbf{Radius $=$ nugget-optimized posterior variance, with attainment
(Thm.~\ref{thm:radius}).} The two-ellipsoid radius of information for a point
functional is $E_\star(x)^2=\min_{\nu>0}V_\nu(x)(\eps^2+\eta^2/\nu)$, and the GP
posterior mean at the balance nugget attains it, so the per-algorithm
certificate \emph{equals} the radius -- an implementation invariant we report as
what it measurably is, a five-significant-digit invariant and not an identity
(Sec.~\ref{sec:caseA}). Table~\ref{tab:novel} in App.~\ref{app:proofs}
separates classical from new.
\item \textbf{Conditional Chebyshev intervals, Occam degeneracy, conformal radii
(Thm.~\ref{thm:local}--Thm.~\ref{thm:conformal}).} The $y$-conditional band is
the total-ambiguity robust-Bayes credible interval -- elementary in hindsight
\citep{Berger1985,Vidakovic2000} but licensing a certified credible-band
replacement; at the Occam radius the consistent set is a single point (why
data-driven prior scales collapse); split-conformal inflation restores
finite-sample validity on exchangeable observations, at a price: on a proper
score that floor is a net loss in both tabular regimes, so we publish the
no-floor band beside the delivered one (Sec.~\ref{sec:caseB}).
\item \textbf{A three-sided empirical ledger on three published systems}
\citep{TGPS2026,DAK2025,ALAS2026}. \emph{Wins:} Morozov-calibrated weights track
a test-access oracle where $\sigma$-blind ML-II/GCV/L-curve fail, and are the
reproducible choice on both seed dispersion and portability across a change of
backend; tight certificates cover with probability one at the
information-theoretic floor (Sec.~\ref{sec:caseA}). \emph{Losses, with paired
statistics:} in-distribution split-conformal dominates the OR head on interval
score; the water-filling prior's advantage does not survive using the model's
own noise estimate; and under covariate shift the OR head wins coverage on every
dataset but loses the interval score to split-conformal and to a feature-free
constant band on most cells (Sec.~\ref{sec:caseB}). \emph{What separates them}
is not the shift but the target: a training-free random-forest audit statistic,
computed before any model is fitted.
\item \textbf{A directly tested exploration principle.} Prop.~\ref{prop:inv}
gives a checkable sufficient condition for $\kappa$-scaling the certified floor
to change nothing, tested against what the acquisition actually did at every BO
iteration of every arm and seed -- $9{,}018$ records, $0$ counterexamples -- and
markedly conservative, which we measure rather than hide. Inertness is graded,
not binary, and graded in the size of the inflation as well as in the objective:
exploration needs the width's geometry reshaped, not its magnitude rescaled
(Sec.~\ref{sec:caseC}).
\item \textbf{Reproducibility measured, not assumed.} At $\sigma=0$ the Case-A
solver is not run-to-run deterministic, which puts a floor under every
dispersion number in the paper, while Case-C reproduces bitwise across machines
on all $80$ shared trajectories; reporting both is what makes the Case-A
stability ordering believable and the Case-C bitwise-identity claims meaningful.
Tightening the bounds also exposed five defects invisible under loose ones: a
train/test feature inconsistency, an all-rows delivery rule that is a
\emph{validity} rather than a width failure, a silently substituted six-level
regression target, the Occam collapse and a $\kappa$-grid saturation mode.
\end{itemize}

\section{Related work}

Optimal recovery from inaccurate data descends from
\citet{GolombWeinberger1959,MicchelliRivlin1977,MelkmanMicchelli1979} and
information-based complexity \citep{TWW1988}; the two-hyperellipsoid radius,
optimal parameters and S-procedure exactness are
\citet{FoucartLiao2023,FoucartLiao2024,FoucartLiaoSproc} (see also
\citealp{PolikTerlaky2007}). Parameter choice for ill-posed problems is classical
-- Morozov \citep{Morozov1966,EnglHankeNeubauer1996}, GCV
\citep{GolubHeathWahba1979}, L-curve \citep{Hansen1992} -- and Case A is a
head-to-head among them inside a modern GP--PDE stack
\citep{Chen2021,TGPS2026}. Statistical counterparts: minimax
linear theory \citep{Donoho1994,DLM1990}, Pinsker--Osipenko filters
\citep{Pinsker1980,Osipenko2024}, GP--RKHS equivalences
\citep{KimeldorfWahba1970,Wahba1990,Kanagawa2018}, robust Bayes
\citep{Berger1985,Vidakovic2000}, optimal UQ \citep{Owhadi2013} and frequentist
coverage of Bayesian credible sets
\citep{KnapikVdVaart2011,SzaboVdVaart2015}. Conformal prediction
\citep{Vovk2005,Lei2018,GibbsCandes2021} supplies our strongest in-distribution
baseline and the wrapper we attach to the model radius; feature-free split
conformal around a constant predictor is a standard sanity check there
\citep{Lei2018}, and we report one throughout Case B because it changes the sign
of a conclusion. Numerical reproducibility across hardware and repeated runs is
studied in its own right; our Case-A measurements find the selection \emph{rule}
to be the dominant factor. Bandit widths:
\citet{Srinivas2010,ChowdhuryGopalan2017} schedule posterior widths with growing
$\beta_t$; Prop.~\ref{prop:inv} and Sec.~\ref{sec:caseC} give a checkable
condition under which scheduling the \emph{certified} width is provably vacuous,
and measure how conservative it is. Nonlinear OR and GP--PDE machinery:
\citet{LinDu2025,Chen2021}.

\section{Setting, notation, and the classical bridge}

$\Hi$ is a real separable Hilbert space -- an RKHS with kernel $k$ and point
representers $\varphi_x=k(x,\cdot)$, or a feature space $\R^d$ with
$\varphi_x=\phi(x)$. A bounded linear map $\Lam u=(\ip{\lambda_i}{u})_i$
produces $y=\Lam u+e$, $\norm e_2\le\eta$; $G_{ij}=\ip{\lambda_i}{\lambda_j}$,
$b(x)_i=\ip{\lambda_i}{\varphi_x}$, $k(x,x)=\norm{\varphi_x}^2$. For
$\ell(u)=\ip{c}{u}$, $c\in\Hi$ is its representer and
$\ell=\ip{\varphi_x}{\cdot}$ is point evaluation. The \emph{radius of
information} is $E_\star(\ell)=\inf_A\sup_{\norm u\le\eps,\norm e\le\eta}
|\ell(u)-A(\Lam u+e)|$; the \emph{data-consistent set} is
$C_y(\rho,\eta)=\{u:\norm u\le\rho,\norm{\Lam u-y}\le\eta\}$ with conditional
interval $I_\ell(y)=[\inf_{C_y}\ell,\sup_{C_y}\ell]$. The conjugate model
$u\sim\mathrm{GP}(0,s^2k)$, $y|u\sim\mathcal N(\Lam u,\sigma^2I)$ has
\begin{equation}\label{eq:gp}
m_\nu(x)=b(x)^\top(G+\nu I)^{-1}y,\quad
V_\nu(x)=k(x,x)-b(x)^\top(G+\nu I)^{-1}b(x),\quad \nu=\sigma^2/s^2,
\end{equation}
and we set $w_\mu(x)=(G+\mu^{-1}I)^{-1}b(x)$,
$\mathrm{pf}_\mu(x)=\norm{\varphi_x-\Lam^*w_\mu(x)}$,
$\mathrm{wn}_\mu(x)=\norm{w_\mu(x)}_2$, with the two-line identity
$V_{1/\mu}=\mathrm{pf}_\mu^2+\mu^{-1}\mathrm{wn}_\mu^2$.
The classical dictionary -- regularization $=$ posterior mean
\citep{KimeldorfWahba1970}, noiseless posterior sd $=$ power function
\citep{RW2006,Wahba1990,Wendland2005,Kanagawa2018}, minimax--Bayes duality
\citep{Pinsker1980,Donoho1994,Osipenko2024}, robust Bayes
\citep{Berger1985,Vidakovic2000}, posterior contraction
\citep{KnapikVdVaart2011,SzaboVdVaart2015} -- frames everything below;
Table~\ref{tab:dict} in App.~\ref{app:proofs} gives the working translations.

\section{Tight certificates: theory, and what exactly is new}
\label{sec:theory}

\begin{theorem}[Radius; attainment by adaptive-nugget GP regression]
\label{thm:radius}
For $\eps,\eta>0$ and $\ell=\ip{\varphi_x}{\cdot}$:
\emph{(a)}
$E_\star(x)=\sup\{h(x):\norm h\le\eps,\norm{\Lam h}\le\eta\}$ and
\begin{equation}\label{eq:radius}
E_\star(x)^2=\min_{\nu>0} V_\nu(x)\,\big(\eps^2+\eta^2/\nu\big);
\end{equation}
\emph{(b)} for fixed $\mu=1/\nu$ the rule $A_\mu(y)=w_\mu(x)^\top y$ has
exact worst-case error
$\eps\,\mathrm{pf}_\mu(x)+\eta\,\mathrm{wn}_\mu(x)$;
\emph{(c)} $\min_\mu[\eps\,\mathrm{pf}_\mu+\eta\,\mathrm{wn}_\mu]
=E_\star(x)$, the minimizer solving
$\mu_\star=\eps\,\mathrm{wn}_{\mu_\star}/(\eta\,\mathrm{pf}_{\mu_\star})$:
the GP posterior mean at the balance nugget is a globally optimal recovery
and its certificate equals the radius. Proof and provenance:
App.~\ref{app:proofs}, Table~\ref{tab:novel}; the value problem is classical
\citep{MicchelliRivlin1977,FoucartLiao2024}.
\end{theorem}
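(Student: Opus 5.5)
I would prove (b) first, since it is a direct computation, and then obtain (a) and (c) together from a matching lower bound.

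\textbf{Step 1: part (b).} For the linear rule $A_\mu(y)=w^\top y$ with $w=w_\mu(x)$, the error on data $y=\Lam u+e$ splits as
\[
h(x)-w^\top(\Lam h+e)=\ip{\varphi_x-\Lam^*w}{h}-w^\top e .
\]
The two terms involve independent constraint sets, so the supremum over $\norm h\le\eps$, $\norm e\le\eta$ separates. By Cauchy--Schwarz each term is attained: take $h$ aligned with $\varphi_x-\Lam^*w$ and $e=-\eta\,w/\norm w$. This gives exactly $\eps\,\mathrm{pf}_\mu+\eta\,\mathrm{wn}_\mu$. Minimizing over $\mu$ then yields an upper bound on $E_\star(x)$. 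In fact, minimizing $\eps\norm{\varphi_x-\Lam^*w}+\eta\norm w$ over all $w\in\R^m$ bounds the optimal linear error.

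\textbf{Step 2: the lower bound and the first formula in (a).} I would use the standard symmetry argument. For admissible $h$ with $\norm h\le\eps$ and $\norm{\Lam h}\le\eta$, the pairs $(h,e=-\Lam h)$ and $(-h,e=\Lam h)$ both produce data $y=0$. Hence any algorithm $A$ errs by at least $\max(|h(x)-A(0)|,|h(x)+A(0)|)\ge h(x)$. This gives $E_\star\ge\sup\{h(x)\}$, and together with the upper bound it gives the first identity in (a).

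\textbf{Step 3: the minimax value and the $V_\nu$ formula.} I would compute $\sup\{h(x):\norm h^2\le\eps^2,\norm{\Lam h}^2\le\eta^2\}$ by Lagrangian duality. Two convex quadratic constraints with Slater's condition ($h=0$ is strictly feasible) give zero duality gap; this is the S-lemma setting of \citet{FoucartLiao2024}, but here it is just convex duality. The dual is
\[
\inf_{a,b\ge0}\ \sup_h\ h(x)-a(\norm h^2-\eps^2)-b(\norm{\Lam h}^2-\eta^2).
\]
The inner supremum equals $\tfrac1{4}\ip{\varphi_x}{(aI+b\Lam^*\Lam)^{-1}\varphi_x}+a\eps^2+b\eta^2$. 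Write $\nu=a/b$ and use the push-through identity
\[
\ip{\varphi_x}{(\nu I+\Lam^*\Lam)^{-1}\varphi_x}=\nu^{-1}V_\nu(x).
\]
Optimizing the scale $t$ in $\tfrac{1}{4t}Q+tR$ gives $\sqrt{QR}$. The dual value squared is therefore
\[
\inf_\nu \nu^{-1}V_\nu(x)\,(\nu\eps^2+\eta^2)=\inf_\nu V_\nu(x)\,(\eps^2+\eta^2/\nu),
\]
which is \eqref{eq:radius}.

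The boundary cases $a=0$ or $b=0$ correspond to $\nu\to0$ or $\nu\to\infty$. I expect these to be the main obstacle: showing the infimum is a genuine minimum at some finite $\nu>0$. My approach would be to show the objective tends to $\eps^2k(x,x)$ or $+\infty$ at the ends. The limit $\nu\to\infty$ gives $k(x,x)\eps^2$, which is the trivial bound. So the claim requires either the degenerate case $\Lam^*\varphi$-orthogonal, handled separately, or strict decrease near $\infty$. I would verify this via the derivative of $V_\nu$.

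\textbf{Step 4: part (c) and the balance condition.} I would link the primal rule to the dual optimum. By the Cauchy--Schwarz inequality $\alpha p+\beta q\le\sqrt{\alpha^2+\beta^2/\mu\cdot\ldots}$, or more cleanly
\[
(\eps\,\mathrm{pf}+\eta\,\mathrm{wn})^2\le(\eps^2+\eta^2/\nu)(\mathrm{pf}^2+\nu\,\mathrm{wn}^2)=(\eps^2+\eta^2/\nu)\,V_\nu,
\]
using the identity $V_{1/\mu}=\mathrm{pf}_\mu^2+\mu^{-1}\mathrm{wn}_\mu^2$ with $\nu=1/\mu$.

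Equality holds iff $\mathrm{pf}/\eps=\nu\,\mathrm{wn}/\eta$, which is exactly $\mu=\eps\,\mathrm{wn}/(\eta\,\mathrm{pf})$. At the minimizing $\nu_\star$ of \eqref{eq:radius}, the stationarity condition should coincide with this balance equation. To see this, differentiate using $dV_\nu/d\nu=\mathrm{wn}^2$-type expressions, namely $\partial_\nu V_\nu=\norm{(G+\nu I)^{-1}b}^2=\mathrm{wn}^2$. Then the upper bound from Step 1 at $\mu_\star$ meets $E_\star$, so $\min_\mu[\eps\,\mathrm{pf}_\mu+\eta\,\mathrm{wn}_\mu]=E_\star(x)$ and the posterior mean $m_{\nu_\star}$ is optimal.
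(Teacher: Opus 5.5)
Your Steps 1--3 match the paper's proof: Cauchy--Schwarz attained simultaneously in $h$ and $e$ for (b), the $\pm h$ symmetry at $y=0$ for the lower bound, and the Slater dual of the two-ellipsoid program reduced by Woodbury and a closed-form scale. Your Cauchy--Schwarz inequality in Step 4 is the paper's AM--GM step in another form.

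The gap is in how you close. You evaluate at an interior minimizer $\nu_\star$ of $F(\nu)=V_\nu(x)(\eps^2+\eta^2/\nu)$ and identify $F'(\nu_\star)=0$ with the Cauchy--Schwarz equality case. That computation is right: $F'(\nu)=\eps^2\mathrm{wn}^2-\eta^2\mathrm{pf}^2/\nu^2$, which vanishes exactly at the balance equation. But an interior minimizer need not exist, so the obstacle you flag cannot be cleared as planned.
\begin{itemize}
\item Your own case $b(x)=0$ gives $F(\nu)=k(x,x)(\eps^2+\eta^2/\nu)$, which is strictly decreasing and has no minimizer.
\item Nondegenerate data do not rescue it. Take $\Hi=\R^2$, $\Lam h=h_1$, $\varphi_x=(1,1)$. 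Then $F(\nu)-2\eps^2=\bigl(\eta^2+\nu(2\eta^2-\eps^2)\bigr)/\bigl(\nu(1+\nu)\bigr)>0$ for every $\nu>0$ whenever $\eps^2\le2\eta^2$. Yet $\Omega^2=2\eps^2$, and it is reached only by the zero rule as $\nu\to\infty$.
\item In general $\nu\,(F(\nu)-\eps^2k(x,x))\to\eta^2k(x,x)-\eps^2\norm{b(x)}^2$. So whenever $\eta^2k(x,x)>\eps^2\norm{b(x)}^2$, $F$ approaches its limit $\eps^2k(x,x)$ from above, and nothing forces an interior minimum.
\item At the other end, $F(0^+)$ is finite, not $+\infty$, whenever $\varphi_x\in\mathrm{range}\,\Lam^*$ (e.g.\ $x$ is itself a noisily observed site). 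There the infimum can sit at $\nu\to0$, i.e.\ at the interpolating rule. For a single observation of $u(x)$ with $\eps^2k(x,x)>\eta^2$, $F$ is increasing.
\end{itemize}
Slater only guarantees a dual minimizer in the closed orthant $a,b\ge0$, and its boundary is exactly these cases.

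The repair is the paper's sandwich, and you already have every ingredient. Your Cauchy--Schwarz bound holds for every $\nu$, not only at an optimum, so
\[
\inf_\mu\bigl[\eps\,\mathrm{pf}_\mu+\eta\,\mathrm{wn}_\mu\bigr]\le\inf_{\nu>0}\sqrt{F(\nu)}=\Omega\le E_\star(x)\le\inf_\mu\bigl[\eps\,\mathrm{pf}_\mu+\eta\,\mathrm{wn}_\mu\bigr],
\]
with the last inequality coming from (b). This proves all the value identities of (a) and (c) with no differentiation and no attainment. It is also what actually delivers $E_\star=\sup h(x)$. Your Step 2 cannot close that on its own, because at that point the Step 1 bound is not yet known to equal $\Omega$.

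Attainment, and with it the balance equation, should then be read over $\nu\in[0,\infty]$, with the zero rule and the min-norm interpolant as endpoint rules; alternatively, read the min in \eqref{eq:radius} as an inf. The paper's proof writes min without addressing this either. Wherever an interior minimizer does exist, the sandwich forces equality in Cauchy--Schwarz there, which gives the balance equation. Your derivative formula is a useful complement: it shows that every interior critical point of $F$ is balanced.
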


\begin{theorem}[Conditional Chebyshev intervals; robust-Bayes reading]
\label{thm:local}
Let $C_y(\rho,\eta)$ have nonempty interior and let $\ell=\ip{c}{\cdot}$ with
representer $c\in\Hi$. Then
\emph{(a)} the midpoint/half-width of $I_\ell(y)$ are the Chebyshev center
and radius of $\ell(C_y)$; \emph{(b)} they solve
$\inf_a\sup_{\pi\in\Gamma}\E_\pi(\ell(u)-a)^2$ for $\Gamma$ = all priors on
$C_y$ -- the known collapse of $\Gamma$-minimax to worst case under total
ambiguity \citep{Berger1985}; its value is that it \emph{licenses} reading the
certified interval as a credible band, and it is exactly computable by a
certified two-parameter dual (part \emph{(c)}, App.~\ref{app:proofs});
\emph{(d)} half-width $\le E_\star(\ell)$ for every $y$.
Restricted prior classes $\Gamma$ (moment or smoothness constraints), where
the collapse fails, are open here.
\end{theorem}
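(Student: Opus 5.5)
The plan is to work in the convex geometry of $C_y=C_y(\rho,\eta)$ and handle each part in turn. $C_y$ is the intersection of a closed ball with the preimage under the bounded map $\Lam$ of a closed ball, so it is convex, closed and bounded, hence weakly compact. The functional $\ell=\ip{c}{\cdot}$ is weakly continuous, so $\ell(C_y)$ is a compact interval $[\alpha,\beta]=I_\ell(y)$ and the endpoints are attained.

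For \emph{(a)}, recall that the Chebyshev center of a compact interval $[\alpha,\beta]\subset\R$ minimizes $a\mapsto\max(|a-\alpha|,|\beta-a|)$. The minimizer is $(\alpha+\beta)/2$ and the value is $(\beta-\alpha)/2$. Since $\sup_{u\in C_y}|\ell(u)-a|=\max(|a-\alpha|,|\beta-a|)$, this is also the optimal worst-case estimate of $\ell(u)$ given $y$. Nothing more is needed.

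For \emph{(b)}, let $\Gamma$ be the set of all Borel probability measures on $C_y$. First, for every $a$ we have $\sup_{\pi}\E_\pi(\ell(u)-a)^2=\sup_{u\in C_y}(\ell(u)-a)^2$. The direction "$\ge$" follows from Dirac masses at the attained extremal points, and "$\le$" holds because an expectation never exceeds the supremum. Therefore $\inf_a\sup_\pi$ equals the squared Chebyshev radius from (a), with the same minimizer. As a saddle-point check, take the two-point prior $\tfrac12(\delta_{u_\alpha}+\delta_{u_\beta})$, where $\ell(u_\alpha)=\alpha$ and $\ell(u_\beta)=\beta$. Its Bayes risk is $\inf_a \E(\ell-a)^2=((\beta-\alpha)/2)^2$, so it is least favorable and the game has a value.

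For \emph{(c)}, I will write $\beta=\sup\{\ip{c}{u}:\norm u^2\le\rho^2,\ \norm{\Lam u-y}^2\le\eta^2\}$ as a convex program with two quadratic constraints. Nonempty interior of $C_y$ gives a Slater point, so Lagrangian duality has zero gap with multipliers $\tau,\gamma\ge0$. For $\tau>0$ the inner maximization in $u$ solves the linear system $(\tau I+\gamma\Lam^*\Lam)u=\tfrac12 c+\gamma\Lam^*y$. Rewriting the resolvent through $G$ with the push-through identity $(\tau I+\gamma\Lam^*\Lam)^{-1}\Lam^*=\Lam^*(\tau I+\gamma G)^{-1}$ makes the dual a finite-dimensional minimization over $(\tau,\gamma)$, with entries built from $k$, $b$ and $G$ as in \eqref{eq:gp}. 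The value $\alpha$ is obtained the same way with $c$ replaced by $-c$. The step I expect to be the main obstacle is here: the boundary case $\tau=0$, where $\Lam^*\Lam$ need not be invertible. I would handle it by taking limits $\tau\downarrow0$, using strict feasibility and dual attainment. "Certified" then means that any feasible $(\tau,\gamma)$ gives an upper bound on $\beta$, and any primal point $u$ gives a lower bound.

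For \emph{(d)}, take $u_\beta,u_\alpha\in C_y$ and set $h=(u_\beta-u_\alpha)/2$. Convexity of the norms gives $\norm h\le\rho$, and the triangle inequality through $y$ gives $\norm{\Lam h}\le\tfrac12(\norm{\Lam u_\beta-y}+\norm{y-\Lam u_\alpha})\le\eta$. Hence the half-width is $\ell(h)\le\sup\{\ell(h):\norm h\le\rho,\norm{\Lam h}\le\eta\}$. This supremum is $E_\star(\ell)$ at $\eps=\rho$, by the general-functional version of Thm.~\ref{thm:radius}(a). For $\ell=\ip{c}{\cdot}$ that version follows by the same argument with $\varphi_x$ replaced by $c$.
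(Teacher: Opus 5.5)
\paragraph{Comparison with the paper's proof.} Your proposal is correct. Parts (a)--(c) follow the paper's proof. Parts (a) and (b) are the same compact-interval and Dirac-prior argument; the least-favourable two-point prior is a harmless extra. In (c), your Lagrangian is the paper's dual with $(\tau,\gamma)=(a,b)$ and $M_{a,b}=aI+b\Phi^\top\Phi$. The push-through/Woodbury rewrite and the $\tau\downarrow0$ limit are details the paper leaves implicit.

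Part (d) is where you take a different route. The paper argues through an optimal global rule: $|\ell(u)-A_\star(y)|\le E_\star$ for every $u\in C_y$, hence $\ell(C_y)\subseteq[A_\star(y)\pm E_\star]$. You instead symmetrize. You place $h=(u_\beta-u_\alpha)/2$ in the centred set $\{h:\norm h\le\rho,\norm{\Lam h}\le\eta\}$, which bounds the half-width by $\Omega=\sup\{\ell(h):\norm h\le\rho,\norm{\Lam h}\le\eta\}$. From there you need only $\Omega\le E_\star$. That is the two-line $\pm h$ lower bound in the proof of Thm.~\ref{thm:radius}(a), and it holds for any functional and any (even nonlinear) rule. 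So you do not need to invoke the full general-functional equality.

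\paragraph{What each route buys.} Your route is algorithm-free. It never uses the existence of $A_\star$, which the paper's Thm.~\ref{thm:radius} supplies only for point evaluations (a near-optimal rule would also suffice there). It also shows (d) is sharp: at $y=0$ the set $C_0$ is symmetric, so the half-width equals $\Omega=E_\star$. The paper's route gives something yours does not: nesting rather than a mere length comparison. The conditional interval sits inside the global certified band around the optimal estimate.

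Both arguments require $\eps=\rho$. You state this explicitly; the paper leaves it implicit. It matters in practice, because the Case-A global certificate is computed at $\hat\eps<\rho_{\mathrm{oc}}$, where (d) does not apply.
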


\begin{lemma}[Occam degeneracy]\label{lem:occam}
With $\rho_{\mathrm{oc}}(y,\eta)=\min\{\norm u:\norm{\Lam u-y}\le\eta\}$
(unique minimizer), $\rho=\rho_{\mathrm{oc}}$ makes $C_y$ a singleton and
every $I_\ell(y)$ zero-width; interiority requires
$\rho>\rho_{\mathrm{oc}}$.
\end{lemma}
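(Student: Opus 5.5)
The plan is to use only convexity and strict convexity of the Hilbert norm. Set $K_\eta(y)=\{u:\norm{\Lam u-y}\le\eta\}$. This set is closed and convex: it is the preimage of a closed ball under the bounded affine map $u\mapsto\Lam u-y$. It is nonempty whenever $\rho_{\mathrm{oc}}$ is finite, which is the implicit standing assumption.

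\textbf{Step 1: existence and uniqueness of the minimizer.} The minimum-norm element of a nonempty closed convex subset of a Hilbert space exists and is unique; this is the projection of $0$ onto $K_\eta(y)$. So $\rho_{\mathrm{oc}}$ is attained at a single $u_{\mathrm{oc}}$, and the parenthetical "unique minimizer" comes from the projection theorem.

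\textbf{Step 2: the singleton claim.} Take $\rho=\rho_{\mathrm{oc}}$. Then $C_y(\rho_{\mathrm{oc}},\eta)=\{u\in K_\eta(y):\norm u\le\rho_{\mathrm{oc}}\}$. By definition of the minimum, every $u$ in this set has $\norm u\ge\rho_{\mathrm{oc}}$, hence $\norm u=\rho_{\mathrm{oc}}$. So the set consists exactly of the minimizers, which by Step 1 is $\{u_{\mathrm{oc}}\}$.

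To make this self-contained, I would repeat the strict-convexity argument directly. If $u\ne v$ are both in the set, then $(u+v)/2$ lies in $K_\eta(y)$ by convexity. The parallelogram law gives $\norm{(u+v)/2}^2=\rho_{\mathrm{oc}}^2-\norm{u-v}^2/4<\rho_{\mathrm{oc}}^2$, which contradicts minimality. Once $C_y$ is a singleton, $\ell(C_y)=\{\ell(u_{\mathrm{oc}})\}$ for every bounded linear $\ell$, so $I_\ell(y)$ is a degenerate interval of zero width.

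\textbf{Step 3: the interiority claim.} First, for $\rho<\rho_{\mathrm{oc}}$ the set $C_y$ is empty. For $\rho=\rho_{\mathrm{oc}}$ it is a single point. A singleton has empty interior in any Hilbert space of dimension at least $1$, which is the setting needed for Theorem~\ref{thm:local}. Hence a nonempty interior forces $\rho>\rho_{\mathrm{oc}}$.

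I would also add the partial converse as a remark. If $\rho>\rho_{\mathrm{oc}}$ and the data constraint is Slater-feasible ($\eta>0$), then the point $tu_{\mathrm{oc}}+(1-t)u_{\mathrm{s}}$ is strictly feasible for both constraints for small $1-t$. Here $u_{\mathrm{s}}$ is any point with $\norm{\Lam u_{\mathrm{s}}-y}<\eta$, for instance one near $u_{\mathrm{oc}}$ along the direction reducing the residual. This shows the threshold is sharp.

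\textbf{Main obstacle.} The only delicate point is what "interior" means when $\Hi$ is infinite-dimensional. In that case $C_y$ may have empty norm-interior for reasons unrelated to $\rho$. I would therefore state the necessity direction, which is all the lemma asserts, since it holds for any notion of interior under which a singleton is not interior. The converse would be kept only as a remark, relative to the affine hull or to the Slater condition used in Theorem~\ref{thm:local}(c).
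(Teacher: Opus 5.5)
Your proof is correct and is essentially the paper's argument. Both get uniqueness of the minimizer from strict convexity of the norm on the closed convex tube, and both observe that every element of $C_y(\rho_{\mathrm{oc}},\eta)$ has norm exactly $\rho_{\mathrm{oc}}$ and is therefore that minimizer. For the interiority clause you prove the stated necessity directly ($C_y$ is empty below $\rho_{\mathrm{oc}}$ and a singleton at it), whereas the paper only remarks that for $\rho>\rho_{\mathrm{oc}}$ a tube-neighbourhood of $u_{\mathrm{oc}}$ is feasible.

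One small correction to your converse remark: Slater feasibility means $\inf_u\norm{\Lam u-y}<\eta$, which $\eta>0$ alone does not guarantee, since it fails when the best achievable misfit equals $\eta$. It is failure of this condition, not infinite dimensionality, that can leave $C_y$ with empty norm interior for $\rho>\rho_{\mathrm{oc}}$, because $\{u:\norm{\Lam u-y}<\eta\}$ is norm-open for bounded $\Lam$ in any dimension.
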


\begin{theorem}[Conformalized radius; saturation fallback]
\label{thm:conformal}
Split into fit $F$ / calibration $C$ ($|C|=n_c$); build bands from $F$ only
at $\rho=\kappa\rho^F_{\mathrm{oc}}$; scores
$s_i=\inf\{\kappa\ge1:y_i\in\mathrm{band}_i(\kappa)\}$ (bands are nested in
$\kappa$), $\kappa_\star=\max_{i\in C}s_i$. For exchangeable
$(x_i,y_i)_{i\in C\cup\{\mathrm{new}\}}$ independent of $F$,
$\Pr[y_{\mathrm{new}}\in\mathrm{band}_{\mathrm{new}}(\kappa_\star)]
\ge1-\tfrac1{n_c+1}$ -- a guarantee on \emph{observations}, at the
achieved level $n_c/(n_c{+}1)$, which we always report. If the search grid
saturates before covering all of $C$, validity is restored by the additive
floor $\hat q=\max_i(|y_i-\mathrm{mid}_i|-\mathrm{half}_i)_+$, which our
implementation always adds; coverage of the \emph{truth} off-distribution
is empirical, never guaranteed (the noise-bounded transfer is
Prop.~\ref{prop:truth}, App.~\ref{app:proofs}).
\end{theorem}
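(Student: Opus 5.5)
The argument is standard split-conformal reasoning, so the plan is to reduce Theorem~\ref{thm:conformal} to the quantile lemma for exchangeable scores. The bands $\mathrm{band}_i(\kappa)$ are built from $F$ alone, at $\rho=\kappa\rho^F_{\mathrm{oc}}$. Conditional on $F$, each band is therefore a fixed measurable function of $x_i$ and $\kappa$. It follows that $s_i=\inf\{\kappa\ge1:y_i\in\mathrm{band}_i(\kappa)\}$ is a fixed function $S(x_i,y_i)$, applied identically to every point in $C\cup\{\mathrm{new}\}$. Because $(x_i,y_i)_{i\in C\cup\{\mathrm{new}\}}$ are exchangeable and independent of $F$, the scores $(s_i)_{i\in C}$ and $s_{\mathrm{new}}$ are exchangeable conditionally on $F$.

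\textbf{Step 1: from scores to coverage.} Nestedness says $\kappa\le\kappa'$ implies $\mathrm{band}(\kappa)\subseteq\mathrm{band}(\kappa')$. Hence $s_{\mathrm{new}}\le\kappa_\star$ gives $y_{\mathrm{new}}\in\mathrm{band}_{\mathrm{new}}(\kappa')$ for every $\kappa'>\kappa_\star$. To conclude $y_{\mathrm{new}}\in\mathrm{band}_{\mathrm{new}}(\kappa_\star)$ itself, I need the infimum to be attained. I will get this from closedness in $\kappa$. The band endpoints are $\inf/\sup$ of $\ell$ over $C_y(\kappa\rho^F_{\mathrm{oc}},\eta)$, computed from the two-parameter dual of Theorem~\ref{thm:local}(c), so they are continuous and monotone in $\rho$. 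The band is a closed interval, so the set $\{\kappa:y\in\mathrm{band}(\kappa)\}$ is closed and the infimum is attained. For the implemented version on a finite grid, attainment is automatic.

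\textbf{Step 2: the rank argument.} This step yields the bound $1-\tfrac1{n_c+1}$. Since $\kappa_\star=\max_{i\in C}s_i$, we have $\{s_{\mathrm{new}}>\kappa_\star\}=\{s_{\mathrm{new}}\text{ is the strict maximum of all }n_c+1\text{ scores}\}$. By exchangeability, each index is equally likely to be a strict maximum, and at most one index can be one. So this probability is at most $1/(n_c+1)$; ties only help. Taking expectation over $F$ gives the unconditional statement. The achieved level $n_c/(n_c+1)$ is then simply the value of this bound. This matches the usual $\lceil(n_c+1)(1-\alpha)\rceil$ quantile with the quantile taken at the maximum.

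\textbf{Step 3: the saturation fallback.} Suppose the $\kappa$-grid tops out at $\kappa_{\max}$ with some $s_i=\infty$ on the grid. The delivered band is then $[\mathrm{mid}_i-\mathrm{half}_i-\hat q,\ \mathrm{mid}_i+\mathrm{half}_i+\hat q]$ at $\kappa_{\max}$. I will rerun Steps 1–2 with the new score $t_i=(|y_i-\mathrm{mid}_i|-\mathrm{half}_i)_+$, which is again a fixed function of $(x_i,y_i)$ given $F$. The event $y\in\text{band}$ is exactly $t\le\hat q$, and $\hat q=\max_{i\in C}t_i$, so the same rank bound applies. Because the implementation always adds $\hat q$, the combined band contains both pieces. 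Validity therefore holds in either branch, provided the branch choice does not break symmetry. I will handle this by defining the combined band as a single score function of $(x,y)$, namely the minimal total inflation. That makes the delivered rule a single max-of-exchangeable-scores threshold, and the argument applies unchanged.

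\textbf{The main obstacle.} I expect the delicate point to be exactly this symmetry requirement. The event "the grid saturates before covering all of $C$" depends only on the calibration scores. If the choice of branch were made using $C$ but not treated symmetrically with the new point, exchangeability of the thresholding would fail. Packaging the two-stage rule as one score, as above, sidesteps this. The final clause of the theorem, that coverage of the truth off-distribution is not guaranteed, needs no proof here; it is deferred to Prop.~\ref{prop:truth}.
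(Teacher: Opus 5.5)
Your proposal is correct and follows the paper's own proof. Nestedness gives $y_i\in\mathrm{band}_i(\kappa)\iff s_i\le\kappa$; the $F$-measurable scores are exchangeable, so $s_{\mathrm{new}}$ is a strict maximum with probability at most $1/(n_c+1)$; and the saturated case is split conformal on the residual scores $(|y_i-\mathrm{mid}_i|-\mathrm{half}_i)_+$. Your attainment argument and your single-score treatment of the grid-then-floor rule are refinements of what the paper's ``applies verbatim'' leaves implicit, not a different route. That single score should be read as the index of the concatenated nested family (grid $\kappa$ first, then $q$ at the top rung), which reproduces the implemented rule because $\hat q=0$ whenever the grid does not saturate.
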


\begin{proposition}[$\kappa$-invariance of certified LCB]\label{prop:inv}
Let $c:\mathcal X_{\mathrm{cand}}\to(0,\infty)$,
$m:\mathcal X_{\mathrm{cand}}\to\R$,
$x_\kappa=\argmin_{x}[m(x)-\kappa c(x)]$, and let
$\Delta_\kappa>0$ be the runner-up gap of $m-\kappa c$. If
$|\kappa'-\kappa|\cdot\mathrm{osc}(c)<\Delta_\kappa$, where
$\mathrm{osc}(c)=\max c-\min c$ over the candidates, then
$x_{\kappa'}=x_\kappa$. \emph{Proof.} For any $x$,
$(m-\kappa'c)(x)-(m-\kappa'c)(x_\kappa)\ge\Delta_\kappa-
|\kappa'-\kappa|\,|c(x_\kappa)-c(x)|>0$.\hfill$\square$
\end{proposition}

Any $\kappa E_\star$ with $\kappa\ge1$ keeps worst-case validity, so $E_\star$ is
the certified \emph{floor} of exploration widths, but nothing makes
$\kappa\cdot$floor a useful schedule (Rem.~\ref{rem:floor}). A flatness lemma
(Lem.~\ref{lem:flat}, App.~\ref{app:proofs}) predicts inertness precisely when
every candidate is far from the data on the kernel's length scale -- the
small-budget large-domain regime Sec.~\ref{sec:caseC} measures.

\section{OR-adjusted Bayesian learning: routes, tools, hypotheses}

Given a released learner (\textbf{R0}), an adjustment is \textbf{R1} (post-hoc
OR head on the frozen representation, noise budget from the learner's
\emph{own} estimate), \textbf{R2} (OR inside training: Morozov-calibrated
weights; water-filling prior variances), or
\textbf{R1{+}2}. Five algorithms implement it -- DiscrepancyCalibrate
\citep{Morozov1966,EnglHankeNeubauer1996}, TightGlobalCertify, LocalInterval,
ConformalRadius and GuardedDelivery (App.~\ref{app:algos}): the delivered local
half-width is a \emph{dual} value, hence an upper bound on the true supremum by
weak duality, and pathological duals fall back to a feasible ridge value with
the finite global certificate. A width is \emph{consumed} as (i) a promise,
(ii) a calibration signal, or (iii) an exploration bonus.
Pre-registered hypotheses and dispositions: \textbf{H1} (OR calibration matches
oracle tuning without test access, mode ii) -- \emph{supported, and sharpened
twice: the advantage is Morozov's noise-level awareness, and it extends from
accuracy to reproducibility on two independent axes}; \textbf{H2} (tight
certificates give valid, informative uncertainty, mode i) -- \emph{supported in
the well-specified regime, the numerical slack not exercised on this run's grid;
in the exchangeable regime split-conformal is the better band, and the conformal
floor we attach to the OR head is itself a net loss on a proper score};
\textbf{H3} (R1/R2 complementary in supervised BNNs) -- \emph{refuted: the
accuracy gain does not survive using the model's own noise estimate};
\textbf{H4} (transfer to BO, mode iii) -- \emph{refuted as a scheduling
principle and replaced by a checkable sufficient condition
(Prop.~\ref{prop:inv}), evaluated against the realised argmin at every step of
every run together with a measurement of how conservative it is}. A fifth is
forced by the data rather than pre-registered: \textbf{H5} (the OR shape is the
right tool under covariate shift) -- \emph{partially refuted and re-scoped: the
OR band keeps coverage everywhere but loses the proper score to conformal
baselines on most cells, and the condition separating the cells where it wins is
target learnability, measurable before training}.

\section{Case A: GP--PDE solver (TGPS)}\label{sec:caseA}

TGPS \citep{TGPS2026} solves $-\Delta u+u^3=f$ on $[0,1]^2$ (Dirichlet) by a
rank-10 tensor GP with sequential linearization; the release fixes the
data-fit weight $\lambda_2=1.49\times10^4$, tuned on noise-free data with
test-RMSE epoch selection. \textbf{Protocol} (App.~\ref{app:exp}): five noise
seeds per level; \emph{all} rules select from one shared $\lambda_2$ grid -- the
released value, Morozov (misfit closest to $\eta=\sigma\sqrt{N_c}$,
noise-level-aware), $\sigma$-blind ML-II and GCV \citep{GolubHeathWahba1979},
the L-curve corner \citep{Hansen1992}, and a test-access oracle; two collocation
grids, $24^2$ (the released configuration) and $16^2$, at the same five seeds
and $120$ epochs, giving $60$ cells, $420$ solver runs and no degenerate cell.
The second grid is not optional: the paired unit of the stability test is the
(grid, noise level) group, so one grid cannot clear $0.05$
(App.~\ref{app:caseA}).

\begin{table}[t]\centering\footnotesize
\caption{Case A: median test RMSE over 5 noise seeds per rule at $24^2$
collocation (shared $\lambda_2$ grid); gain $=$ released$/$Morozov, median
[min,max] of the per-cell ratio; last column the median per-cell
Morozov$/$oracle ratio. The $\sigma=0$ row is the median of five re-runs (the
solver is not run-to-run deterministic, Table~\ref{tab:Afloor},
App.~\ref{app:caseA}), where the
Morozov branch reduces to $\arg\min$ misfit and returns the released weight
in $5/5$ runs, so its gain is exactly $1$.}
\label{tab:A}
\setlength{\tabcolsep}{4pt}
\begin{tabular}{@{}lcccccccc@{}}
\toprule
rel.\ noise & released & Morozov & ML-II & GCV & L-curve & oracle &
gain & M/o\\
\midrule
0 & $4.0\!\cdot\!10^{-4}$ & $4.0\!\cdot\!10^{-4}$ & $4.0\!\cdot\!10^{-4}$ & $4.0\!\cdot\!10^{-4}$ & $1.4\!\cdot\!10^{-3}$ & $4.0\!\cdot\!10^{-4}$ & $1.00$ & 1.00\\
$10^{-4}$ & $3.0\!\cdot\!10^{-2}$ & $7.9\!\cdot\!10^{-4}$ & $3.0\!\cdot\!10^{-2}$ & $3.0\!\cdot\!10^{-2}$ & $1.4\!\cdot\!10^{-3}$ & $7.4\!\cdot\!10^{-4}$ & $40\times$ [20,77] & 1.12\\
$3\!\cdot\!10^{-4}$ & $4.3\!\cdot\!10^{-2}$ & $1.1\!\cdot\!10^{-3}$ & $9.2\!\cdot\!10^{-3}$ & $3.9\!\cdot\!10^{-2}$ & $5.0\!\cdot\!10^{-3}$ & $9.5\!\cdot\!10^{-4}$ & $40\times$ [21,135] & 1.18\\
$10^{-3}$ & $5.3\!\cdot\!10^{-2}$ & $1.9\!\cdot\!10^{-3}$ & $3.8\!\cdot\!10^{-3}$ & $1.2\!\cdot\!10^{-2}$ & $1.3\!\cdot\!10^{-2}$ & $1.7\!\cdot\!10^{-3}$ & $27\times$ [14,34] & 1.19\\
$3\!\cdot\!10^{-3}$ & $1.0\!\cdot\!10^{-1}$ & $4.6\!\cdot\!10^{-3}$ & $4.7\!\cdot\!10^{-3}$ & $3.8\!\cdot\!10^{-3}$ & $4.7\!\cdot\!10^{-3}$ & $3.6\!\cdot\!10^{-3}$ & $23\times$ [17,94] & 1.15\\
$10^{-2}$ & $6.2\!\cdot\!10^{-1}$ & $1.2\!\cdot\!10^{-2}$ & $1.2\!\cdot\!10^{-2}$ & $1.2\!\cdot\!10^{-2}$ & $1.2\!\cdot\!10^{-2}$ & $1.2\!\cdot\!10^{-2}$ & $47\times$ [24,71] & 1.00\\
\bottomrule
\end{tabular}
\end{table}

\textbf{Accuracy.} At the released grid Morozov reduces noise-time RMSE by
$23$--$47\times$ in the median (per-cell $13.5$--$135\times$) and tracks the
oracle within $1.00$--$1.19\times$ at every level, using the known noise level
but no test data (Table~\ref{tab:A}); at $16^2$ the noisy levels track to
$1.00$--$1.05\times$, and the single exception in either grid is that grid's
noise-free cell, at $1.71\times$ (Table~\ref{tab:A16},
App.~\ref{app:caseA}). Paired over the $25$ noisy $24^2$ cells, Morozov beats
the released weight by a median $-1.536$ dex on $\log_{10}$RMSE ($25/25$,
$p=5.96\!\cdot\!10^{-8}$, the attainable floor), ML-II by $-0.370$ ($20/25$,
$p=3.8\!\cdot\!10^{-5}$), GCV by $-0.751$ ($p=4.5\!\cdot\!10^{-4}$) and the
L-curve by $-0.203$ ($21/25$, $p=1.5\!\cdot\!10^{-5}$), and loses to the
oracle by $+0.056$ ($p=2.0\!\cdot\!10^{-4}$); a Friedman test over the five
deployable rules pooled across grids gives $\chi^2=124.5$,
$p=5.8\!\cdot\!10^{-26}$ (App.~\ref{app:caseA}). The $\sigma$-blind rules are
not substitutes: ML-II and GCV are pinned to the released weight at low noise
(ML-II in $7/25$ noisy runs, GCV in $9/25$) and release it only at high noise,
which their negative fitted noise exponents record (Fig.~\ref{fig:Aprofile},
App.~\ref{app:caseA}). The honest claim is not ``OR beats classical rules'' but
that the rule OR theory singles out -- the discrepancy principle -- is the one
that works, and wiring it into a modern GP--PDE stack is a one-line change worth
up to two orders of magnitude.

\textbf{Reproducibility.} Pooled over both grids ($n=10$ paired groups, the
only grouping whose attainable $p$-floor of $0.00195$ can clear $0.05$)
Morozov's median seed-CV of test RMSE is $0.102$ against $0.505$ (released,
$p=0.0020$, $10/10$ groups), $0.513$ (GCV), $0.570$ (L-curve) and $0.639$
(ML-II), all at $p=0.0039$ -- $4.9$--$6.3\times$ more reproducible; at $24^2$
alone Morozov's median seed-CV is $0.186$, the band $2.4$--$3.4\times$, and it
cannot reach significance at $n=5$, whose floor is $0.0625$ (Table~\ref{tab:Astab}, App.~\ref{app:caseA}). A
\emph{constant} weight is steadier still -- $\lambda_2=0.005$ has median seed-CV
$0.125$, a factor $1.48$ below Morozov -- but costs up to $3.03\times$ Morozov's
median RMSE, and neither gap is significant. Morozov remains the
lowest-dispersion deployable rule under four further dispersion measures at the
pooled grouping, though two of them fail at $24^2$ alone. Every dispersion
number here sits on a floor: at $\sigma=0$ the five ``noise seeds'' receive
bit-identical data with a fixed initialization seed, yet no two of the five RMSE
curves coincide, with seed-CV up to $0.110$ at $24^2$ and $2.102$ at $16^2$.
That $0.110$ is $59\%$ of Morozov's $0.186$ at the same grid, so the CVs are
upper bounds on seed variability, not estimates of it; normalizing each rule by
the floor at its own operating point leaves the ordering intact
(Table~\ref{tab:Afloor}, App.~\ref{app:caseA}).

\textbf{Portability.} Across a change of backend -- the \emph{same} $25$ noisy
$24^2$ cells at matched noise seed and level recomputed on different hardware,
so this is a replication of identical draws and not a deeper study -- only
$6.6\%$ of the $182$ matched (cell, $\lambda_2$) RMSE pairs agree to $1\%$, and
the rules separate by an order of magnitude: Morozov's selected model moves by
at most $1.36\times$ and the oracle's by $1.35\times$, against $12.3\times$
(released), $30.4\times$ (ML-II), $12.3\times$ (GCV) and $4.9\times$ (L-curve).
Individual entries of the stability table are \emph{not} portable; the ordering,
the ratio band and Morozov's own value are (Table~\ref{tab:Aport},
App.~\ref{app:caseA}).

\textbf{Certificates.} Coverage of the true solution is $1.0000$ at both
delivered noise levels and at all five rungs of the $\rho$-inflation ladder --
$0$ violations in $432$ evaluation points per level, a $95\%$ Clopper--Pearson
lower bound of $0.993$ on the pooled $0/432$ -- with the floating-point
allowance $\delta_{\mathrm{num}}$ never exercised on this run's grid and the
binding point still holding $31$--$33\%$ of the median certificate in reserve
(Table~\ref{tab:Acert}, Rem.~\ref{rem:slack}, App.~\ref{app:caseA}). The
certificate equals the radius
as a five-significant-digit \emph{invariant} and not as an identity: over all
$864$ points the ratio $\mathrm{cert}/E_\star$ has median $0.999989$, never
exceeds $1$, and is worst at $0.997273$, the shortfall being the certifier's
$97$-point $\mu$ grid rather than a failure of Thm.~\ref{thm:radius}c. At the
delivered $\rho=1.05\rho_{\mathrm{oc}}$ the $y$-conditional band is $104.6\times$
tighter than the global certificate at rel.\ $10^{-3}$; the $\sigma=0$ figure of
$3.10\times$ carries no information, because at $\eta=0$ the tightening is a
known function of the inflation alone and falls below $1$ for $m>\sqrt2$
(Table~\ref{tab:Aladder}, App.~\ref{app:caseA}). The Bayesian comparator is
two-sided: the
misfit-matched GP credible band is $3.3\times$ narrower at $\sigma=0$, but at
rel.\ $10^{-3}$ it undercovers its own posterior-mean error ($0.833$,
$120/144$) at $14.8\times$ the OR width, failing at exactly the same $24$ points
in all three seeds, every one on the two rings nearest the Dirichlet boundary.
Conformal calibration does not apply here -- collocation points are not
exchangeable draws -- which is precisely the regime where the worst-case ball is
the only guarantee available. Two scope limits: this block delivered two of
three planned noise levels, and with three seeds the smallest attainable
two-sided $p$ of any seed-level test is $0.25$, so every across-seed statement
here is descriptive; the conditional band's width$/$error premium is
$21.5\times$ in the median with a right tail reaching $3.8\!\cdot\!10^{3}$, so
only medians are quotable.

\section{Case B: Bayesian deep-kernel network (DAK)}\label{sec:caseB}

DAK \citep{DAK2025} composes a feature extractor, a variational linear
embedding and GP activations with a learned noise layer $\hat\sigma$.
\textbf{Protocol} (App.~\ref{app:exp}): five tabular targets
(\texttt{diabetes}, \texttt{concrete}, \texttt{energy}, \texttt{yacht},
\texttt{california}), each pinned by \texttt{data\_id} and passed through a
hard-failing target audit before training; two regimes, i.i.d.\ ($80/20$) and
covariate shift (a rank split along a random unit direction, $70/30$), plus one
legacy axis-mode anchor cell; seeds $\{0,1\}$; label noise
$\mathrm{rel}\in\{0,.25\}$ everywhere and $\{0,.1,.25,.5\}$ on
\texttt{diabetes}. Nine arms per cell: DAK as released and KL-repaired; split-CP
and normalized split-CP (scores $|r|/\hat\sigma(x)$) on a fit-split model the
network never calibrates on; a \emph{trivial} split-CP band around a constant
predictor, using no features and no training; the OR head (R1) on the
\emph{same} fit-split model and calibration indices, plus its no-floor and
pure-interval ablations; and the water-filling prior (R2) with the model's
\emph{own} $\hat\sigma$. Coverage is a mean over cells; width (within-cell
median half-width unless stated) and the Winkler interval score at $95\%$
\citep{GneitingRaftery2007} are median [IQR] over cells. Targets are
standardized, so a half-width of $1$ is one training standard deviation. Audit,
conformal levels and nine-arm ledger: Tables~\ref{tab:Baudit},
\ref{tab:Blevels}, \ref{tab:Barms} in App.~\ref{app:caseB}.

\begin{table}[t]\centering\footnotesize
\caption{Case B under covariate shift, per dataset: the ledger the pooled
numbers hide. Coverage, width and IS are \emph{means} over that dataset's
shift cells -- width the within-cell median half-width, IS the Winkler score
at $\alpha=0.05$, lower better -- because the per-point tail lives in the
mean. $R^2$ is the training-free RandomForest 5-fold audit statistic of
Table~\ref{tab:Baudit} (App.~\ref{app:caseB}), computed before any model is
fitted; $d_{\mathrm{M}}$ is the \emph{median} Mahalanobis shift severity; $n$ is
the number of shift cells the dataset contributes, and on \texttt{diabetes} the
two OR columns rest on $5$ of the $6$, the OR arms having refused on
\texttt{diabetes/shift/0/0.5}. Bold marks the lowest interval score \emph{among
the four arms shown}; on \texttt{energy} and \texttt{yacht} the full nine-arm
ledger contains arms that score lower still (Table~\ref{tab:Barms},
App.~\ref{app:caseB}). The
coverage the OR head wins over split-CP on all $5$ datasets is the only
universal win here; on
\texttt{california} its interval score is destroyed by a per-point tail
(Table~\ref{tab:Btail}) that costs \texttt{or1\_nf} almost as much.}
\label{tab:Bshift}
\setlength{\tabcolsep}{3pt}
\begin{tabular}{@{}l r r r ccc ccc ccc ccc@{}}
\toprule
& & & & \multicolumn{3}{c}{split-CP} & \multicolumn{3}{c}{triv (constant)}
& \multicolumn{3}{c}{OR head (R1)} & \multicolumn{3}{c}{OR, no floor}\\
\cmidrule(lr){5-7}\cmidrule(lr){8-10}\cmidrule(lr){11-13}\cmidrule(lr){14-16}
dataset & $R^2$ & $d_{\mathrm{M}}$ & $n$ & cov & wid & IS & cov & wid & IS & cov & wid & IS & cov & wid & IS\\
\midrule
\texttt{concrete} & 0.343 & 1.29 & 4 & 0.831 & 0.80 & \textbf{3.74} & 0.970 & 1.96 & 4.26 & 0.996 & 2.04 & 7.22 & 0.977 & 1.43 & 6.27\\
\texttt{diabetes} & 0.419 & 1.25 & 6 & 0.950 & 2.09 & 4.88 & 0.995 & 2.05 & \textbf{4.17} & 1.000 & 3.45 & 26.21 & 0.923 & 1.83 & 24.01\\
\texttt{california} & 0.740 & 1.39 & 3 & 0.962 & 1.35 & \textbf{3.35} & 0.969 & 2.26 & 4.96 & 1.000 & 3.29 & 111.0 & 0.954 & 1.55 & 108.2\\
\texttt{energy} & 0.965 & 3.04 & 3 & 0.784 & 0.33 & \textbf{2.05} & 0.986 & 1.98 & 4.03 & 1.000 & 1.41 & 3.44 & 1.000 & 1.29 & 3.20\\
\texttt{yacht} & 0.995 & 1.60 & 3 & 0.896 & 0.66 & 3.24 & 0.936 & 2.73 & 6.39 & 0.971 & 1.04 & 2.73 & 0.968 & 0.97 & \textbf{2.62}\\
\bottomrule
\end{tabular}
\end{table}
\begin{figure}[t]\centering
\includegraphics[width=\textwidth]{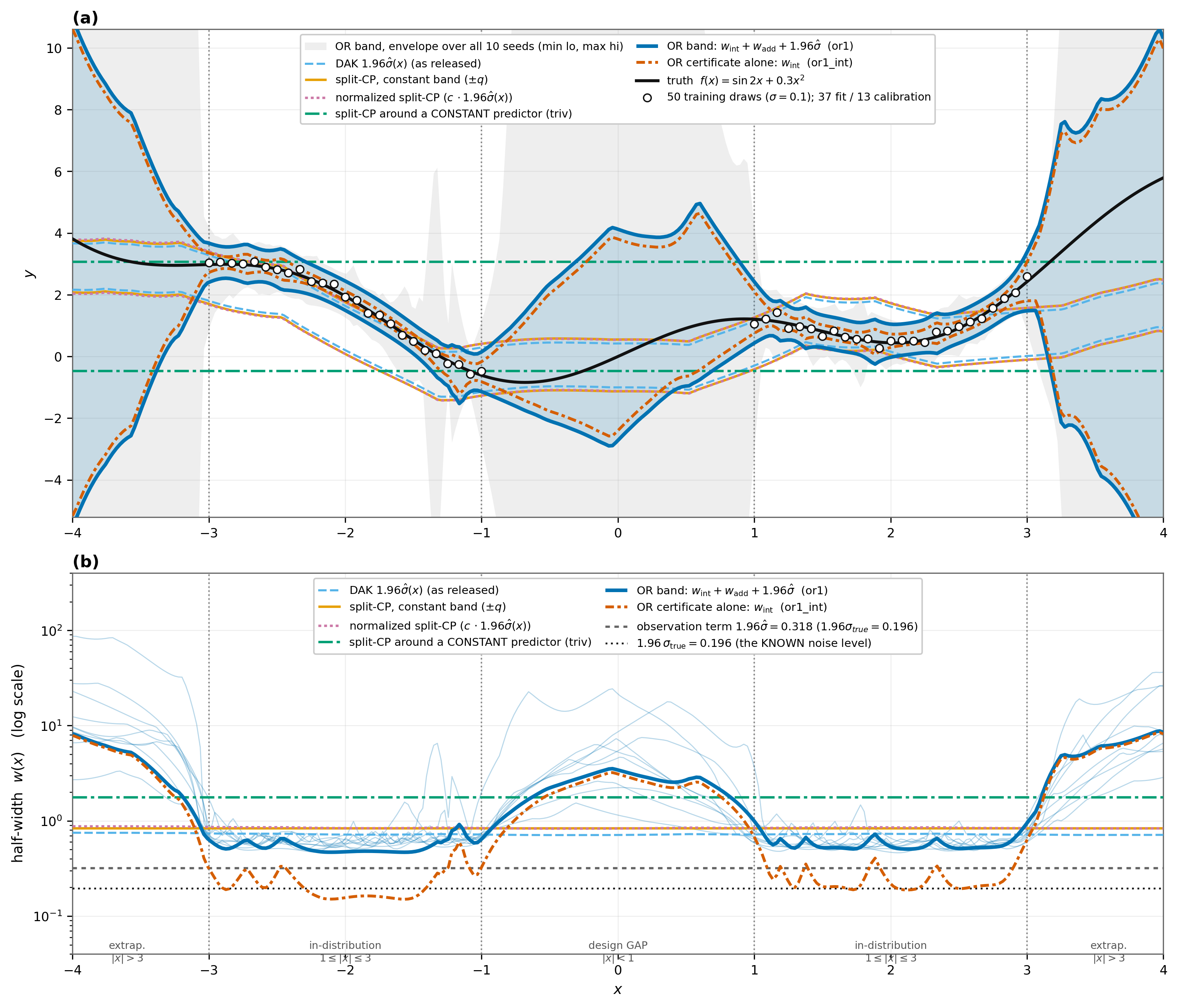}
\caption{Case B gap task, ten seeds. (a) the six bands on seed $7$ -- the median
seed of the ten on $\kappa_\star$, on $q_{cp}$ and on every region width, its
$q_{cp}=0.8395$ being the paper's single-seed $\pm0.84$ -- the OR band's
ten-seed envelope in grey, the vertical range truncated for legibility; the DAK
band, both conformal variants and the constant-predictor band are flat.
(b) half-width profiles on a log axis, all
ten seeds, every half-width in full: the OR band pinches on the data and
inflates through the gap and the
extrapolation, against the marked $1.96\hat\sigma$ and
$1.96\sigma_{\mathrm{true}}$. On the seed drawn the learner overestimates the
known noise by $1.62\times$ ($0.318$ against $0.196$), ten-seed median
$1.76\times$, which is where the in-distribution width comparison of
Table~\ref{tab:Bgapsurv} goes.}
\label{fig:B}
\end{figure}

\textbf{In-distribution, the concessions first.} In-distribution
split-conformal is the better band: it beats the OR head on interval score in
$18/19$ paired cells (median $-2.573$, bootstrap $95\%$ CI $[-3.36,-1.20]$,
$p=7.6\!\cdot\!10^{-6}$ at an attainable floor of $3.8\!\cdot\!10^{-6}$) and
is narrower in $19/19$ (median $-1.143$), while the OR head is valid ($1.000$
mean coverage) but wide (median half-width $2.06$ against $0.91$);
Fig.~\ref{fig:Bp} and Table~\ref{tab:Barms} in App.~\ref{app:caseB} give all
nine arms. The
water-filling prior yields no gain once its noise hint is the model's
\emph{own} $\hat\sigma$: on
\texttt{diabetes} its mean interval score is $6.95$ i.i.d.\ against $7.10$ and
$7.03$ for the two DAK arms (mean coverage $0.861$ and $0.873$ against nominal
$0.95$), and $8.13$ under shift against $8.02$ and
$6.77$, at coverage $0.685$ and $0.703$; on the median convention the
released-arm comparison flips, so the concession is that R2 buys nothing
detectable, not that it is uniformly worse. The conformal floor we bolt onto the
OR head is itself a net loss on a proper score: it buys $+0.010$ coverage for
$+0.96$ of median width and $+1.85$ of interval score i.i.d.\
($p=6.5\!\cdot\!10^{-4}$), and $+0.023$ for $+0.84$ and $+1.26$ under shift, so
we report the no-floor band beside the delivered one throughout
(Table~\ref{tab:Bfloor}, App.~\ref{app:caseB}).

\textbf{Under shift.} The one thing the OR head wins universally is coverage:
pooled it holds $0.9969$ $[0.9947,0.9984]$ of test points against $0.890$ for
split-CP and $0.891$ for normalized split-CP, and it beats split-CP on
coverage on $5/5$ datasets. It loses the proper score. Against split-CP it is
worse on interval score in $16$ of $18$ delivered shift cells (median
$+2.695$, $p=8.4\!\cdot\!10^{-4}$; blocked on the $10$ distinct splits,
$9/10$, $p=0.0098$); against normalized split-CP, $16/18$ (median $+2.719$,
$p=4.2\!\cdot\!10^{-4}$); and against a \emph{feature-free constant} band,
$12/18$ (median $+2.677$, $p=0.038$). Repeating it on the median-based
pointwise score, which is immune to the one catastrophic cell, does not rescue
it (better in $1/18$, $p=1.5\!\cdot\!10^{-5}$). The coverage it does win over
the constant band is worth $+0.0075$ in the median cell and costs $+1.374$ of
\emph{mean} half-width ($+0.366$ on medians, $p=0.52$). We therefore withdraw
``the OR shape covers everything at smaller in-distribution width''.

\textbf{The regime that pays is shift on a learnable target.} The per-dataset
ledger splits along a training-free audit statistic (Table~\ref{tab:Bshift}):
on the two targets a random forest recovers at CV $R^2\ge0.965$ the OR band is
competitive on the proper score, and on the three at $R^2\le0.74$ its mean
interval score is $2$--$33\times$ worse ($1.7$--$9.6\times$ on within-cell
medians). Even on the learnable pair the win is bounded: on \texttt{energy} the
OR band's $3.44$ and the no-floor band's $3.20$ both lose to split-CP's $2.05$
and normalized split-CP's $1.73$, so what it wins there is coverage, not score;
on \texttt{yacht} it beats every conformal arm ($2.73$ and $2.62$ against
$3.24$ and $6.39$) but the two DAK arms themselves score $2.38$ and $2.41$. The
same statistic predicts, at Spearman $\rho=-0.62$ ($p=4.5\!\cdot\!10^{-5}$ over
$37$ cells), which cells let the ball geometry rather than the conformal floor
carry the band. Holding the centre and the mean width fixed and asking whether
the OR geometry covers more than a flat band of that width would, the skill is
positive on $5/6$ \texttt{energy} and $6/6$ \texttt{yacht} cells against only
$1/12$ and $3/13$ elsewhere (Fisher $p=0.0039$ and $0.0031$), with a best case
of $+0.719$ coverage at $0.41\times$ the trivial band's width.

\textbf{The failure under shift is the centre, not the width.} Over the $20$
shift cells the median coverage is $0.921$ for split-CP, $1.000$ for the same
shift-blind centre given the constant band's wider quantile
($q_{\mathrm{triv}}=2.127$ against $q_{\mathrm{cp}}=1.159$), $0.830$ for a
constant centre with $q_{\mathrm{cp}}$, and $0.9925$ for the constant band
itself. No shape is required, and the OR head does not repair the centre either:
its RMSE under shift is worse than the KL-repaired posterior mean's ($4/18$,
$p=0.0034$) and no better than split-CP's ($7/18$, $p=0.18$). The smallest
constant inflation of split-CP matching the OR head's coverage is $1.68\times$
in the median, and there the shape-free band reaches OR coverage at $1/1.79$ of
its mean half-width, narrower in $16/18$ cells (App.~\ref{app:caseB}).

\textbf{Tail, refusals, defects, and scope.} The worst-case semantics survive
the repair as a per-\emph{point} tail: $95$ of $7129$ test points ($1.33\%$)
in $13$ of $37$ cells carry an OR interval above
$20\,\mathrm{sd}(y_{\mathrm{fit}})$, with a maximum of $61896$ on a
standardized target, $92$ of them under shift; $55$ delivered half-widths in
$16$ cells are guarded substitutions rather than tight duals (largest relative
primal--dual gap $0.55$); and on two cells the head \emph{refused} outright,
the noise budget implied by the network's own $\hat\sigma$ being smaller than
the best achievable fit-split misfit (margins $-0.487$, $-1.553$)
(Table~\ref{tab:Btail}, App.~\ref{app:caseB}). Tightening the bounds exposed
the defects: a feature-scale clamp that multiplies the within-cell mean half-width
by $235$ and the maximum by $2681$ while moving the median by only $1.02$, and
an all-rows delivery rule that is a \emph{validity} rather than a width defect,
taking the refusal rate from $3$--$4$ of $30$ cells to $18$--$19$ (McNemar
$p=6.1\!\cdot\!10^{-5}$). A separate 1D design-gap task (S6; ten seeds re-drawing
the noise and the $37/13$ fit/calibration split on a fixed design,
Fig.~\ref{fig:B}) probes the same head in the gap, in-distribution and in
extrapolation: coverage of the truth is $1.000$ in all three regions, at or
above nominal on $10/10$, $10/10$ and $8/10$ seeds, and no other arm reaches
nominal in extrapolation on any seed; but the $1.7\times$ in-distribution width
advantage reported for the single-run version does \emph{not} survive -- the
ratio is $1.21$ $[0.92,1.49]$ with the OR band narrower on only $6/10$ seeds
($p=0.16$) -- and what does survive at the attainable floor is $2.67\times$
against the OR certificate itself and $2.99\times$ against split-conformal
around a constant predictor, the only baseline that attains nominal
in-distribution ($10/10$, $p=0.002$). In the gap alone the interval-score
comparison is a tie ($5.61$ against $8.03$, $6/10$, $p=0.77$). Scope: the
session delivered $39$ of $73$ planned cells; the $19$ random-direction shift
cells rest on $10$ distinct splits, so cell-level $p$-values are optimistic and
we give the split-blocked version wherever it changes a conclusion; and the
shift is a weak test of blindness, because the rank split \emph{narrows} the
target distribution and the feature-free baseline's coverage therefore
\emph{rises} under it (median $+0.0216$, higher in $14/19$) while split-CP's
falls ($-0.0451$, $p=0.0046$). The axis-mode \texttt{shiftcol} cell is one cell
on which the OR arm refused and is not a third regime.

\section{Case C: Bayesian optimization (ALAS-BO)}\label{sec:caseC}

ALAS-BO \citep{ALAS2026} couples a learnable $\alpha$-stable mixture kernel with
UCB ($\beta{=}0.2$) or EI \citep{Srinivas2010,ChowdhuryGopalan2017}.
\textbf{Protocol} (App.~\ref{app:exp}): \texttt{botorch 0.18.1}, $Q{=}3$;
Branin-2D, Hartmann-6, Griewank-5D, Ackley-5D, Levy-10D; $167$ trajectories from
one session, all of length $19$, $6$ or $7$ seeds per (benchmark, arm), every
paired test pairing only on shared seeds. All $80$
(benchmark, arm, seed) trajectories this corpus shares with an earlier CPU
corpus are \emph{bitwise identical} at all $19$ iterations -- the one place in
this paper where a cross-machine reproduction is exact. Writing $m(x)$ for the
posterior mean ($\mu$ is reserved for the dual parameter), \textbf{OR-LCB} picks
$\argmin_x[m(x)-c_\star(x)]$ with $c_\star$ the information radius
($\eps=\rho_{\mathrm{oc}}$, $\eta=\hat\sigma\sqrt t$); the direct floor-law test
\textbf{$\kappa{\times}$floor} uses $\kappa c_\star(x)$, $\kappa\in\{2,5\}$.

\begin{figure}[t]\centering
\includegraphics[width=0.85\textwidth]{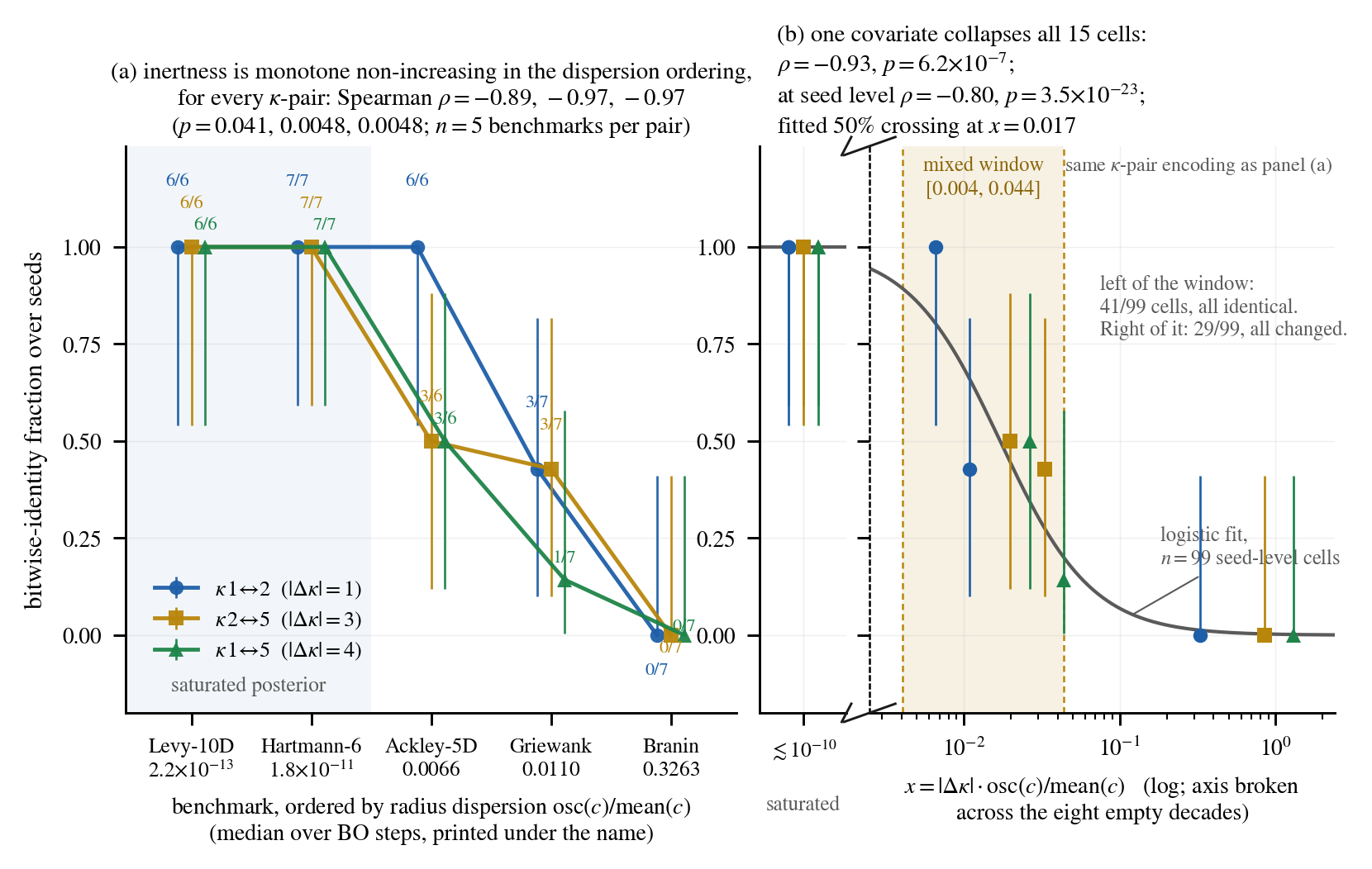}
\caption{$\kappa$-inertness is graded rather than binary; one point per
benchmark $\times$ $\kappa$-pair. (a) Ordering the
benchmarks by radius dispersion, the identity fraction is monotone
non-increasing for every $\kappa$ pair. (b) Both dependences -- on dispersion
and on $|\Delta\kappa|$ -- collapse onto the single dimensionless covariate
$x=|\Delta\kappa|\cdot\mathrm{osc}(\hat c)/\mathrm{mean}(\hat c)$, the axis
broken across the eight empty decades separating the saturated benchmarks from
the rest. Bars are Clopper--Pearson $95\%$ intervals. Two readings need care:
the counts drawn are regret-trajectory identity, marginally weaker than the
action-level counts of Table~\ref{tab:Ckappa} (App.~\ref{app:caseC}); and the
two left-most dispersion values, printed under the benchmark names inside the
shaded saturated band, are $O(\mu_{\mathrm{lo}})$ grid artifacts and not
measurements, shown only to place those benchmarks at the left end of the
ordering and never to be compared with each other (see text and
Rem.~\ref{rem:gridfloor}).}
\label{fig:Cmech}
\end{figure}

\textbf{Regret.} Paired against EI on shared seeds over $33$ complete blocks,
OR-LCB is worse by $+0.075$ dex in the pooled normalized $\log_{10}$ ratio
($p=0.0011$); EI is the stronger default, and a Friedman test separates the
arms ($\chi^2=19.33$, $p=6.8\!\cdot\!10^{-4}$, mean ranks EI $2.41$, UCB $2.64$,
$5{\times}$floor $3.05$, $\kappa{=}1$ $3.42$, $2{\times}$floor $3.49$)
(Table~\ref{tab:C}, App.~\ref{app:caseC}). A second ordering by AUC disagrees
with the final-regret
ranking on three of five benchmarks (App.~\ref{app:caseC}).

\textbf{The $\kappa$ ladder.} $\kappa{=}2$ is inert wherever $\kappa{=}1$ is
-- same candidate at every step on $6/6$ Levy-10D, $7/7$ Hartmann-6 and $6/6$
Ackley-5D seeds -- while $\kappa{=}5$ breaks away on $3$ of $6$ Ackley-5D and
$7$ of $7$ Griewank-5D seeds, and where it acts it helps, moving the medians
$7.92\to7.34$ and $12.40\to9.92$ without reaching significance at six or seven
seeds (Fig.~\ref{fig:Cmech}; exact action-level counts, dispersions and
escape steps in Table~\ref{tab:Ckappa}, App.~\ref{app:caseC}). The claim of an
earlier revision that ``$\kappa{=}5$ does not help'' rested on a two-seed
Griewank median and is withdrawn. Ordered by radius dispersion
$\mathrm{osc}(\hat c)/\mathrm{mean}(\hat c)$ -- Levy-10D and Hartmann-6 below the
certifier's $\mu$-grid floor, then $0.0066$ Ackley-5D, $0.0110$ Griewank-5D,
$0.3263$ Branin-2D -- the action-level inertness of the full ladder is
monotone non-increasing, and within every benchmark it is monotone
non-increasing in $|\Delta\kappa|$: the shape of a graded law rather than a
binary property of two benchmarks.

\emph{This is not a resolved five-level ordering.} Pairwise Fisher exact tests
with Holm correction over the ten benchmark pairs resolve exactly two groups,
$\{$Branin, Griewank$\}$ and $\{$Hartmann-6, Levy$\}$ (four comparisons at
Holm $p=0.006$--$0.037$); Ackley-5D is indistinguishable from both (Holm
$p\ge0.42$); and the two comparisons the word ``graded'' would most need --
Branin against Griewank at the bottom and Hartmann-6 against Levy at the top
-- are both $p=1.00$. Radius dispersion is moreover confounded with
\emph{dimension} across these five objectives (exact-permutation Spearman
against inertness: dispersion $-0.97$, $p=0.033$; dimension $+0.92$,
$p=0.067$), the one dimension-controlled comparison available (Ackley against
Griewank, same $d=5$, verified bitwise-identical initial designs) is Fisher
$p=0.070$, and we cannot separate the two explanations from this corpus. And
the ordering is a budget snapshot: recomputing inertness at a truncated budget
of six iterations gives Griewank $7/7$ and Ackley $6/6$ and destroys it
entirely. The two saturated benchmarks at the left of Fig.~\ref{fig:Cmech}(a)
sit below the certifier's
$\mu$-grid floor, so their dispersions are bounds that must not be compared
with each other, and we quote no dispersion below about $10^{-6}$
(Rem.~\ref{rem:gridfloor}, App.~\ref{app:proofs}).

\textbf{The proposition, tested at every step.} At every BO iteration of every
arm and seed we recorded the criterion of Prop.~\ref{prop:inv} alongside whether
the argmin actually coincided: $9{,}018$ records, $0$ counterexamples, $0$ of
$99$ at whole-trajectory resolution, and $0$ within every benchmark and every
$\kappa$ pair separately (Table~\ref{tab:Cinv}, App.~\ref{app:caseC}). It is
not a tie artifact -- the
minimum runner-up gap anywhere in the corpus is $1.02\!\cdot\!10^{-7}$ -- and not
vacuous: the largest margin ratio on a firing record is $0.998$, so the
criterion came within $0.2\%$ of binding and did not fail. What it is, is
conservative: it fires on $65.8\%$ of steps, covers $84.7\%$ of the steps on
which invariance in fact occurred, and above its threshold the argmin still
coincided on $1{,}076$ of $3{,}082$ steps ($34.9\%$). A usable rule follows --
when the criterion fires on at least $80\%$ of the $18$ steps the trajectory was
bitwise identical in $48/48$ cells, and when it fires on fewer than half, in
$0/28$ (App.~\ref{app:caseC}).

\textbf{What the inert end is, and is not.} It is where every acquisition is
inert. On Levy-10D five of six seeds produce one action path across all five
arms, and the UCB-versus-OR-LCB action-identity rate across the five benchmarks
($0/7$, $2/7$, $2/6$, $5/7$, $6/6$) reproduces the same ordering with no
$\kappa$ anywhere in it; the criterion is vacuous there by \emph{always} firing
($1620/1620$ Levy, $1863/1890$ Hartmann-6 records), supplying $3{,}483$ of the
$5{,}936$ firings and none of the information.

\section{Discussion and limitations}

\textbf{The synthesis} (Table~\ref{tab:regime}, App.~\ref{app:exp}). Without
exchangeable
calibration data the worst-case ball is the only guarantee available, and
Morozov, its calibration rule, is the accuracy winner, the reproducibility
winner across noise draws, and the only deployable rule whose choice survives a
change of machine. With such data, in-distribution, split-conformal is the
better band, in $18$ of $19$ paired cells, and the honest addition is that the
conformal floor we bolt onto the OR head is itself a net loss on a proper score,
so the OR head should be reported without it. Under covariate shift an earlier
revision's claim was too strong and we retract it: the OR band does keep
coverage where both conformal variants lose it, on every dataset we tried, but
it pays for that with width, and on a proper score it loses to split-conformal
on $16$ of $18$ cells and to a band built from no features at all on $12$ of
$18$; the shortfall is a failure of the model \emph{centre}, not of the constant
width. What decides between the two covariate-shift regimes is not the regime
label but the
target: a random forest run before any model is fitted separates the two
datasets on which the OR band recovers the coverage split-conformal loses from
the three on which its per-point tail costs $2$--$33\times$ more in the mean
than its coverage buys. ``Put OR where the guarantee is consumed'' therefore
refines to \emph{match the tool to the data regime and audit the regime first}
-- and the audit is cheap. For exploration the certified floor anchors validity
and Prop.~\ref{prop:inv} says when scaling it can do nothing, checked $9{,}018$
times without a counterexample while we measured how often it is silent when
nothing happens anyway.

\textbf{What reproducibility cost us, and bought us.} Two conclusions changed
because we measured the same computation twice. At $\sigma=0$ the Case-A solver
returns five different answers to one deterministic problem, putting a floor of
up to $0.110$ under every seed-CV we report; across a change of backend,
individual entries of that table move by factors $0.19$ to $1.49$ and the
low-noise gain moves from $26\times$ to $40\times$, and what survives is the
ordering, the ratio band and Morozov's own value. Case C, conversely, reproduces
\emph{bitwise} across machines on all $80$ shared trajectories.

\textbf{Limitations.} Case A rests on one PDE forcing and one solver
initialization seed, so nothing here claims instance-independence, and its
certificate block delivered two of three planned noise levels, so the
high-noise end of the certificate ladder is untested; with three seeds there,
no seed-level comparison can beat $p=0.25$. The $13.6$--$34.4\times$ Case-A
width$/$error premium is the price of norm-ball semantics, and its pointwise
distribution has a right tail reaching $3.8\!\cdot\!10^{3}$, so only medians
are quotable. The two radii that can anchor the conditional ladder differ by
$26\%$ at $\eta>0$ because they are the exact radii of two different
consistent sets, and the global certificate is consequently certified over a
$26\%$ smaller model ball than the conditional band; we present them as two
stages, not one. The Case-A cross-backend arm is a \emph{replication} of the
same noise draws on other hardware, not a deeper study. Case B delivered $39$
of $73$ planned cells, its $19$ shift cells rest on $10$ distinct splits, its
random-direction shift is a weak test of shift-blindness because it
\emph{narrows} the target distribution and so lifts the feature-free
baseline's coverage, its axis-mode cell is one refused cell and not a third
regime, and a per-point tail, $55$ guarded substitutions and two refusals
remain (Sec.~\ref{sec:caseB}). Truth-coverage under shift is empirical
by necessity (Thm.~\ref{thm:conformal} scope). Case-C conclusions are budget-
and benchmark-bound: at $23$ evaluations the flatness mechanism is exactly
what small-budget BO in large domains produces, two of the inertness cells
diverge only at $t=15$ and a $17$-iteration budget would move two further
Griewank cells, so a shorter budget would report a different table, and
recomputing at a six-iteration budget destroys the ordering entirely. The
ordering itself resolves only two groups under Holm-corrected pairwise tests,
with Ackley-5D indistinguishable from both; radius dispersion is confounded
with benchmark dimension across our five objectives, and the single
dimension-controlled comparison available does not clear $0.05$; the two fully
inert benchmarks are configurations in which no acquisition function acts at
all, so they cannot falsify anything about the OR radius specifically; and
Case C contains no within-session determinism control, so its bitwise-identity
results are conditional on the GP fit being reproducible across calls, which
we did not verify. The $Q{=}7$ Branin probe and the or\_full$/$or\_ei diagnostics come from an
earlier corpus and support no claim here.

\textbf{Open.} Shape-aware certified exploration, with the margin condition of
Prop.~\ref{prop:inv} as the thing a schedule must be designed to violate; a
deep-budget Case-C sweep, to show where the inertness grading converges rather
than where a $23$-evaluation snapshot leaves it; a dimension-controlled
benchmark family, to separate radius dispersion from dimension;
restricted-$\Gamma$ credible intervals where the minimax collapse fails;
conformal--OR hybrids with partial shift guarantees \citep{GibbsCandes2021}; and
a shift-aware conformal \emph{centre}, where our four-way counterfactual
suggests the real gap is.

\clearpage

{\small
\bibliographystyle{plainnat}
\bibliography{references}
}

\clearpage

\appendix

\section{Theory: deferred statements and full proofs}\label{app:proofs}

\subsection{Deferred statements and remarks}

\begin{table}[t]\centering\footnotesize
\caption{Working dictionary.}\label{tab:dict}
\begin{tabular}{@{}l>{\raggedright\arraybackslash}p{3.0in}@{}}
\toprule
Optimal Recovery & Bayesian learning\\
\midrule
model ball $\norm u\le\eps$; noise ball $\norm e\le\eta$ &
prior scale; Gaussian likelihood\\
regularization weight / nugget $\nu$ & prior-to-noise ratio
$\sigma^2/s^2$\\
power function (exact data) & GP posterior sd\\
radius of information $E_\star(x)$, Eq.~\eqref{eq:radius} &
nugget-optimized posterior variance (new, Thm.~\ref{thm:radius})\\
conditional Chebyshev interval $I_{\varphi_x}(y)$ &
total-ambiguity robust-Bayes credible interval (Thm.~\ref{thm:local})\\
Occam radius $\rho_{\mathrm{oc}}$; conformal inflation
$\kappa_\star\rho_{\mathrm{oc}}$ & degenerate / repaired data-driven prior
scale (Lem.~\ref{lem:occam}, Thm.~\ref{thm:conformal})\\
\bottomrule
\end{tabular}
\end{table}

\begin{table}[t]\centering\footnotesize
\caption{Provenance ledger for Sec.~\ref{sec:theory}.}\label{tab:novel}
\begin{tabular}{@{}>{\raggedright\arraybackslash}p{2.45in}>{\raggedright\arraybackslash}p{2.6in}@{}}
\toprule
Known & Here\\
\midrule
Two-ellipsoid value problem; linearity of optimal recovery; S-procedure
exactness \citep{MicchelliRivlin1977,MelkmanMicchelli1979,FoucartLiao2024,%
FoucartLiaoSproc}; optimal parameter within the regularization family
\citep{FoucartLiao2023}; Morozov's principle \citep{Morozov1966,%
EnglHankeNeubauer1996} &
Bayesian packaging \eqref{eq:radius} as nugget-optimized posterior
variance; the identity $V_{1/\mu}=\mathrm{pf}^2+\mathrm{wn}^2/\mu$ and the
6-line AM--GM attainment proof with the balance equation
$\mu_\star=\eps\,\mathrm{wn}/(\eta\,\mathrm{pf})$; per-functional delivered
certificate $=$ radius as an \emph{implementation invariant}\\
\addlinespace[2pt]
$\Gamma$-minimax collapses to minimax under total ambiguity
\citep{Berger1985,Vidakovic2000}; Chebyshev-center computation
\citep{FoucartLiaoSproc}; OUQ programs \citep{Owhadi2013} &
The per-functional \emph{credible-interval} reading with certified
$2$-parameter dual, exact-row elimination and pencil-fast evaluation;
positioning as a drop-in band for learned features\\
\addlinespace[2pt]
Split conformal prediction \citep{Vovk2005,Lei2018,GibbsCandes2021} &
Occam-degeneracy lemma (why data-driven radii collapse); conformal
calibration \emph{of the model-ball radius} via nested bands, with the
saturation-to-additive-floor fallback\\
\bottomrule
\end{tabular}
\end{table}

Part \emph{(c)} of Thm.~\ref{thm:local}, deferred from the body: the
half-width is exactly computable as
$S_+(\ell)=\min_{a,b\ge0} u_{a,b}^\top M_{a,b}^{-1}u_{a,b}+a\rho^2+b\eta^2
-b\norm y^2$, $M_{a,b}=aI+b\Phi^\top\Phi$, $u_{a,b}=\tfrac c2+b\Phi^\top y$
(S-procedure exactness \citep{FoucartLiaoSproc,PolikTerlaky2007}), certified
a posteriori by primal recovery, with exact rows eliminated first.

\begin{remark}[Numerical slack]\label{rem:slack}
Delivered certificates carry an explicit floating-point allowance
$\delta_{\mathrm{num}}=10\sqrt{\eps_{\mathrm m}}\,\hat\eps$ (an
eigendecomposition backward-error budget through the $\hat\eps$-scaled
quadratic forms, $\eps_{\mathrm m}$ machine epsilon). It is an insurance
policy, not a correction. On the $12\times12$ interior evaluation grid
used here it is $3.73\!\cdot\!10^{-6}$ at $\sigma=0$ ($0.024\%$ of the
median certificate) and $2.82\!\cdot\!10^{-6}$ at rel.\ $10^{-3}$, while
raw coverage is already $1.0000$ at both levels -- $0$ violations in
$432$ evaluation points each -- and the tightest point still holds
$33\%$ ($\sigma=0$) and $31\%$ (rel.\ $10^{-3}$) of the median
certificate in reserve. On a $300$-point grid that included points
essentially on top of collocation nodes the same construction did
produce raw violations of order $10^{-6}$, which is the regime the
allowance is sized for; on the grid of this run it is never exercised.
\end{remark}

\begin{remark}[Floor, not schedule]\label{rem:floor}
Any $\kappa E_\star$, $\kappa\ge1$, keeps worst-case validity, so
$E_\star$ is the certified \emph{floor} of exploration widths -- but
nothing makes $\kappa\cdot$floor a useful schedule: if $E_\star(\cdot)$ is
(near-)constant over the candidate set, $\argmin_x[m(x)-\kappa E_\star(x)]$
is $\kappa$-invariant. Sec.~\ref{sec:caseC} measures how often this
happens and finds it graded rather than binary: the full $\kappa$ ladder
leaves the action path unchanged on $6/6$ Levy-10D and $6/7$ Hartmann-6
seeds, on $3/6$ Ackley-5D seeds, and on no Griewank-5D or Branin-2D
seed, and inertness is graded in the size of the inflation too --
$\kappa{=}2$ is inert wherever $\kappa{=}1$ is, while $\kappa{=}5$ is
not. The inert end is also, on this corpus, the end at which no
acquisition function acts at all, so it is evidence about the regime and
not about the OR radius specifically.
\end{remark}

\begin{lemma}[Radius flatness far from data]\label{lem:flat}
Let $k(x,x')=k_0\,\varphi(\norm{x-x'}/\ell)$ with $0\le\varphi\le1$
nonincreasing, and let
$\hat c(x)=\min_{\nu\in[\nu_-,\nu_+]}\sqrt{V_\nu(x)(\eps^2+\eta^2/\nu)}$
be the grid-computed radius from $t$ point evaluations. Then for every $x$
with $\min_i\norm{x-x_i}\ge D$,
\[
k_0\ \ge\ V_\nu(x)\ \ge\ k_0-\tfrac{t\,k_0^2}{\nu}\,\varphi(D/\ell)^2
\qquad\text{for all }\nu,
\]
hence over the far set $\{x:\min_i\norm{x-x_i}\ge D\}$,
\[
\mathrm{osc}(\hat c)\le
\sqrt{\eps^2+\eta^2/\nu_-}\,
\big(\sqrt{k_0}-\sqrt{(k_0-t k_0^2\varphi(D/\ell)^2/\nu_-)_+}\big),
\]
which vanishes as $\varphi(D/\ell)\to0$. (Proof: App.~\ref{app:proofs};
$b(x)_i\le k_0\varphi(D/\ell)$ and $(G+\nu I)^{-1}\preceq I/\nu$.)
\end{lemma}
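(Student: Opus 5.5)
The plan is to prove the two-sided variance bound first and then push it through the square root and the grid minimum. Here $t$ point evaluations means $\lambda_i=\varphi_{x_i}$, so $G_{ij}=k(x_i,x_j)$, $b(x)_i=k(x_i,x)=k_0\,\varphi(\norm{x-x_i}/\ell)$, and $k(x,x)=k_0\varphi(0)$. I will read the normalization as $k(x,x)=k_0$, i.e. $\varphi(0)=1$; if only $\varphi\le1$ is assumed, $k_0$ in the upper bound is still an upper bound, and the oscillation argument below needs constant diagonal.

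\textbf{Step 1 (variance sandwich).} The upper bound $V_\nu(x)\le k(x,x)=k_0$ holds because $G\succeq0$ and $\nu>0$ give $(G+\nu I)^{-1}\succ0$. For the lower bound, use $G+\nu I\succeq\nu I$, hence $(G+\nu I)^{-1}\preceq I/\nu$. This gives $b^\top(G+\nu I)^{-1}b\le\norm{b}_2^2/\nu$. On the far set each entry satisfies $0\le b(x)_i\le k_0\varphi(D/\ell)$, by monotonicity of $\varphi$ and $\norm{x-x_i}\ge D$. So $\norm b_2^2\le t\,k_0^2\varphi(D/\ell)^2$, which is the stated lower bound for every $\nu$.

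\textbf{Step 2 (oscillation of $\hat c$).} Set $g(\nu)=\eps^2+\eta^2/\nu$, which is decreasing in $\nu$, and $\beta=t k_0^2\varphi(D/\ell)^2$. For every far $x$ and every $\nu\in[\nu_-,\nu_+]$, Step 1 gives
\[
\sqrt{g(\nu)}\sqrt{(k_0-\beta/\nu)_+}\ \le\ \sqrt{V_\nu(x)g(\nu)}\ \le\ \sqrt{g(\nu)}\sqrt{k_0}.
\]
Taking $\min_\nu$ yields $L\le\hat c(x)\le U$, where $L=\min_\nu\sqrt{g(\nu)(k_0-\beta/\nu)_+}$ and $U=\min_\nu\sqrt{g(\nu)k_0}$. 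Neither $L$ nor $U$ depends on $x$, so $\mathrm{osc}(\hat c)\le U-L$. A minimum of differences bound is not available in general, so I would instead fix $\nu_L$ as a minimizer for $L$ and write
\[
U-L\le\sqrt{g(\nu_L)}\Big(\sqrt{k_0}-\sqrt{(k_0-\beta/\nu_L)_+}\Big).
\]
Then I bound both factors by their worst case over $\nu_L\in[\nu_-,\nu_+]$. The factor $\sqrt{g}$ is maximal at $\nu_-$. The bracket is also maximal at $\nu_-$, since $\beta/\nu$ decreases in $\nu$ and $s\mapsto\sqrt{k_0}-\sqrt{(k_0-s)_+}$ is nondecreasing. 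Both factors are nonnegative, so the product is at most its value at $\nu_-$, which is the claimed bound. Vanishing as $\varphi(D/\ell)\to0$ follows from continuity of $\sqrt{(\cdot)_+}$ at $k_0$.

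\textbf{Main obstacle.} The only delicate point is Step 2's passage from pointwise bounds to an oscillation bound through a minimum over $\nu$. That is why I bound $U-L$ at the minimizer of the lower envelope rather than trying to compare the two minimizers. The other concern is the implicit $\varphi(0)=1$ normalization. Everything else is the Loewner bound $(G+\nu I)^{-1}\preceq I/\nu$ and entrywise monotonicity.
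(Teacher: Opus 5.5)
Your proof is correct, and its skeleton is the paper's. The entrywise bound $b(x)_i\le k_0\varphi(D/\ell)$ together with $(G+\nu I)^{-1}\preceq I/\nu$ gives the variance sandwich. An $x$-independent two-sided envelope $L(\nu)\le\sqrt{V_\nu(x)(\eps^2+\eta^2/\nu)}\le U(\nu)$ is then evaluated at a single $\nu$ and pushed to $\nu_-$ by monotonicity.

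The one substantive difference is which $\nu$ you evaluate at, and your choice is the one that makes the step valid. The paper's sketch evaluates at the minimizer of the upper envelope. That minimizer is $\nu_+$, because $U(\nu)=\sqrt{k_0(\eps^2+\eta^2/\nu)}$ is decreasing. But $\min U-\min L\ge U(\nu_+)-L(\nu_+)$, so that evaluation bounds the gap from the wrong side. It closes the argument only when $L$ is also minimized at $\nu_+$. This holds for $\varphi(D/\ell)$ small enough, but fails for instance when $\nu_-\le t\,k_0\varphi(D/\ell)^2$, where $L(\nu_-)=0$. Evaluating at a minimizer $\nu_L$ of $L$, as you do, gives $\min U-\min L\le U(\nu_L)-L(\nu_L)$ in every regime. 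Since both routes end at $\nu_-$, this choice costs nothing.

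Similarly, your observation that $s\mapsto\sqrt{k_0}-\sqrt{(k_0-s)_+}$ is nondecreasing replaces the paper's appeal to $\sqrt a-\sqrt{a-b}\le b/(2\sqrt{a-b})$. That inequality is undefined exactly when the positive part clips.

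One correction to your side remark: the normalization $\varphi(0)=1$ is needed for the first display itself. If $\varphi(0)<1$ and $\varphi(D/\ell)=0$, then $b(x)=0$ on the far set, so $V_\nu(x)=k_0\varphi(0)<k_0$, contradicting the claimed lower bound. It is not needed for a constant diagonal, which a radial kernel has automatically.
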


Together, Prop.~\ref{prop:inv} and Lem.~\ref{lem:flat} predict that
$\kappa$-scaling is inert precisely when all candidates are far from the
data on the kernel's length scale -- the small-budget large-domain
regime. Measured at the final acquisition of Sec.~\ref{sec:caseC} ($22$
observations in hand, selecting the $23$rd), the median over seeds of
$\mathrm{osc}(\hat c)/\mathrm{mean}(\hat c)$ is $0.364$ $[0.349,0.403]$
on Branin-2D, where $\kappa$ changes every trajectory; $0.045$
$[0.032,0.107]$ on Griewank-5D and $0.015$ $[0.008,0.031]$ on Ackley-5D,
where it changes some; and below the resolution of the certifier's $\mu$
grid on Hartmann-6 and Levy-10D, where the whole $\kappa$ ladder produces
bitwise-identical regret trajectories (Rem.~\ref{rem:gridfloor}; that is
regret-trajectory identity -- on Hartmann-6 the \emph{action} path
diverges on one of seven seeds, Table~\ref{tab:Ckappa}). The
median normalized candidate-to-data distances are
$0.17/0.52/0.51/0.63/0.88$ in the order Branin, Griewank, Ackley,
Hartmann-6, Levy. Sec.~\ref{sec:caseC} turns this ordering into a
measured contrast, tests Prop.~\ref{prop:inv} directly at every step, and
states what the ordering is and is not confounded with.

\begin{remark}[The grid floor, and why we do not quote a dispersion
below it]\label{rem:gridfloor}
The certifier evaluates
$\hat c(x)=\min_{\mu\in[\mu_{\mathrm{lo}},\mu_{\mathrm{hi}}]}
\sqrt{V_{1/\mu}(x)(\eps^2+\mu\eta^2)}$ on a finite log grid. When no
candidate is informative -- $k(x,X)\approx0$ -- the objective is
increasing in $\mu$, the minimizer is pinned at the grid's lower end
$\mu_{\mathrm{lo}}$, and there $\hat c(x)\to\hat\eps\sqrt{k(x,x)}$, which
for a stationary base kernel is constant in $x$. The residual
oscillation recorded in that regime is the $O(\mu_{\mathrm{lo}})$
correction to that limit: re-running the same certifier code we find
$\mathrm{osc}(\hat c)/\mathrm{mean}(\hat c)$ proportional to
$\mu_{\mathrm{lo}}$ over six decades and exactly zero for
$\mu_{\mathrm{lo}}\le10^{-16}$, with the optimization unchanged. Values
of order $10^{-11}$ and $10^{-13}$ are therefore properties of the
solver's grid, not of the objective; we report them as ``below the grid
floor'', never as numbers to be compared with each other, and we quote no
dispersion below about $10^{-6}$. Note also that Lem.~\ref{lem:flat}
states its bound at the endpoint $\nu_-=1/\mu_{\mathrm{hi}}$, which is
vacuous at the grid actually used; the operative endpoint for the
measured oscillation is $\nu_+=1/\mu_{\mathrm{lo}}$, and the measured
value is proportional to $1/\nu_+$. The lemma is not contradicted, but it
is not what makes the measured number small.
\end{remark}

\begin{proposition}[Truth coverage from observation coverage]
\label{prop:truth}
In the setting of Thm.~\ref{thm:conformal}, if additionally
$y_{\mathrm{new}}=f(x_{\mathrm{new}})+e_{\mathrm{new}}$ with noise
independent of the band, then for any $z>0$,
$\Pr\big[|f(x_{\mathrm{new}})-\mathrm{mid}_{\mathrm{new}}|\le
\mathrm{half}_{\mathrm{new}}(\kappa_\star)+\hat q+z\big]\ge
1-\tfrac1{n_c+1}-\Pr[|e_{\mathrm{new}}|>z]$.
\emph{Proof.} $|f-\mathrm{mid}|\le|y-\mathrm{mid}|+|e|$; union bound over
the coverage event and $\{|e|\le z\}$.\hfill$\square$
\end{proposition}

\subsection{Proofs}

\textbf{Theorem~\ref{thm:radius}.} \emph{Identities:} with
$w=w_\mu(x)$: $\mathrm{pf}^2+\mu^{-1}\mathrm{wn}^2
=k(x,x)-2w^\top b+w^\top(G+\mu^{-1}I)w=k(x,x)-w^\top b=V_{1/\mu}$;
Woodbury gives $\varphi_x^\top(I+\mu\Lam^*\Lam)^{-1}\varphi_x=V_{1/\mu}$.
\emph{(a) lower bound:} $\pm h$ with $\norm h\le\eps,\norm{\Lam h}\le\eta$
are both consistent with $y=0$, so any $A$ errs $\ge h(x)$ on one:
$E_\star\ge\Omega=\sup h(x)$. \emph{Dual value:} the sup is linear over
two centered ellipsoids (Slater at $0$):
$\Omega=\min_{\lambda_1,\lambda_2\ge0}\tfrac14\varphi_x^\top(\lambda_1I+
\lambda_2\Lam^*\Lam)^{-1}\varphi_x+\lambda_1\eps^2+\lambda_2\eta^2
=\min_\mu\sqrt{V_{1/\mu}(\eps^2+\mu\eta^2)}$ after
$(\lambda_1,\lambda_2)=(s,s\mu)$ and closed-form $s$.
\emph{(b):} $\ell(u)-A_\mu(y)=\ip{\varphi_x-\Lam^*w}{u}-w^\top e$; both
Cauchy--Schwarz suprema are attained simultaneously ($u,e$ independent).
\emph{(c):} $V_{1/\mu}(\eps^2+\mu\eta^2)=\eps^2\mathrm{pf}^2
+\eta^2\mathrm{wn}^2+\mu\eta^2\mathrm{pf}^2+\mu^{-1}\eps^2\mathrm{wn}^2
\ge(\eps\,\mathrm{pf}+\eta\,\mathrm{wn})^2$ by AM--GM, equality iff
$\mu=\eps\,\mathrm{wn}/(\eta\,\mathrm{pf})$; sandwiching
$\Omega\ge\min_\mu[\eps\,\mathrm{pf}_\mu+\eta\,\mathrm{wn}_\mu]\ge
E_\star=\Omega$ forces equality. \hfill$\square$

\textbf{Theorem~\ref{thm:local}.} (a) $\ell(C_y)$ is a compact interval;
the scalar Chebyshev center is the midpoint. (b) Bayes actions under
priors on $C_y$ sweep exactly $\ell(C_y)$ (Dirac priors), so
$\sup_\pi\E_\pi(\ell-a)^2=\max_{t\in\ell(C_y)}(t-a)^2$, minimized at the
midpoint -- the total-ambiguity collapse of \citet{Berger1985}. (c)
Lagrangian duality for $\sup\{c^\top\theta:\theta^\top\theta\le\rho^2,
\norm{\Phi\theta-y}^2\le\eta^2\}$ gives
$\theta^\star=M_{a,b}^{-1}u_{a,b}$ and the displayed dual; strong duality
by interiority; two-quadratic exactness per
\citet{FoucartLiaoSproc,PolikTerlaky2007}. Exact rows are eliminated by
$\theta=\theta_p+N\zeta$. (d) The optimal global rule satisfies
$|\ell(u)-A_\star(y)|\le E_\star$ for all $u\in C_y$, so
$\ell(C_y)\subseteq[A_\star(y)\pm E_\star]$. \hfill$\square$

\textbf{Lemma~\ref{lem:occam}.} Strict convexity of $\norm\cdot^2$ on the
closed convex tube gives a unique minimizer; any element of
$C_y(\rho_{\mathrm{oc}},\eta)$ is a minimizer. For
$\rho>\rho_{\mathrm{oc}}$ a neighborhood of $u_{\mathrm{oc}}$ inside the
tube is feasible. \hfill$\square$

\textbf{Lemma~\ref{lem:flat}.} Each entry $b(x)_i=k(x,x_i)\le
k_0\varphi(D/\ell)$ for $x$ in the far set, so $\norm{b(x)}^2\le t
k_0^2\varphi(D/\ell)^2$ and $b(x)^\top(G+\nu
I)^{-1}b(x)\le\norm{b(x)}^2/\nu$; the oscillation bound follows by
evaluating the two-sided envelope of $\hat c$ at the shared minimizer of
the upper envelope and using $\sqrt{a}-\sqrt{a-b}\le b/(2\sqrt{a-b})$
monotonicity in $\nu\ge\nu_-$. Prop.~\ref{prop:truth} is proved in its
statement. \hfill$\square$

\textbf{Theorem~\ref{thm:conformal}.} Nestedness of
$C_{y_F}(\kappa\rho^F_{\mathrm{oc}})$ in $\kappa$ makes bands nested, so
$y_i\in\mathrm{band}_i(\kappa)\iff s_i\le\kappa$; scores are exchangeable
given $F$, hence $\Pr[s_{\mathrm{new}}>\max_C s_i]\le\tfrac1{n_c+1}$. The
additive variant is split conformal \citep{Vovk2005,Lei2018} on residual
scores; it applies verbatim when the $\kappa$-grid saturates.
\hfill$\square$

\section{Algorithms and complexity}\label{app:algos}

\emph{(1) DiscrepancyCalibrate}: verify misfit-monotonicity endpoints;
bisect $\log\lambda$ (60 steps) to $\norm{\Lam u_\lambda-y}=\eta$; return
$u_\lambda,\rho_{\mathrm{oc}}$. \emph{(2) TightGlobalCertify}: one
eigendecomposition of $G$; per $\mu$ on a 97-point log grid,
$\mathrm{pf},\mathrm{wn}$ spectrally ($O(n)$/point);
$\mathrm{cert}=\min_\mu[\eps\,\mathrm{pf}+\eta\,\mathrm{wn}]
+\delta_{\mathrm{num}}$ (Rem.~\ref{rem:slack}). \emph{(3) LocalInterval}:
eliminate exact rows (SVD); per point, diagonalize the pencil
$(C^\top C,B_{\mathrm{red}})$; minimize the dual over $\log(a,b)$ via
$9{\times}9$ grid pre-search $+$ L-BFGS with analytic gradients. The
returned half-width is the dual value, so weak duality makes it an upper
bound on the true supremum by construction, never optimistic. The
feature-space instantiation additionally returns a projected-primal point,
and the module self-test checks the relative primal--dual gap at tolerance
$10^{-6}$ and passes. The kernel-pencil path used for Case A logs no gap,
so we quote no magnitude there; the feature-space path used for Case B
does log one, and $55$ of its $7129$ delivered test points, in $16$ of
$37$ cells, exceeded the tolerance -- largest recorded relative gap
$0.55$ -- and were handed to \emph{(5)} below. \emph{(4) ConformalRadius}:
$\kappa$-grid $\{1.1,1.3,1.7,2.5,4,6,10,20,40\}$;
$\kappa_\star=$ first $\kappa$ covering all calibration responses;
\emph{always} add the residual floor $\hat q$, which alone guarantees the
achieved level if the grid saturates (saturation is recorded and
reported: $12/12$ \texttt{diabetes} cells in this run; $0/10$ gap
seeds). \emph{(5)
GuardedDelivery}: flag non-finite or
$|\mathrm{mid}-\mathrm{ridge}|>\mathrm{half}+5\max(1,\norm y_\infty)$,
and also flag a relative primal--dual gap above $10^{-6}$; on a flagged
point deliver the global certificate together with the estimate that
certificate belongs to, never the conditional dual value. Costs (single
vCPU, $n\le672$, reduced dim $\le577$): ms per global certificate,
$0.1$--$0.15$\,s per conditional interval.

\section{Experimental details}\label{app:exp}

\begin{table}[t]\centering\footnotesize
\caption{The design rule in one table: the regime label alone is not enough, and
the cheap audit that decides between the two shift rows costs one random forest
and no model fit. Every float the Evidence column cites lives in an appendix
except Tab.~\ref{tab:A}, \ref{tab:Bshift} and Fig.~\ref{fig:B},
\ref{fig:Cmech} (Case~A: App.~\ref{app:caseA}; Case~B: App.~\ref{app:caseB};
Case~C: App.~\ref{app:caseC}).}\label{tab:regime}
\setlength{\tabcolsep}{5pt}
\begin{tabular}{@{}p{1.80in}p{1.85in}p{1.50in}@{}}
\toprule
Data regime & Recommended tool & Evidence\\
\midrule
no exchangeable calibration data & OR ball $+$ Morozov & Sec.~\ref{sec:caseA},
Tab.~\ref{tab:A}, \ref{tab:Astab}, \ref{tab:Aport}\\
exchangeable, in-distribution & split-CP (constant or normalized); the OR
head is valid but wider and scores worse & Tab.~\ref{tab:Barms},
Fig.~\ref{fig:Bp}\\
covariate shift, \emph{learnable} target (RF CV $R^2\gtrsim0.95$) & OR
shape for coverage, and drop the conformal floor, which is a net loss on
interval score; on a proper score a conformal constant can still win, as
split-CP does on \texttt{energy} & Tab.~\ref{tab:Bshift}, \ref{tab:Bgap},
Fig.~\ref{fig:B}\\
covariate shift, poorly learnable target & a wider conformal
\emph{constant}; the OR band's per-point tail costs more than its
coverage buys & Tab.~\ref{tab:Bshift}, Fig.~\ref{fig:Btriv}\\
exploration bonus & floor for validity; shape, not scale -- and check
Prop.~\ref{prop:inv} before scaling & Prop.~\ref{prop:inv},
Fig.~\ref{fig:Cmech}, Tab.~\ref{tab:Ckappa}, \ref{tab:Cinv}\\
any of the above, before trusting a dispersion number & measure the
run-to-run floor first & Tab.~\ref{tab:Afloor}, \ref{tab:Aport}\\
\bottomrule
\end{tabular}
\end{table}

\textbf{Provenance.} All results in Sec.~\ref{sec:caseA}--\ref{sec:caseC}
come from one notebook executed as two sessions whose code cells are
byte-identical apart from four run switches: session 1 ran the invariant
suite, Case C and Case A; session 2 ran Case B. Both used the same
production configuration with zero recorded deviations, which we
confirmed independently from the arm records themselves -- all $39$
Case-B OR entries carry fit-split delivery, refusal on an infeasible
tube, and between $9$ and $23$ dropped dead feature columns, never a
clamp. Total measured compute: $23{,}506$\,s ($6.53$\,h) over $323$ timed
units, no unit retried and none failed. That total counts the ten
gap-task units once per session, because both sessions ran them; charging
them once gives $23{,}402$\,s ($6.50$\,h) over $313$ units. The one block both sessions ran,
the 1D gap task, is reproducible to the bit across the two machines: the
ten per-point archives are byte-identical, and the two result files
differ only in two process-cumulative solver counters. Where this run
supersedes an earlier corpus we say so in the text; the two places where
the two corpora appear side by side are Table~\ref{tab:Aport} and
Table~\ref{tab:Bgapsurv}, and both are labelled as cross-corpus
comparisons. Every other number we take from the earlier corpus is
marked as such at the point of use; we do not give a count, because the
category has two kinds of member and only one of them is a headline.
Some are ``before'' values inside an explicit withdrawal -- the
$13/38$ pre-v5 blow-up census, the $1.08/0.49/2.08$ single-seed gap
widths, the two-seed Griewank median of $14.85$, and the $4.40$ Griewank
regret of an intentionally loose fixed-nugget bound -- and each is read
only against the v5 value beside it. Two support no claim at all: the
$Q{=}7$ Branin fidelity probe (2 seeds) and the or\_full and
or\_ei diagnostic medians ($0.36/1.57/43.6$ on
Branin$/$Hartmann-6$/$Griewank for or\_ei; or\_full worst among plotted
OR variants everywhere).

\textbf{Case A.} TGPS at $24^2/30^2/60^2$ (collocation$/$inducing$/$test)
and at $16^2/22^2/60^2$, 120 epochs, paper hyperparameters; noise seeds
$1$--$5$ at every level including level $0$, which is five re-runs of one
deterministic computation and not five noise draws; ALS initialization
seed fixed at $0$ in all $60$ cells, so the seed-CV measures sensitivity
to the noise draw plus solver non-determinism and never to the
initialization; one PDE forcing only, so no Case-A conclusion here is a
claim about instance-independence; $\eta=\sigma\sqrt{N_c}$,
$\mathrm{std}(f)=678.4431$ at $24^2$ and $692.0062$ at $16^2$; rule
evaluation: Morozov and oracle from run outputs; ML-II and GCV from each
run's final linearization at the misfit-matched nugget (self-consistent,
no cross-run reference), with ML-II minimizing the negative log marginal
likelihood; L-curve from the discrete $(\log\mathrm{misfit},
\log\norm u)$ corner over the grid. Conditional certificates at $24^2$
only: bordered Occam $\rho$, inflation $1.05$; sensitivity over
$1.01$--$2.0$ at three seeds gives coverage $1.000$ at all $30$ (seed,
rung) cells, $0$ refused rungs, width $\le2.40\times$ and width$/$error
$13.6$--$34.4\times$, and at $\eta=0$ the median-width ratio between two
inflations matches the parameter-free constant
$\sqrt{(m_2^2-1)/(m_1^2-1)}$ to $4.1\!\cdot\!10^{-15}$ relative on all
$12$ consecutive pairs and to $5.4\!\cdot\!10^{-15}$ on all $4320$
pointwise pair tests. The certificate block was planned at three noise
levels and delivered two; the session ended inside its loop.

\textbf{Case B.} Five datasets pinned by \texttt{data\_id}, seeds
$\{0,1\}$, $39$ of $73$ planned cells delivered; $80/20$ train$/$test
i.i.d.\ and $70/30$ under shift; head$/$CP split $25\%$ of train, giving
$n_c$ from $53$ (\texttt{yacht} shift) to $300$ (\texttt{california}
i.i.d.); the recorded conformal level is $n_c/(n_c{+}1)$ and the achieved
level is the order statistic the code takes, $(k{+}1)/(n_c{+}1)$ with
$k=\min(n_c{-}1,\lceil0.95(n_c{+}1)\rceil{-}1)$ -- the two are not equal
and the difference is a quantile convention, not a defect
(Table~\ref{tab:Blevels}). Water-filling hint $=$ model's $\hat\sigma$ at
epoch 5; interval score at $\alpha=0.05$. The $2\times2$ ablation is
$30$ shift cells with four head variants built on one trained model per
cell, so it is fully paired and isolates the head. The gap task is ten
seeds, $37$ fit $/$ $13$ calibration on a fixed design, achieved level
$13/14=0.9286$ on $10/10$. That block ran in both sessions two hours
apart under the same driver hash: all ten seeds, all $10\times300$
per-point arrays and every recorded scalar are bit-identical between
them, and the only differing fields are two session-cumulative solver
counters offset by the work the longer session had already done.

\textbf{Case C.} $167$ trajectories from one session; $6$--$7$ seeds per
(benchmark, arm); shared $2500$-point Sobol candidates; $5{+}18$
evaluations; warm-started fits; \texttt{botorch 0.18.1}, $Q{=}3$. The
per-iteration diagnostic series records, at every step of every arm and
seed, $\mathrm{osc}(c)$, $\mathrm{mean}(c)$, the runner-up gap at each
$\kappa$, the argmin index at each $\kappa$, and both the predicted and
the observed invariance flags for all three $\kappa$ pairs; that file is
what Table~\ref{tab:Cinv} is computed from, and the action-level
inertness of Table~\ref{tab:Ckappa} is computed from the argmin indices
rather than from the regret trajectories. Runtimes: $\approx36$\,s per
Case-A \emph{cell} of seven solver runs, i.e.\ $\approx5$\,s per run over
the $420$ runs behind Table~\ref{tab:A} and Table~\ref{tab:A16} (per-cell
median $36.4$\,s, block total $2200.7$\,s); a median of $34.8$\,s and a mean of
$44.6$\,s per Case-B cell over the $39$ timed cells (block total
$1739.2$\,s); and a median of $93.4$\,s per Case-C method-seed.

\clearpage

\section{Case A: full results (TGPS)}\label{app:caseA}

This appendix carries the Case-A protocol in full, the six tables and ten
figures the body defers, and their prose, unchanged.

\subsection{Parameter-choice rules on both collocation grids}

TGPS \citep{TGPS2026} solves $-\Delta u+u^3=f$ on $[0,1]^2$ (Dirichlet) by
a rank-10 tensor GP with sequential linearization; the release fixes the
data-fit weight $\lambda_2=1.49\times10^4$, tuned on noise-free data with
test-RMSE epoch selection. \textbf{Protocol:} five noise seeds
per level, and \emph{all} parameter-choice rules select from the same
shared $\lambda_2$ grid $\{1.49\!\cdot\!10^4,500,50,5,0.5,0.05,0.005\}$:
the released value; Morozov (misfit closest to $\eta=\sigma\sqrt{N_c}$;
noise-level-aware); ML-II and GCV \citep{GolubHeathWahba1979} evaluated on
each run's own final linearization at the misfit-matched nugget
($\sigma$-blind); the L-curve corner \citep{Hansen1992}; and a test-access
oracle on the same grid. Two collocation grids are swept at the same five noise
seeds and the same $120$ epochs: $24^2$ collocation / $30^2$ inducing / $60^2$ test
(the released system's own configuration) and $16^2/22^2/60^2$. That is
$1\times2\times5\times6=60$ cells, $420$ solver runs, no degenerate cell, and a
median of $36.4$\,s per cell. The second grid is not an optional extra: within one
grid the paired unit of the stability test below is the (grid, noise level) group,
so $n=5$ and the smallest attainable two-sided Wilcoxon $p$ is $0.0625$ -- no
single-grid version of that test can reach $0.05$ however large the effect. Two
grids give $n=10$ and a floor of $0.00195$.

\textbf{Results} (Table~\ref{tab:A}, Fig.~\ref{fig:Aprofile}). At the
released system's own grid Morozov reduces noise-time RMSE by
$23$--$47\times$ in the median (per-cell range $13.5$--$135\times$; at
least $100\times$ in $1$ of $25$ cells) relative to the released weight,
and tracks the oracle within $1.00$--$1.19\times$ at every level -- with
no test access, but \emph{with} the known noise level, exactly the
information the released systems' own experiments assume. Every paired
comparison is decisive at $n=25$ noisy cells, where the attainable
minimum two-sided Wilcoxon $p$ is $5.96\!\cdot\!10^{-8}$: on
$\log_{10}$RMSE, Morozov beats the released weight by a median
$-1.536$ dex ($25/25$ cells, $p=5.96\!\cdot\!10^{-8}$, the floor),
ML-II by $-0.370$ ($20/25$, $p=3.8\!\cdot\!10^{-5}$), GCV by $-0.751$
($17$ wins, $3$ losses, $5$ exact ties, $p=4.5\!\cdot\!10^{-4}$) and the
L-curve by $-0.203$ ($21/25$, $p=1.5\!\cdot\!10^{-5}$), and loses to the
test-access oracle by $+0.056$ ($p=2.0\!\cdot\!10^{-4}$;
Fig.~\ref{fig:Aineff}). A Friedman test
over the five deployable rules gives $\chi^2=55.14$, $p=3.0\!\cdot\!10^{-11}$
at $24^2$ and $\chi^2=124.5$, $p=5.8\!\cdot\!10^{-26}$ pooled over both
grids, with Morozov's mean rank $1.58$ $[1.22,2.00]$ and $1.37$
$[1.18,1.60]$. The $\sigma$-blind rules are not substitutes: ML-II and
GCV stick to the released (overfitting) weight at low noise and only
converge to Morozov at the highest levels, and the L-curve is unreliable
throughout, including at $\sigma=0$, where it is the only rule that does
\emph{not} return the released weight and pays a median factor $3.5$ for
it. The honest claim is therefore not ``OR beats classical rules'' but
``the classical rule that OR theory singles out -- the discrepancy
principle -- is the one that works, and wiring it into a modern GP--PDE
stack is a one-line change worth up to two orders of magnitude.''

The noise-scaling exponents make the distinction quantitative. Fitting
$\log_{10}(\text{median RMSE})$ on $\log_{10}(\text{rel.\ noise})$ over
the five noisy levels at $24^2$, Morozov and the oracle share a clean
positive slope -- $\alpha=0.601$ (se $0.066$, $r^2=0.965$) and
$\alpha=0.602$ (se $0.083$, $r^2=0.946$) -- while ML-II and GCV have
\emph{negative} slopes with poor fit ($-0.216$, $r^2=0.227$ and $-0.359$,
$r^2=0.505$): their median RMSE is non-monotone in $\sigma$ because they
are pinned to the released weight at low noise and release it only at
high noise. Each fit has five points, so the load-bearing statement is
the contrast, not the third digit of $\alpha$.

\textbf{The mechanism is one grid step} (Fig.~\ref{fig:Alam2}). Morozov's
median $\lambda_2$ path at $24^2$ runs
$1.49\!\cdot\!10^{4}\!\to\!5\!\to\!0.5\!\to\!0.05\!\to\!0.005\!\to\!0.005$
against the oracle's
$1.49\!\cdot\!10^{4}\!\to\!50\!\to\!5\!\to\!0.5\!\to\!0.05\!\to\!0.005$.
Recall $\lambda_2$ is the \emph{data-fit} weight, so a smaller
$\lambda_2$ is more regularization. Morozov tracks the noise with the
right slope but sits at least one grid step -- one decade, except across
the first pair, where the step is $29.8\times$ -- below the oracle in
$18/25$ noisy cells -- exactly one step in $14$ and two steps in $4$ --
matches it exactly in $7/25$, and is never above it
(median signed offset $+1$ step, IQR $[0,+1]$, Wilcoxon against zero
offset $p=7.9\!\cdot\!10^{-5}$; at $16^2$ the split is $12$ cells at one
step, $4$ at two and $1$ at three, so $17/25$ at $p=1.6\!\cdot\!10^{-4}$).
The residual $1.00$--$1.19\times$ RMSE premium in the last
column of Table~\ref{tab:A} \emph{is} that offset: it is the
discrepancy principle's classical mild over-regularization, not a
failure to sense the noise level. Morozov cannot beat the oracle by
construction -- it equals it exactly in $7/25$ cells and is strictly
worse otherwise.

ML-II selected the released
weight in $7/25$ noisy runs, GCV in $9/25$ (criterion flat at low noise); the
L-curve never did, but its corner is erratic and is wrong even at
$\sigma=0$.

\begin{table}[t]\centering\footnotesize
\caption{Case A at $16^2$ collocation, same protocol and the same five noise
seeds (median test RMSE over 5 seeds). The coarse grid is
discretization-limited rather than noise-limited: the oracle RMSE barely
moves with $\sigma$ (exponent $0.055$) and the whole
$\lambda_2\le5$ half of the grid is nearly flat, which is why every ratio
against Morozov is larger here. The $\sigma=0$ row is the median of five
re-runs; at $16^2$ the released weight diverges on one of them
(RMSE $2.07$), which is why the noise-free gain is not $1$.}
\label{tab:A16}
\setlength{\tabcolsep}{4pt}
\begin{tabular}{@{}lcccccccc@{}}
\toprule
rel.\ noise & released & Morozov & ML-II & GCV & L-curve & oracle &
gain & M/o\\
\midrule
0 & $2.8\!\cdot\!10^{-2}$ & $2.8\!\cdot\!10^{-2}$ & $2.8\!\cdot\!10^{-2}$ & $2.8\!\cdot\!10^{-2}$ & $1.7\!\cdot\!10^{-2}$ & $1.6\!\cdot\!10^{-2}$ & $1.0\times$ [1,97] & 1.71\\
$10^{-4}$ & $1.8\!\cdot\!10^{-1}$ & $1.6\!\cdot\!10^{-2}$ & $1.6\!\cdot\!10^{-1}$ & $1.6\!\cdot\!10^{-1}$ & $2.0\!\cdot\!10^{-2}$ & $1.6\!\cdot\!10^{-2}$ & $11\times$ [7,18] & 1.00\\
$3\!\cdot\!10^{-4}$ & $1.1\!\cdot\!10^{-1}$ & $1.7\!\cdot\!10^{-2}$ & $1.1\!\cdot\!10^{-1}$ & $1.1\!\cdot\!10^{-1}$ & $3.7\!\cdot\!10^{-2}$ & $1.6\!\cdot\!10^{-2}$ & $6.7\times$ [5,14] & 1.04\\
$10^{-3}$ & $4.1\!\cdot\!10^{-1}$ & $1.7\!\cdot\!10^{-2}$ & $2.6\!\cdot\!10^{-1}$ & $2.6\!\cdot\!10^{-1}$ & $1.7\!\cdot\!10^{-2}$ & $1.6\!\cdot\!10^{-2}$ & $23\times$ [16,70] & 1.05\\
$3\!\cdot\!10^{-3}$ & $5.6\!\cdot\!10^{-1}$ & $1.8\!\cdot\!10^{-2}$ & $4.2\!\cdot\!10^{-2}$ & $5.6\!\cdot\!10^{-1}$ & $4.2\!\cdot\!10^{-2}$ & $1.7\!\cdot\!10^{-2}$ & $31\times$ [18,51] & 1.03\\
$10^{-2}$ & $1.0$ & $2.2\!\cdot\!10^{-2}$ & $4.5\!\cdot\!10^{-2}$ & $1.0$ & $1.3\!\cdot\!10^{-1}$ & $2.2\!\cdot\!10^{-2}$ & $42\times$ [37,57] & 1.00\\
\bottomrule
\end{tabular}
\end{table}

\begin{figure}[t]\centering
\includegraphics[width=\textwidth]{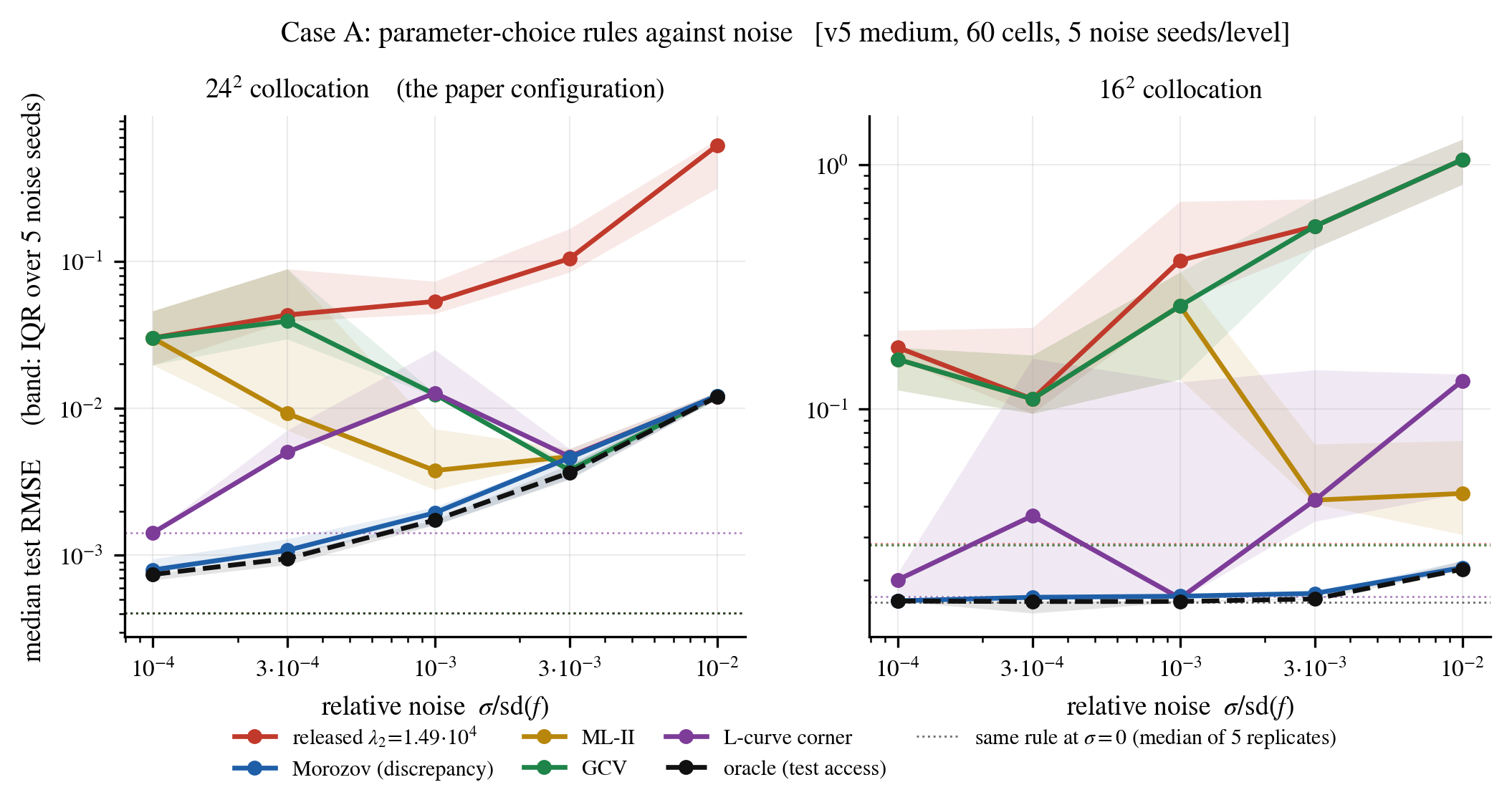}
\caption{Case A parameter-choice rules, both collocation grids. Median
test RMSE against relative noise for all six rules with the interquartile
band over the five noise seeds, log--log, and dotted horizontals marking
each rule's own $\sigma=0$ median. ML-II and GCV are pinned to the
released weight at low noise and release it only at high noise, which is
the non-monotonicity their negative fitted exponents record; Morozov and
the oracle share a clean positive slope.}
\label{fig:Aprofile}
\end{figure}

\begin{figure}[t]\centering
\includegraphics[width=\textwidth]{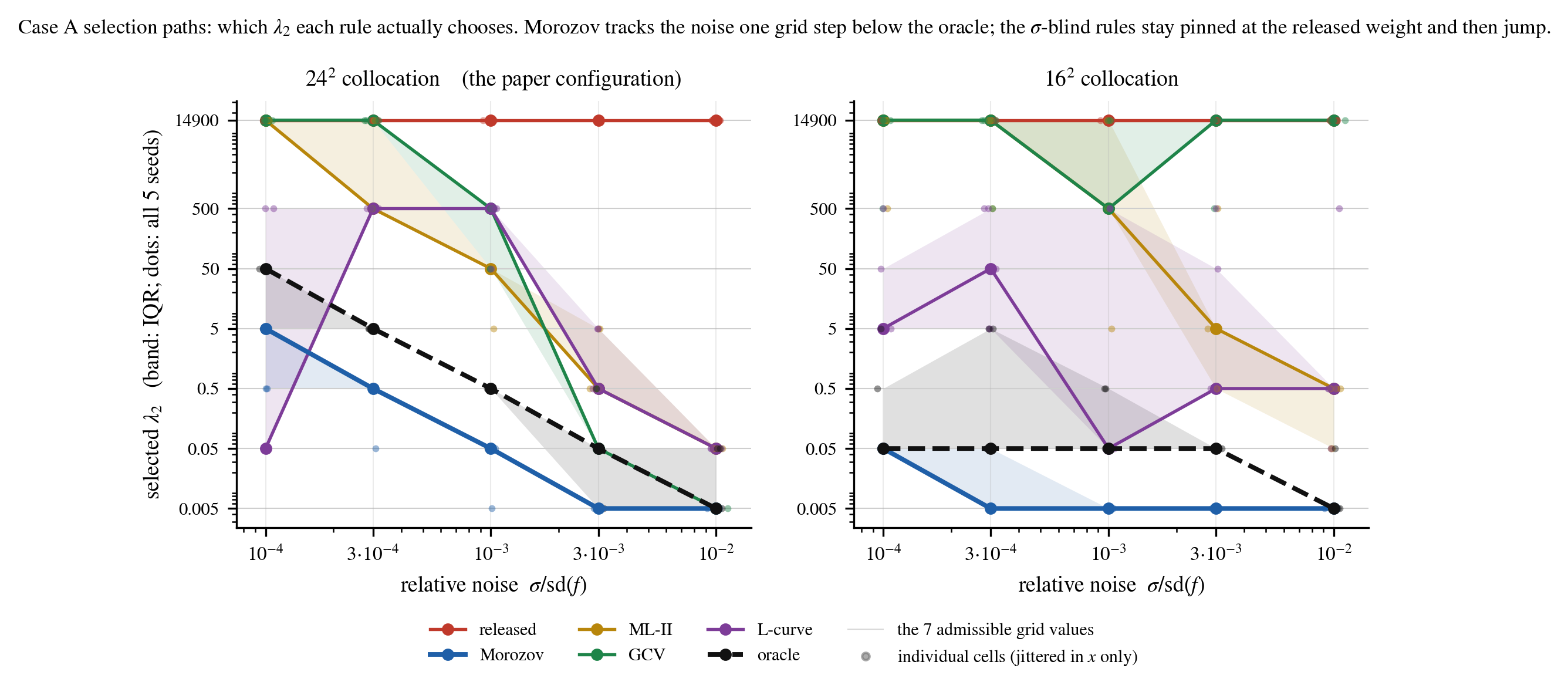}
\caption{Selected data-fit weight against noise level, median with
interquartile band and all five individual cells, log--log; the seven
admissible grid values are drawn as faint horizontals so the offset
between Morozov and the oracle can be counted directly. Morozov sits at
least one grid step below the oracle in $18/25$ noisy cells at $24^2$
(exactly one step in $14$, two in $4$), matches it in $7/25$, and is
never above it.}
\label{fig:Alam2}
\end{figure}

\begin{figure}[t]\centering
\includegraphics[width=\textwidth]{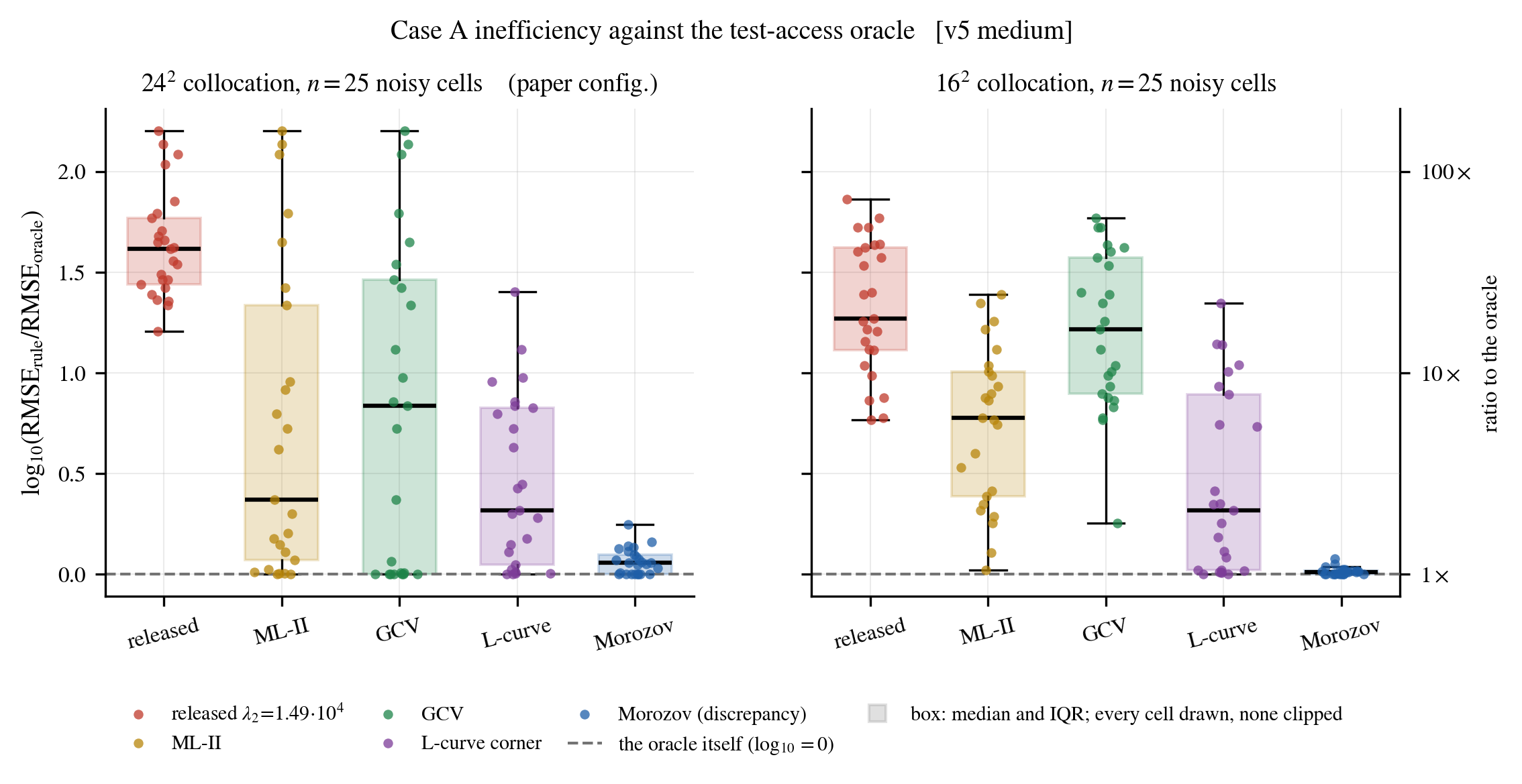}
\caption{$\log_{10}(\text{rule}/\text{oracle})$ per grid, with every one
of the $25$ cells per rule drawn as a jittered point -- no outlier hidden
or clipped -- and a secondary axis converting to a ratio. Median ratios
at $24^2$: released $41.2\times$, GCV $6.89\times$, ML-II $2.35\times$,
L-curve $2.08\times$, Morozov $1.14\times$.}
\label{fig:Aineff}
\end{figure}

\subsection{Run-to-run stability and the non-determinism floor}

\textbf{Run-to-run stability} (Table~\ref{tab:Astab},
Fig.~\ref{fig:Astab}). Accuracy is not the only thing a selection rule
owes a practitioner: rerun on a fresh noise draw, it should return a
comparable model. At the released system's own $24^2$ grid Morozov's
median seed-CV of test RMSE is $0.186$, against $0.447$ (L-curve),
$0.576$ (GCV) and $0.624$ (released weight and ML-II alike) --
$2.4$--$3.4\times$ more reproducible -- and it is strictly lowest among
the five deployable rules at four of the five noisy levels (at rel.\
$10^{-2}$ every rule has converged and Morozov sits $3.3\%$ above the
best). One grid gives only $n=5$ paired groups, whose attainable minimum
two-sided Wilcoxon $p$ is $0.0625$, so that comparison cannot reach
significance however large the effect; this is why the single-grid
version of this table, which is what an earlier revision reported, could
only be read as an effect size. Adding the $16^2$ sweep gives $n=10$
groups, a floor of $0.00195$, and the result then clears $0.05$ against
every alternative: median seed-CV $0.102$ (Morozov) against $0.505$
(released, $p=0.0020$, $10/10$ groups), $0.513$ (GCV, $p=0.0039$),
$0.570$ (L-curve, $p=0.0039$) and $0.639$ (ML-II, $p=0.0039$), i.e.\
$4.9$--$6.3\times$. This is the first properly powered version of the
claim. The two groupings disagree on the size of the band
($2.4$--$3.4\times$ against $4.9$--$6.3\times$), which is itself an
honest measure of how coarse a five-seed CV is; we quote the $24^2$
band in the text and the pooled band wherever significance is asserted.
The $16^2$ ratios ($22$--$55\times$) are an upper bound and we do not
quote them as a headline: that grid is discretization-limited, its
$\lambda_2\le5$ half is nearly flat (RMSE $\max/\min$ over that sub-grid
has median $1.64$ against $2.29$ at $24^2$), so any rule landing at
small $\lambda_2$ there has a near-degenerate CV denominator.

The test-access oracle's median seed-CV is $0.164$ at $24^2$ -- $12\%$
\emph{below} Morozov -- and $0.106$ pooled, $3\%$ above; the paired
difference against Morozov is not significant where it can be tested
($p=0.13$, $n=10$). The honest reading is that per-seed selection
\emph{with} test access buys no reliable reproducibility over the
discrepancy principle in either direction, not that Morozov matches it
to within $1\%$. We also do not read the oracle as a floor on
dispersion -- it minimizes per-seed RMSE, not spread, and a
\emph{constant} weight does better: the best fixed point of the grid at
$24^2$ is now $\lambda_2=0.005$ with median seed-CV $0.125$, a factor
$1.48$ below Morozov. That concession is weaker than it looks. It costs
up to $3.03\times$ Morozov's median RMSE (at rel.\ $10^{-4}$), so it
does not satisfy an accuracy qualifier of $1.7\times$; the next weight
up, $\lambda_2=0.05$, has seed-CV $0.150$ (factor $1.24$) at RMSE within
$1.77\times$; and neither gap is significant ($p=0.25$ at $n=5$,
$p=0.0625$ at $n=10$ with five wins and five ties). Part of the spread
that survives under Morozov is therefore selection slack rather than
solver seed sensitivity -- Morozov's own $\lambda_2$ pick varies across
seeds at rel.\ $10^{-4}$, $3\!\cdot\!10^{-4}$ and $10^{-3}$ and is
unanimous at the two highest levels -- so the reading is that Morozov
buys most of the available reproducibility without needing the
noise-blind rules' luck, not that no weight is steadier. Morozov remains
the lowest-dispersion deployable rule under IQR$/$median, MAD$/$median,
$\max/\min$ and the standard deviation of $\log_{10}$RMSE at the pooled
grouping (strictly lowest in $6$, $7$, $9$ and $8$ of the $10$ groups,
and lowest by median under all five measures). At $24^2$ alone two of
those four fail outright -- on IQR$/$median the medians are GCV $0.134$,
L-curve $0.149$, Morozov $0.254$, and on MAD$/$median the L-curve is
$0.064$ against Morozov's $0.066$ -- so that robustness statement
belongs to the pooled grouping and we scope it there.

\textbf{A floor on any such measurement} (Table~\ref{tab:Afloor}). At
$\sigma=0$ the harness draws no noise and the ALS initialization seed is
fixed at $0$, so the five level-$0$ cells at each grid are five
executions of one computation. They are all distinct: zero bitwise
identical pairs among the five seven-point RMSE curves and among the
five misfit curves, at either grid. The resulting seed-CV runs from
$0.031$ ($\lambda_2=50$) to $0.110$ (the released weight) at $24^2$ and
from $0.003$ to $2.102$ at $16^2$, and at $16^2$ the released-weight
RMSE across the five spans $1.5\!\cdot\!10^{-2}$ to $2.07$, a factor
$139$. That single divergent re-run is why the noise-free gain in
Table~\ref{tab:A16} is not exactly $1$ and why the coarse grid's
$\sigma=0$ Morozov$/$oracle entry there is $1.71$, the one cell of either
grid outside the $1.00$--$1.19\times$ band. The released weight's pure non-determinism CV, $0.110$, is
$59\%$ of Morozov's entire measured noisy seed-CV. Normalizing each
rule's Table~\ref{tab:Astab} value by the floor at its own median
operating point leaves the ordering intact (Morozov $5.16\times$ the
floor, released $5.70$, L-curve $7.02$, GCV $7.66$, ML-II $9.81$) but
shows the absolute CVs are floor-inflated: they are upper bounds on seed
variability, not estimates of it. We report the measurement and not a
diagnosis -- non-deterministic GPU reductions, autotuning and
iteration-count differences under the stop criterion are all consistent
with these data and this corpus cannot separate them. The floor itself
rests on five repetitions at each of seven weights: enough to establish
that it is nonzero and $\lambda_2$-dependent, not enough to pin its
value.

\begin{table}[t]\centering\footnotesize
\caption{Case A run-to-run stability: seed coefficient of variation (CV) of
test RMSE, per rule. Convention: within a (grid, noise level) group, the
sample standard deviation over the five noise seeds (ddof $=1$) divided by
their mean; the \emph{median} rows are medians over groups and the
\emph{ratio} rows divide by Morozov's. Lower is more reproducible; bold
marks the lowest value among the five deployable rules. The oracle column
reads the test set: it is a reference, not a competitor. Five seeds make
each CV a coarse dispersion estimate, so the claim rests on the ordering and
the ratios, not on the third digit. The $\sigma=0$ level is excluded from
the CV because it measures solver non-determinism rather than noise
sensitivity; it is reported separately in Table~\ref{tab:Afloor}.}
\label{tab:Astab}
\setlength{\tabcolsep}{5pt}
\begin{tabular}{@{}lcccccc@{}}
\toprule
rel.\ noise & released & Morozov & ML-II & GCV & L-curve & oracle\\
\midrule
\multicolumn{7}{@{}l}{\emph{$24^2$ collocation (the released system's own grid), $n=5$ groups}}\\
$10^{-4}$ & 0.624 & \textbf{0.124} & 0.624 & 0.624 & 0.534 & 0.132\\
$3\!\cdot\!10^{-4}$ & 0.668 & \textbf{0.150} & 1.223 & 0.882 & 0.447 & 0.164\\
$10^{-3}$ & 0.427 & \textbf{0.242} & 0.923 & 0.576 & 0.607 & 0.129\\
$3\!\cdot\!10^{-3}$ & 0.889 & \textbf{0.192} & 0.200 & 0.199 & 0.200 & 0.207\\
$10^{-2}$ & 0.531 & 0.186 & \textbf{0.180} & 0.186 & \textbf{0.180} & 0.186\\
median & 0.624 & \textbf{0.186} & 0.624 & 0.576 & 0.447 & 0.164\\
ratio to Morozov & $3.36$ & $1.00$ & $3.36$ & $3.10$ & $2.40$ & $0.88$\\
\midrule
\multicolumn{7}{@{}l}{\emph{$16^2$ collocation, $n=5$ groups}}\\
median & 0.379 & \textbf{0.017} & 0.654 & 0.507 & 0.933 & 0.040\\
ratio to Morozov & $22.5$ & $1.00$ & $38.8$ & $30.0$ & $55.3$ & $2.38$\\
\midrule
\multicolumn{7}{@{}l}{\emph{both grids pooled, $n=10$ groups -- the only grouping at which the test can reach $p<0.05$}}\\
median & 0.505 & \textbf{0.102} & 0.639 & 0.513 & 0.570 & 0.106\\
ratio to Morozov & $4.94$ & $1.00$ & $6.26$ & $5.02$ & $5.58$ & $1.03$\\
Wilcoxon $p$ vs Morozov & $0.0020$ & -- & $0.0039$ & $0.0039$ & $0.0039$ & $0.13$\\
groups Morozov lower & $10/10$ & -- & $9/10$ & $9/10$ & $9/10$ & $8/10$\\
\bottomrule
\end{tabular}
\end{table}

\begin{table}[t]\centering\footnotesize
\caption{The non-determinism floor. At $\sigma=0$ the five ``noise seeds''
receive bit-identical data and the ALS initialization seed is fixed at $0$,
so the five runs are repetitions of one computation; they nevertheless
disagree. Seed-CV of test RMSE at each fixed $\lambda_2$, $\sigma=0$, five
repetitions. This is a floor under every entry of Table~\ref{tab:Astab},
it is strongly $\lambda_2$-dependent, and it is largest exactly at the
released weight, the worst-conditioned point of the grid. The last row normalizes each rule's
median noisy seed-CV at $24^2$ by the floor at that rule's own median operating
point: the ordering of Table~\ref{tab:Astab} survives, its absolute magnitudes do
not. No two of the five seven-point RMSE curves coincide at either grid.}
\label{tab:Afloor}
\setlength{\tabcolsep}{5pt}
\begin{tabular}{@{}lccccccc@{}}
\toprule
$\lambda_2$ & $1.49\!\cdot\!10^{4}$ & $500$ & $50$ & $5$ & $0.5$ & $0.05$ & $0.005$\\
\midrule
$24^2$ & 0.110 & 0.075 & 0.031 & 0.042 & 0.033 & 0.064 & 0.036\\
$16^2$ & 2.102 & 1.089 & 0.192 & 0.021 & 0.006 & 0.003 & 0.006\\
\midrule
& released & Morozov & ML-II & GCV & L-curve & oracle &\\
Tab.~\ref{tab:Astab} median $/$ own floor & $5.70$ & $5.16$ & $9.81$ & $7.66$ & $7.02$ & $4.55$ &\\
\bottomrule
\end{tabular}
\end{table}

\begin{figure}[t]\centering
\includegraphics[width=\textwidth]{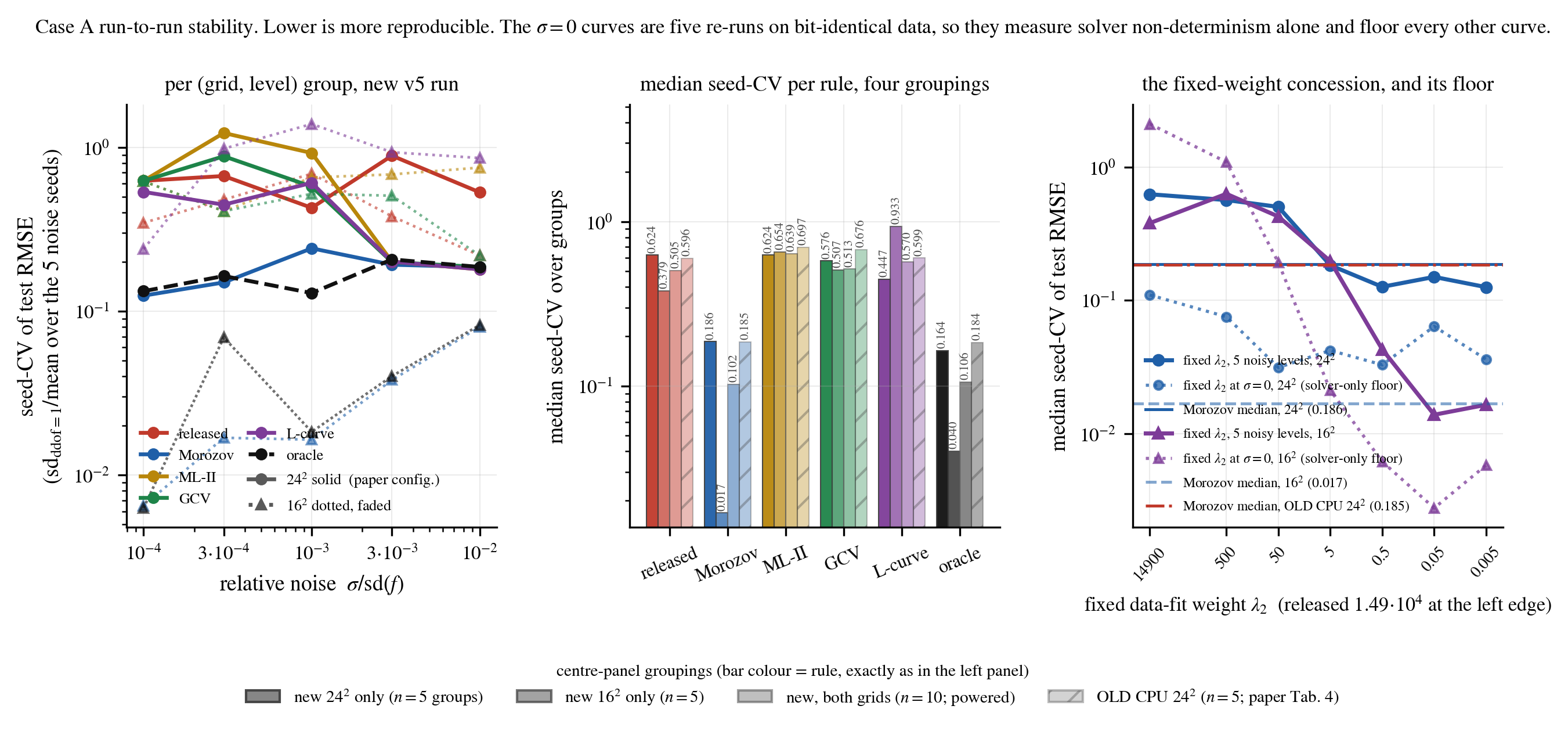}
\caption{Case A run-to-run stability. Left: per-(grid, level) seed-CV for
all six rules at both grids, log $y$. Centre: median seed-CV per rule
under all four groupings -- new $24^2$, new $16^2$, new pooled ($n=10$,
the powered one) and the earlier CPU corpus. Right: the fixed-weight
sweep against $\lambda_2$ with the $\sigma=0$ solver-only floor overlaid,
which is the only place the fixed-weight concession and the
non-determinism floor can be read together. The $\sigma=0$ curves are
five re-runs on bit-identical data, so they measure solver
non-determinism alone and floor every other curve.}
\label{fig:Astab}
\end{figure}

\subsection{Cross-backend portability}

\textbf{The rule that survives a change of machine}
(Table~\ref{tab:Aport}). The same $25$ noisy $24^2$ cells were computed
on a different backend in an earlier corpus, at matched noise seed and
level, with identical $\mathrm{std}(f)$, identical $\eta$ and an
identical $\lambda_2$ grid; the per-$\lambda_2$ misfit ratio between the
two has median $0.998$, so the data are the same and only the solve
differs. Over the $26$ shared cells -- the $25$ noisy ones plus the
shared $\sigma=0$ cell -- only $6.6\%$ of the $182$ matched
(cell, $\lambda_2$) RMSE pairs agree to $1\%$, and $63\%$ agree to
$20\%$. Against that
background the rules separate by an order of magnitude: the released
weight's per-cell RMSE moves by up to $12.3\times$ between machines,
ML-II by $30.4\times$, GCV by $12.3\times$ and the L-curve by
$4.9\times$, while \emph{Morozov moves by at most $1.36\times$} and the
test-access oracle by $1.35\times$. Decomposing the drift of the gain
column of Table~\ref{tab:A}, the released numerator carries $106\%$ of
the log-variance and Morozov's denominator $2\%$: the gain number is
unstable across machines precisely because the released weight is, and
this is the same ill-conditioning that Table~\ref{tab:Afloor} measures
at $\sigma=0$. Resampling each of the $25$ cells independently from
either backend and recomputing Table~\ref{tab:Astab} $4000$ times,
Morozov is the lowest-dispersion deployable rule in $4000/4000$ draws,
its own median seed-CV stays in $[0.174,0.195]$, and the
released$/$Morozov ratio is $[2.58,4.37]$. Individual entries of
Table~\ref{tab:Astab} are \emph{not} portable -- across the two corpora
they move by factors $0.19$ to $1.49$, $17\%$ of them by more than
$1.5\times$ -- and neither is the low-noise end of the gain column
($26\times\!\to\!40\times$ at rel.\ $10^{-4}$, minimum per-cell gain
$5.3\!\to\!13.6$; Fig.~\ref{fig:Again}). What is portable is the ordering, the bold cell at
every level, Morozov's own value ($0.1855\!\to\!0.1859$), the
released$/$Morozov ratio ($3.21\!\to\!3.36$), the worst level's ratio of
median Morozov and oracle RMSEs ($1.269\!\to\!1.265$; the
median-of-per-cell-ratios statistic that
Table~\ref{tab:A}'s M$/$o column reports moves
$1.257\!\to\!1.186$ on the same cells) and the noise exponent
($0.614\!\to\!0.601$). We therefore state the Case-A stability result as
an ordering with a ratio band and a portability ledger, and we do not
quote any single seed-CV entry as reproducible.

\begin{table}[t]\centering\footnotesize
\caption{Cross-backend portability. The corpus of this paper (GPU
session) and an earlier corpus computed on CPU share $25$ noisy
$24^2$ cells at matched noise seed and level: same forcing, same
$\mathrm{std}(f)=678.4431$, same $\eta=16.2826$ at rel.\ $10^{-3}$, same
$\lambda_2$ grid, same $120$ epochs. Only the backend differs. Column
two is the per-cell ratio of the two corpora's test RMSE for that rule,
so $1.00$ would be exact reproduction. This is one of only two tables in
the paper that mix corpora, because the corpus is its variable. The data
are the same on both machines -- the per-$\lambda_2$ misfit ratio has
median $0.998$ -- so the spread is the solve, not the draw. Over the
$26$ shared cells (the $25$ noisy ones plus $\sigma=0$) only $6.6\%$
of the $182$ matched (cell, $\lambda_2$) RMSE pairs agree to $1\%$ and
only $35.7\%$ to $5\%$; Morozov and the oracle are the two rules whose
\emph{selected model} nonetheless survives the move.}
\label{tab:Aport}
\setlength{\tabcolsep}{6pt}
\begin{tabular}{@{}lccc@{}}
\toprule
rule & median ratio & range over the 25 cells & worst disagreement\\
\midrule
released & 1.180 & $[0.373,\,12.34]$ & $12.3\times$\\
ML-II & 0.998 & $[0.221,\,30.38]$ & $30.4\times$\\
GCV & 0.988 & $[0.101,\,12.34]$ & $12.3\times$\\
L-curve & 0.980 & $[0.221,\,4.950]$ & $4.9\times$\\
\textbf{Morozov} & \textbf{0.994} & $\mathbf{[0.796,\,1.357]}$ & $\mathbf{1.36\times}$\\
oracle & 0.994 & $[0.772,\,1.351]$ & $1.35\times$\\
\bottomrule
\end{tabular}
\end{table}

\begin{figure}[t]\centering
\includegraphics[width=\textwidth]{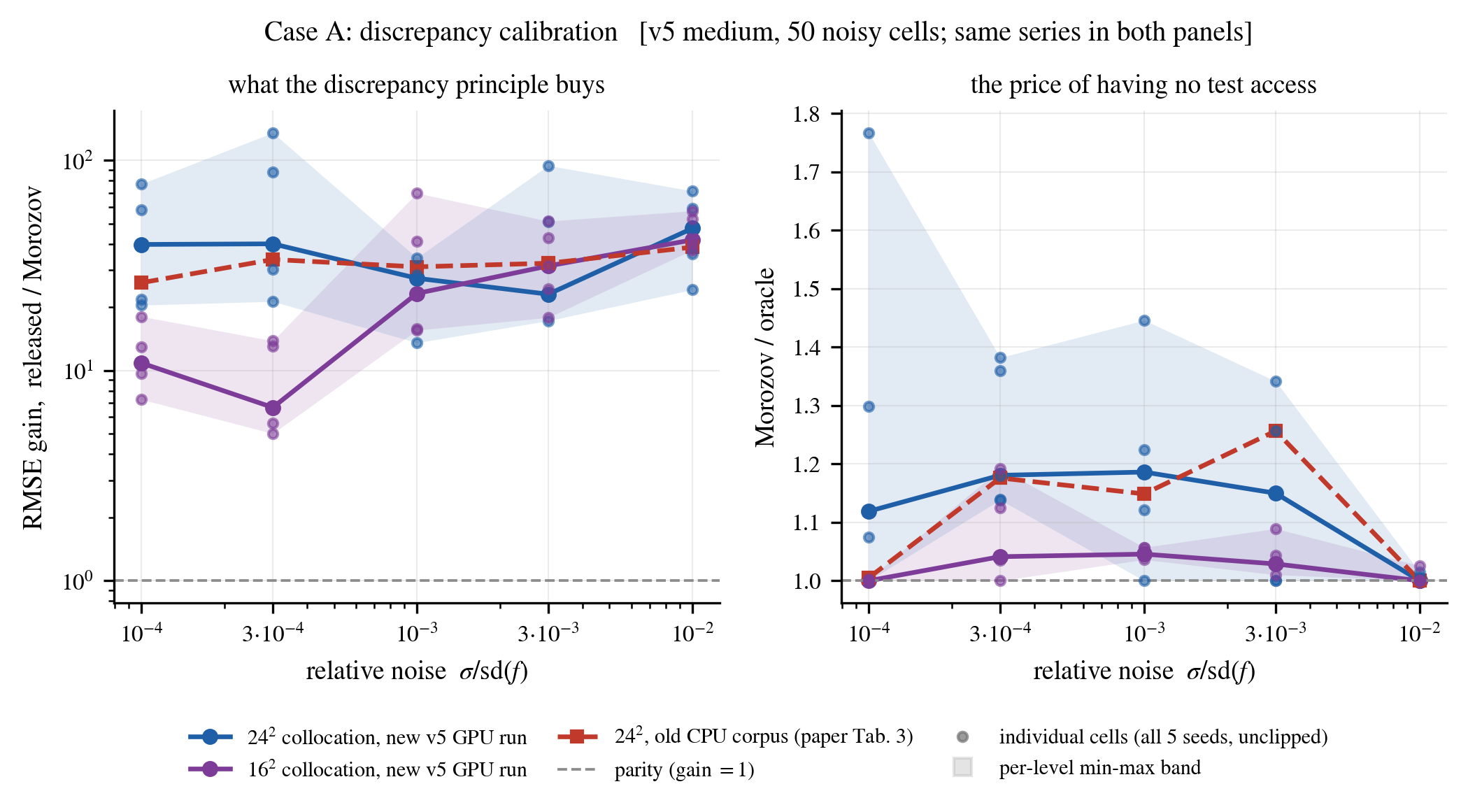}
\caption{Left: released$/$Morozov gain against noise, log--log, per
grid, with the per-level min--max band and every one of the $50$ cells
drawn unclipped; the earlier CPU corpus is overlaid dashed. Right:
Morozov$/$oracle on a linear axis, whose range $1.0$--$1.8$ spans no
decade. The two corpora agree in aggregate and disagree cell by cell;
Table~\ref{tab:Aport} quantifies which is which.}
\label{fig:Again}
\end{figure}

\subsection{Certificates and the inflation ladder}

\textbf{Certificates} (Table~\ref{tab:Acert}, Fig.~\ref{fig:A}).
Coverage of the true solution is $1.0$ at both delivered noise levels
and at every rung of the inflation ladder, with no numerical slack
needed: $0$ violations in $432$ evaluation points per level, and the
binding point still holds $31$--$33\%$ of the median certificate in
reserve (Rem.~\ref{rem:slack}). With $144$ points per cell an empirical
coverage resolves nothing finer than $1/144=0.0069$, so the honest
statement is a $95\%$ Clopper--Pearson lower bound of $0.993$ on the
pooled $0/432$. The delivered global certificate coincides with the
information radius, but as a five-digit invariant rather than an
identity: over all $864$ points the ratio $\mathrm{cert}/E_\star$ has
median $0.999989$, IQR $[0.999964,0.999996]$, worst value $0.997273$,
and never exceeds $1$; $99.9\%$ of points lie within $10^{-3}$ of the
identity, $87.5\%$ within $10^{-4}$ and only $45.6\%$ within $10^{-5}$
(Fig.~\ref{fig:Aattain}). The residual shortfall is the $97$-point $\mu$
grid, whose upper end $\mu=10^{12}$ is selected at every $\sigma=0$
point, and not a failure of Thm.~\ref{thm:radius}c; we therefore use
Thm.~\ref{thm:radius}c as an implementation invariant at five
significant digits and not as an exact equality.

At the delivered $\rho=1.05\rho_{\mathrm{oc}}$ the $y$-conditional band
is a further $104.6\times$ tighter than the global certificate at rel.\
$10^{-3}$ ($[100.0,108.2]$ over three seeds) and $3.10\times$ tighter at
$\sigma=0$ -- but the $\sigma=0$ figure carries no information. At
$\eta=0$ the band is exactly
$\rho_{\mathrm{oc}}\sqrt{m^2-1}\,P(x)$ and the tightening is the known
function $(\hat\eps/\rho_{\mathrm{oc}})/\sqrt{m^2-1}$ of the inflation
alone, which the measurement reproduces to $0.6$--$1.0\%$ and which
drops below $1$ for $m>\sqrt2$: at $\sigma=0$ the conditional band is
\emph{wider} than the global certificate at $m=1.5$ ($0.889$) and at
$m=2.0$ ($0.574$). The premium the conditional band pays over the actual
error is stable but heavy-tailed: at rel.\ $10^{-3}$ the median
pointwise width$/$error ratio is $21.5\times$ ($[19.7,22.0]$ over three
seeds) at $\rho=1.05\rho_{\mathrm{oc}}$ and ranges over
$13.6$--$34.4\times$ across the whole ladder
$\rho\in[1.01,2.0]\rho_{\mathrm{oc}}$, with a right tail reaching
$3.8\!\cdot\!10^{3}$ at points where the recovery is accidentally exact
(Fig.~\ref{fig:Awer}), so only medians are quoted and the quoted range
is a range of medians, not of points. \emph{Sensitivity:} the ball
inflation is not load-bearing -- over that ladder coverage stays
$1.000$, no rung is refused, and the median width varies by $2.37\times$
across the full range (per-seed $2.14$, $2.37$, $2.40$; measured at
rel.\ $10^{-3}$, Table~\ref{tab:Aladder},
Fig.~\ref{fig:Aladder}). We quote the two tightening
stages separately and do not multiply them.

The two radii that could anchor the ladder are not two estimates of one
number. The bordered-Occam radius $\rho_{\mathrm{oc}}$ minimizes
$\norm u$ subject to the boundary rows \emph{exact} and the collocation
residual within $\eta$; $\hat\eps$ minimizes it subject to
$\norm{\Lam u-y}\le\eta$ over \emph{all} rows. The second feasible set
strictly contains the first, so $\hat\eps\le\rho_{\mathrm{oc}}$ exactly
as observed, with a ratio stable to $0.3\%$ across seeds
($0.7926$, $0.7941$, $0.7917$). At rel.\ $10^{-3}$ they differ by
$26.17\%$; at $\sigma=0$, where the two sets coincide, they agree to
$0.063\%$. Two consequences we state rather than hide: any ladder rung
finer than the gap that applies at that noise level is inside the
specification uncertainty of its own radius, and the global certificate
is certified over the ball $\norm u\le18.96$ while the conditional
ladder is anchored at $23.91$, a $26\%$ larger ball -- Lem.~\ref{lem:occam}
applies to both, but they are not the same model ball and we do not
present them as such. The band also shrinks, rather than vanishes, where
the observations are exact: on the ring of evaluation points $0.05$ from
the Dirichlet boundary the conditional half-width at rel.\ $10^{-3}$ is
$0.66\times$ its interior median while the global certificate is
$2.8\times$ its interior median there, and at $\sigma=0$ the two
profiles agree and both are largest near the boundary. The effect is one
of the $\eta$ ball, not of exactness alone, and the evaluation grid
never reaches the boundary, so ``vanishes on the boundary'' is not a
statement this corpus can make.

The natural Bayesian comparator -- the credible band of the
misfit-matched GP on the same linearization with ML-II scale -- is
two-sided. At $\sigma=0$ it covers ($1.000$) at $3.3\times$ smaller
width than the conditional OR band. At rel.\ $10^{-3}$ it undercovers
its own posterior-mean error ($0.833$, i.e.\ $120/144$) at $14.8\times$
the OR width, and the failure is spatially deterministic rather than
stochastic: it fails at exactly the same $24$ points in all three seeds,
every one of them on the two rings nearest the Dirichlet boundary, where
the OR construction imposes the boundary rows exactly and its band is
$0.016$--$0.024$ wide against errors of $0.0002$--$0.004$
(Fig.~\ref{fig:Amaps}). Average-case bands are sharper exactly when the
model is right, and brittle the moment it is not. Conformal calibration
is \emph{not} applicable here (collocation points are not exchangeable
draws) -- precisely the regime where the worst-case ball is the only
guarantee available. Two scope limits: this block has two noise levels,
not three, so nothing here speaks to the high-noise end of the
certificate ladder; and with three seeds per level the smallest
attainable two-sided $p$ of any seed-level test is $0.25$, so every
across-seed statement above is descriptive.

\begin{table}[t]\centering\footnotesize
\caption{Case A certificates, per noise level: 3 seeds, 144 evaluation
points per seed (432 pooled). Coverage is of the true solution; widths are
medians over points, then medians over seeds; brackets are the seed range.
$m=\rho/\rho_{\mathrm{oc}}$. The GP band fails at the same 24 points in all three seeds. $\rho_{\mathrm{oc}}$ is the boundary-exact Occam radius used for the
conditional band, $\hat\eps$ the all-rows radius used for the global
certificate. At $\sigma=0$ the three seeds share one dataset -- the generator
leaves the data untouched when $\sigma=0$ -- so they differ only through solver
non-determinism and the effective $n$ there is one dataset; at rel.\ $10^{-3}$ they
are three independent noise draws. Level rel.\ $10^{-2}$ was planned and is absent:
the session ended inside this loop.}\label{tab:Acert}
\begin{tabular}{@{}lrr@{}}
\toprule
& $\sigma=0$ & rel.\ $10^{-3}$\\
\midrule
cells (seeds) $\times$ points & $3\times144=432$ & $3\times144=432$\\
$\rho_{\mathrm{oc}}$ (source) & $25.0335$ (closed form) & $23.914$ (bordered Occam)\\
$\hat\eps$ (all rows within $\eta$) & $25.0494$ & $18.957$\\
$|\hat\eps-\rho_{\mathrm{oc}}|/\hat\eps$ & $0.063\%$ & $26.17\%$\\
\midrule
raw coverage (no slack) & $1.0000$ ($0/432$) & $1.0000$ ($0/432$)\\
$\max_x(\mathrm{err}-\mathrm{cert})$ & $-5.10\cdot10^{-3}$ & $-8.84\cdot10^{-1}$\\
$\delta_{\mathrm{num}}$ (\% of median cert) & $3.73\cdot10^{-6}$ ($0.024\%$) & $2.82\cdot10^{-6}$ ($10^{-4}\%$)\\
attainment $\mathrm{cert}/E_\star$, median & $0.999994$ & $0.999987$\\
\quad IQR over points & $[0.999985,0.999998]$ & $[0.999926,0.999991]$\\
\quad worst point & $0.997273$ & $0.999848$\\
\midrule
median global certificate & $1.528\cdot10^{-2}$ & $2.876$\\
median conditional half, $m{=}1.05$ & $4.923\cdot10^{-3}$ & $2.748\cdot10^{-2}$\\
conditional coverage (all 5 rungs) & $1.000$ & $1.000$\\
refused rungs & $0/15$ & $0/15$\\
tightening (global / conditional) & $3.10\times$ & $104.6\times$\\
width/error at $m{=}1.05$ & $14.4\times$ & $21.5\times$ $[19.7,22.0]$\\
\midrule
GP credible band: coverage & $1.000$ & $0.833$ ($120/144$)\\
GP credible band: median width & $1.481\cdot10^{-3}$ & $0.406$\\
GP width / conditional width & $0.30\times$ & $14.8\times$\\
\bottomrule
\end{tabular}
\end{table}

\begin{table}[t]\centering\footnotesize
\caption{The $\rho$-inflation ladder, $\rho=m\,\rho_{\mathrm{oc}}$. Each
entry is the median over 3 seeds of the within-cell median over 144 points;
brackets give the seed range where it is wider than the last digit shown.
Coverage of the truth is $1.000$ at every one of the 30 (level, seed, rung)
cells and no rung was refused. At $\sigma=0$ the width column obeys the
parameter-free law $\mathrm{medw}(m)\propto\sqrt{m^2-1}$ exactly
(Sec.~\ref{sec:caseA}). The $\eta=0$ law is verified to $4.1\!\cdot\!10^{-15}$
relative on all $12$ consecutive pairs and to $5.4\!\cdot\!10^{-15}$ on all $4320$
pointwise pair tests.}\label{tab:Aladder}
\begin{tabular}{@{}lrrr@{\hskip 18pt}rrr@{}}
\toprule
& \multicolumn{3}{c}{$\sigma=0$} & \multicolumn{3}{c}{rel.\ $10^{-3}$}\\
\cmidrule(r){2-4}\cmidrule(l){5-7}
$m$ & median half & width/err & tighten & median half & width/err & tighten\\
\midrule
$1.01$ & $2.180\cdot10^{-3}$ & $6.39$  & $7.01$   & $1.960\cdot10^{-2}$ & $13.7$ & $146.7$\\
$1.05$ & $4.923\cdot10^{-3}$ & $14.42$ & $3.10$   & $2.748\cdot10^{-2}$ & $21.5$ & $104.6$\\
$1.1$  & $7.047\cdot10^{-3}$ & $20.64$ & $2.17$   & $3.031\cdot10^{-2}$ & $23.1$ & $94.9$\\
$1.5$  & $1.719\cdot10^{-2}$ & $50.36$ & $0.889$  & $4.142\cdot10^{-2}$ & $28.1$ & $69.4$\\
$2.0$  & $2.664\cdot10^{-2}$ & $78.01$ & $0.574$  & $4.705\cdot10^{-2}$ & $33.5$ & $61.1$\\
\midrule
$m{=}2.0$ / $m{=}1.01$ & $12.22\times$ & & & $2.37\times$ $[2.14,2.40]$ & &\\
\bottomrule
\end{tabular}
\end{table}

\begin{figure}[t]\centering
\includegraphics[width=\textwidth]{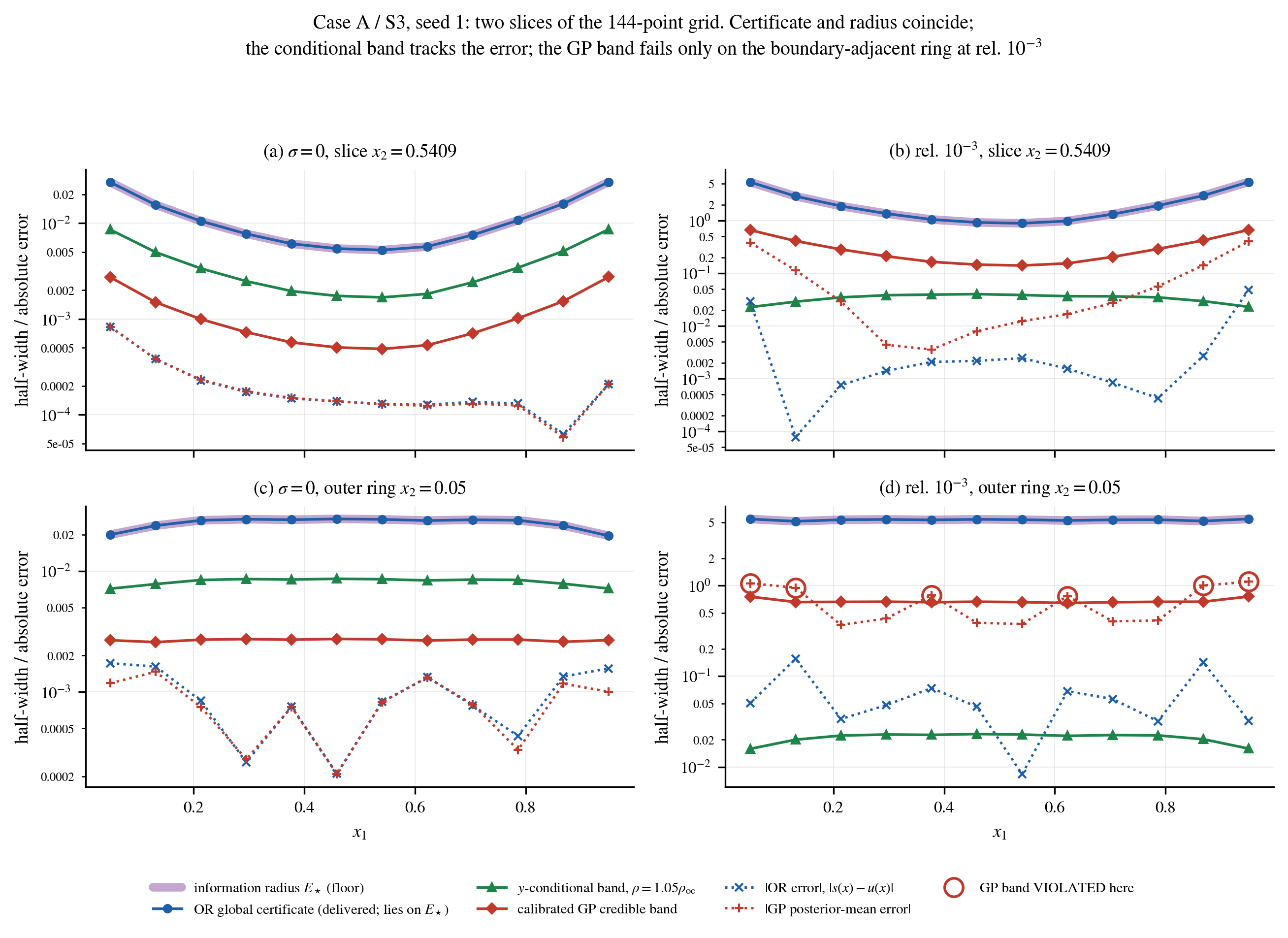}
\caption{Case A certificates, seed 1: two slices of the evaluation grid,
through the middle ($x_2=0.5409$) and along the boundary-adjacent ring
($x_2=0.05$), each at $\sigma=0$ and at rel.\ $10^{-3}$. The delivered
global certificate lies on the information radius; the $y$-conditional
band tracks the actual error; the calibrated GP band is narrower at
$\sigma=0$ and both wider and invalid at rel.\ $10^{-3}$, where it is
violated at the circled points. The fixed-$\mu$ two-term baseline of an
earlier revision is not stored in this run's per-point arrays and is not
drawn.}
\label{fig:A}
\end{figure}

\begin{figure}[t]\centering
\includegraphics[width=\textwidth]{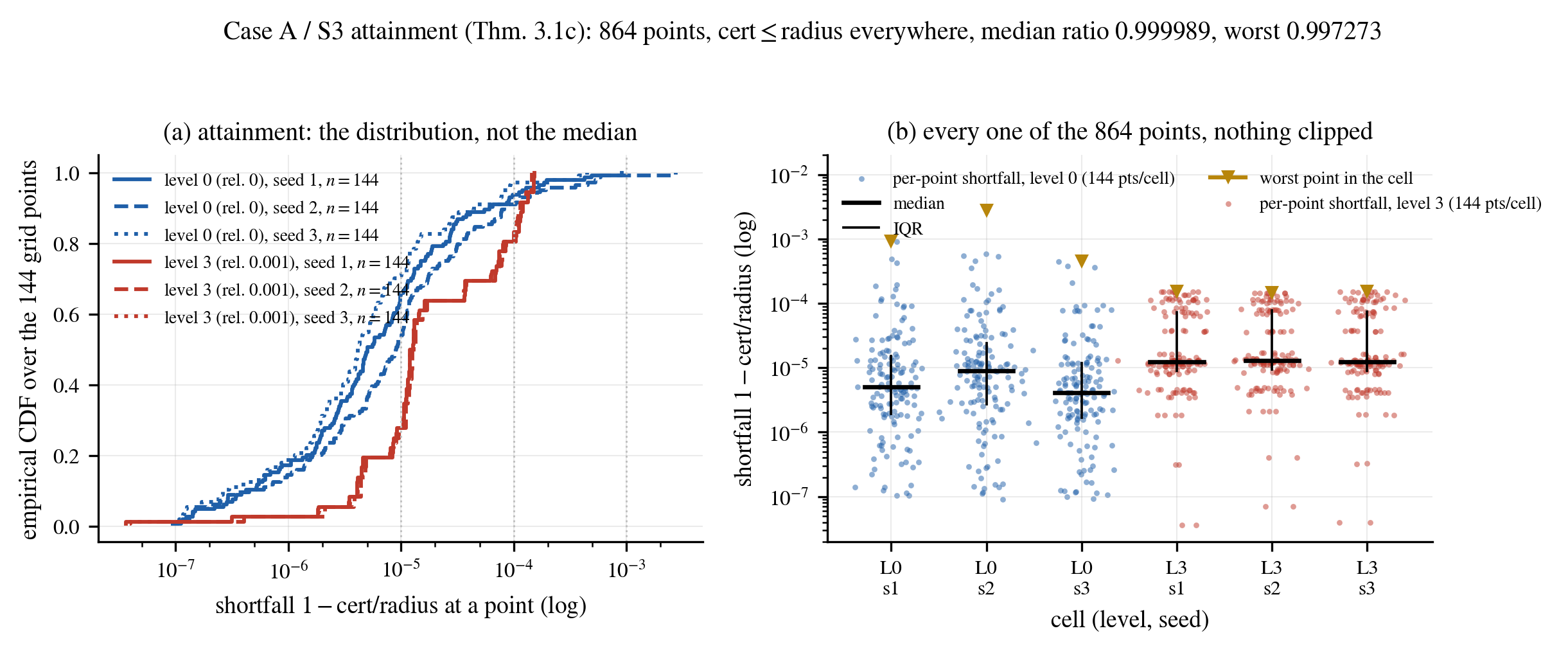}
\caption{Attainment as a distribution, not a median. Left: empirical CDF
of the pointwise shortfall $1-\mathrm{cert}/E_\star$ per (level, seed),
log axis. Right: all $864$ points, with median, IQR and the worst point
of each cell marked; nothing is clipped. The ratio never exceeds $1$, its
median is $0.999989$, and its worst value is $0.997273$.}
\label{fig:Aattain}
\end{figure}

\begin{figure}[t]\centering
\includegraphics[width=\textwidth]{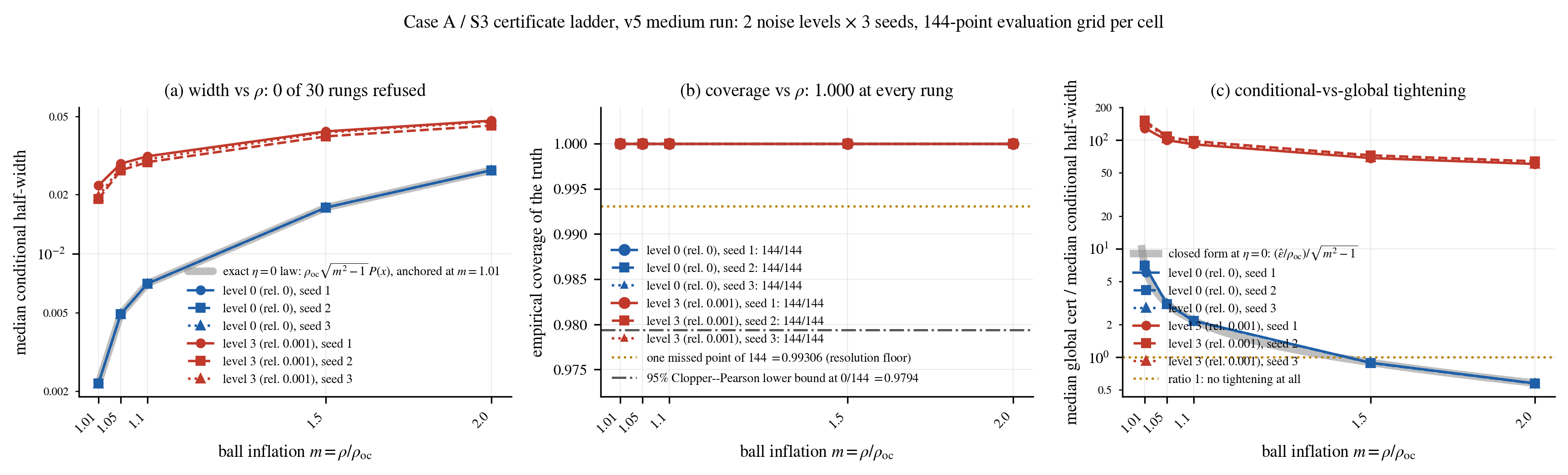}
\caption{The $\rho$-inflation ladder. (a) median conditional half-width
against $m=\rho/\rho_{\mathrm{oc}}$ with the exact $\eta=0$ law
overlaid; (b) coverage of the truth, $1.000$ at every rung, against the
resolution floor of a $144$-point grid; (c) the conditional-vs-global
tightening, which at $\sigma=0$ follows
$(\hat\eps/\rho_{\mathrm{oc}})/\sqrt{m^2-1}$ and crosses $1$ at
$m=\sqrt2$. Six series: two noise levels by three seeds; no rung was
refused.}
\label{fig:Aladder}
\end{figure}

\begin{figure}[t]\centering
\includegraphics[width=\textwidth]{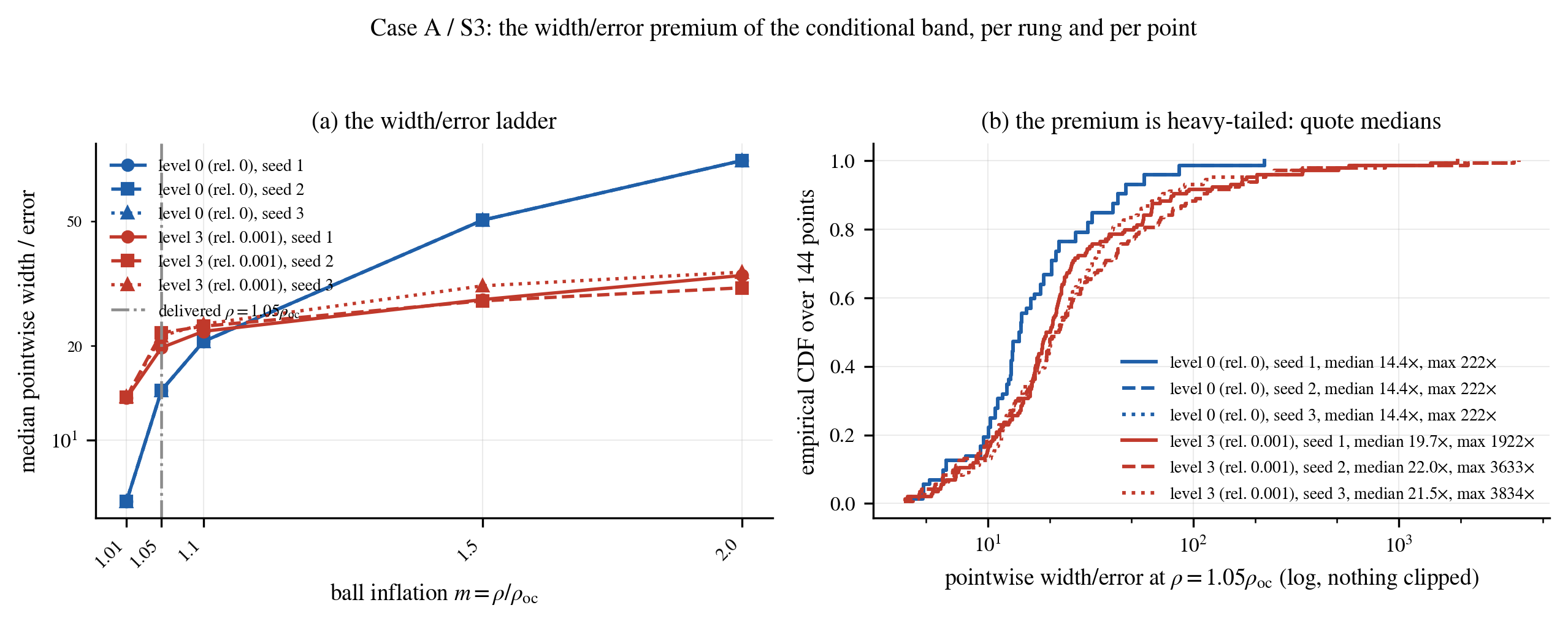}
\caption{The width$/$error premium of the conditional band. Left: the
median premium per rung, six series. Right: its pointwise distribution
at the delivered $\rho=1.05\rho_{\mathrm{oc}}$, whose right tail reaches
$3.8\!\cdot\!10^{3}$ where the recovery is accidentally exact -- which is
why only medians are quoted.}
\label{fig:Awer}
\end{figure}

\begin{figure}[t]\centering
\includegraphics[width=\textwidth]{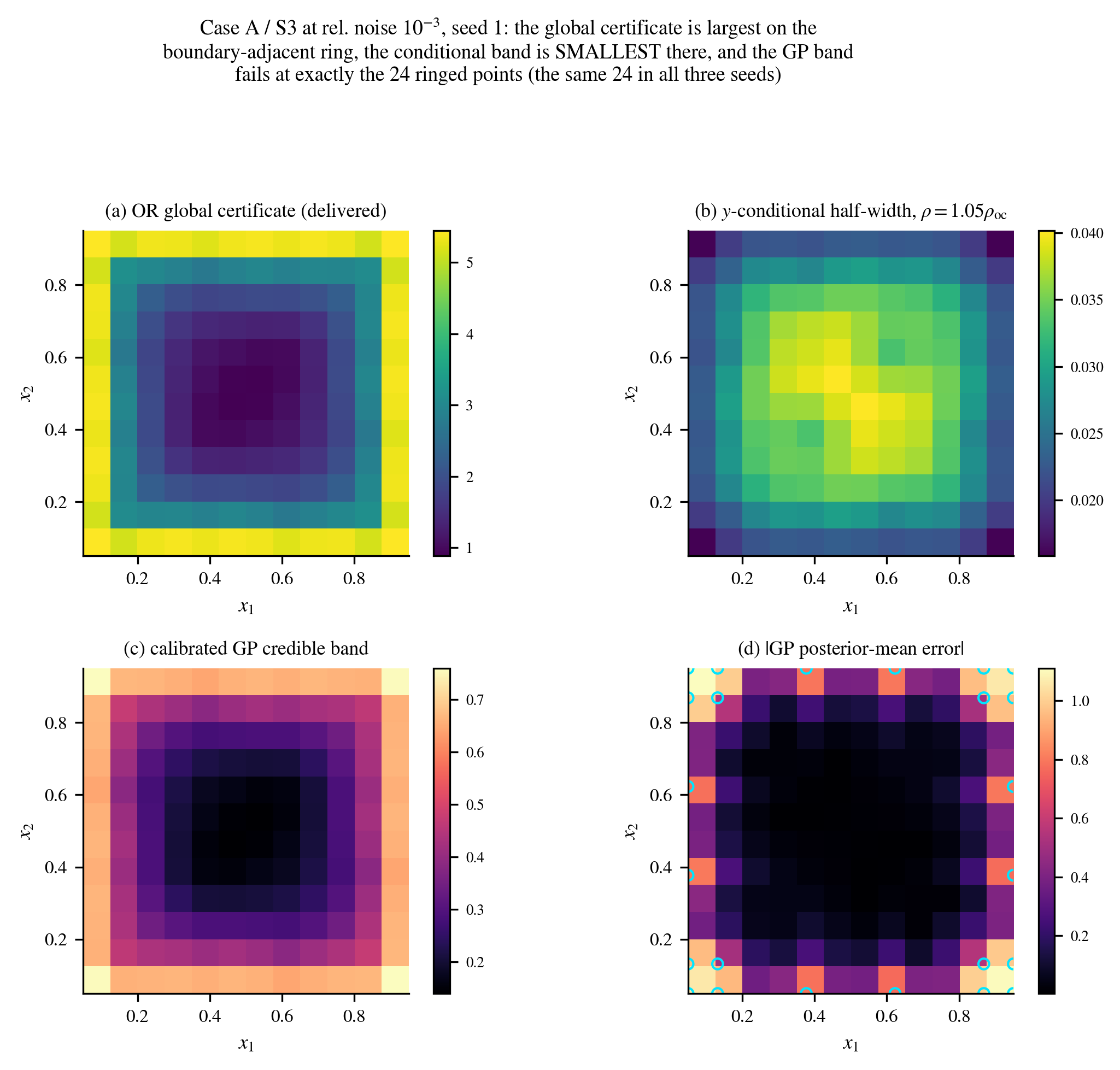}
\caption{Case A at rel.\ $10^{-3}$, seed 1, on the $12\times12$
evaluation grid: the global certificate, the $y$-conditional half-width,
the GP band, and the GP posterior-mean error with its $24$ failures
ringed. The global certificate and the GP band are largest on the
boundary-adjacent ring; the conditional band is smallest there. The same
$24$ points fail in all three seeds.}
\label{fig:Amaps}
\end{figure}

\clearpage

\section{Case B: full results (DAK)}\label{app:caseB}

\subsection{Dataset audit and conformal levels}

DAK \citep{DAK2025} composes a feature extractor, a variational linear
embedding and GP activations with a learned noise layer $\hat\sigma$.
\textbf{Protocol:} five tabular targets (\texttt{diabetes},
\texttt{concrete}, \texttt{energy}, \texttt{yacht},
\texttt{california}), each pinned by \texttt{data\_id} and passed
through a hard-failing target audit before any training
(Table~\ref{tab:Baudit}); two regimes, i.i.d.\ ($80/20$) and covariate
shift (a rank split along a random unit direction, $70/30$), plus one
legacy axis-mode anchor cell; seeds $\{0,1\}$; label noise
$\mathrm{rel}\in\{0,.25\}$ everywhere and $\{0,.1,.25,.5\}$ on
\texttt{diabetes}; nine arms per cell. The session delivered $39$ of the
$73$ planned cells before its wall-clock cap; the work order is
seed-major, so the loss is seed $2$ and the higher noise levels of seed
$1$, never a whole dataset. Within a (dataset, regime, seed) the noise
levels share one split and one test set -- we verified bit-identical
test targets in all $14$ such groups -- so the $19$ random-direction shift cells (the \texttt{shiftcol} anchor
brings the shift axis to $20$ cells in total) rest on
$10$ distinct splits and the $19$ i.i.d.\ cells on $10$; every cell-level
$p$ below is therefore optimistic, and we give the split-blocked version
wherever it changes a conclusion. Baselines: DAK as released but with
the KL term zeroed (as released the KL is live; the \texttt{return\_kl}
switch is broken and double division by the batch size leaves it at
${\approx}0.2\times$ a standard minibatch-ELBO KL) and KL-repaired;
split-conformal and temperature scaling (the latter is exactly
\emph{normalized} split-CP with scores $|r|/\hat\sigma(x)$) on a
fit-split model (the network never sees the calibration points); a
\emph{trivial} baseline, split-conformal around a constant predictor,
which uses no features and no training and which we report throughout
because it is the baseline this study's earlier revision did not have;
the OR head (R1) on the \emph{same} fit-split model with the \emph{same}
calibration indices -- a like-for-like head-to-head -- plus its no-floor
and pure-interval ablations; the water-filling prior (R2) with the
model's \emph{own} $\hat\sigma$, without access to the injected noise
level. (The combined R1{+}2 arm of an earlier revision was dropped from
the v5 arm set and is not tabulated here.) Bands are scored by coverage and width and by the
Winkler interval score at $95\%$ \citep{GneitingRaftery2007}. The
calibration split is $25\%$ of train, so the recorded conformal level
$n_c/(n_c{+}1)$ runs from $0.9815$ (\texttt{yacht} shift) to $0.9967$
(\texttt{california} i.i.d.)\ while the level the code actually achieves,
the order statistic $(k{+}1)/(n_c{+}1)$, sits at $0.950$--$0.963$;
Table~\ref{tab:Blevels} gives both per dataset and regime, and the
difference between them is a quantile convention, not a defect. Coverage
is a mean over cells (a bounded quantity); width and interval score are
median [IQR] over cells, and the width statistic is the \emph{within-cell
median} half-width unless stated -- the mean is reported separately in
Table~\ref{tab:Btail} because it is the statistic the remaining tail
lives in. Targets are standardized, so a half-width of $1$ is one
training standard deviation.

\begin{table}[t]\centering\footnotesize
\caption{Case B dataset audit (\texttt{S0}), run before any training and hard-failing.
Every source is pinned by \texttt{data\_id}; a categorical or $\le 20$-level target is
refused by the loader, the audit additionally requires $\ge 50$ distinct values and a
RandomForest 5-fold CV $R^2>0.05$. All five datasets pass, so none is excluded. The
previous corpus regressed OpenML \texttt{data\_id=1472} column \texttt{V8} (Glazing Area
Distribution, 6 levels, CV $R^2=-0.3211$) as \texttt{energy}; \texttt{data\_id=44960}
carries the real \texttt{heating\_load}.}
\label{tab:Baudit}
\setlength{\tabcolsep}{3pt}
\begin{tabular}{@{}l r r l l r r@{}}
\toprule
dataset & $N$ & $d$ & source & target & distinct $y$ & RF CV $R^2$\\
\midrule
\texttt{diabetes} & 442 & 10 & sklearn \texttt{load\_diabetes} & \texttt{target} & 214 & 0.4193\\
\texttt{concrete} & 1030 & 8 & OpenML \texttt{44959} & \texttt{strength} & 938 & 0.3428\\
\texttt{energy} & 768 & 8 & OpenML \texttt{44960} & \texttt{heating\_load} & 587 & 0.9651\\
\texttt{yacht} & 308 & 6 & OpenML \texttt{42370} & \texttt{Residuary.resistance} & 258 & 0.9954\\
\texttt{california} & 1500 & 8 & sklearn \texttt{california\_housing} & \texttt{target} & 1101 & 0.7401\\
\bottomrule
\end{tabular}
\end{table}

\begin{table}[t]\centering\footnotesize
\caption{Case B conformal calibration, recomputed from the splits this run used.
$n_c=\max(8,\lfloor 0.25\,n_{\mathrm{tr}}\rfloor)$; the recorded \emph{level} is
$n_c/(n_c{+}1)$ and the \emph{achieved} level is $(k{+}1)/(n_c{+}1)$ for
$k=\min(n_c{-}1,\lceil 0.95(n_c{+}1)\rceil{-}1)$, the order statistic the code takes.
The two are not equal, and the difference is the quantile convention, not a bug.
The paper's $n_c{=}88$, $88/89$ figures are exact for \texttt{diabetes} i.i.d.;
the shift regime trains on $70\%$, giving $n_c{=}77$ and $77/78$.}
\label{tab:Blevels}
\setlength{\tabcolsep}{5pt}
\begin{tabular}{@{}l l r r r c r c@{}}
\toprule
dataset & regime & $n_{\mathrm{tr}}$ & $n_{\mathrm{te}}$ & $n_c$ & level $n_c/(n_c{+}1)$ & $k$ & achieved $(k{+}1)/(n_c{+}1)$\\
\midrule
\texttt{california} & iid & 1200 & 300 & 300 & $300/301=0.9967$ & 285 & $286/301=0.9502$\\
\texttt{california} & shift & 1050 & 450 & 262 & $262/263=0.9962$ & 249 & $250/263=0.9506$\\
\texttt{concrete} & iid & 824 & 206 & 206 & $206/207=0.9952$ & 196 & $197/207=0.9517$\\
\texttt{concrete} & shift & 721 & 309 & 180 & $180/181=0.9945$ & 171 & $172/181=0.9503$\\
\texttt{diabetes} & iid & 353 & 89 & 88 & $88/89=0.9888$ & 84 & $85/89=0.9551$\\
\texttt{diabetes} & shift & 309 & 133 & 77 & $77/78=0.9872$ & 74 & $75/78=0.9615$\\
\texttt{diabetes} & shiftcol & 309 & 133 & 77 & $77/78=0.9872$ & 74 & $75/78=0.9615$\\
\texttt{energy} & iid & 614 & 154 & 153 & $153/154=0.9935$ & 146 & $147/154=0.9545$\\
\texttt{energy} & shift & 537 & 231 & 134 & $134/135=0.9926$ & 128 & $129/135=0.9556$\\
\texttt{yacht} & iid & 246 & 62 & 61 & $61/62=0.9839$ & 58 & $59/62=0.9516$\\
\texttt{yacht} & shift & 215 & 93 & 53 & $53/54=0.9815$ & 51 & $52/54=0.9630$\\
\bottomrule
\end{tabular}
\end{table}

\subsection{The nine-arm ledger, i.i.d. and shift}

\textbf{In-distribution results} (Table~\ref{tab:Barms},
Fig.~\ref{fig:Bp}, Fig.~\ref{fig:Bis})
-- \emph{the concessions first.} (i) Both DAK versions miscalibrate on
\texttt{diabetes} ($0.57$--$0.80$ at nominal $95\%$); pooled over the
five datasets their mean coverage is $0.861$ (released) and $0.873$
(KL-repaired), so the miscalibration is real but dataset-dependent, not
uniform, and it is intrinsic rather than a consequence of the KL bug.
(ii) \textbf{Split-conformal is the better in-distribution band}, and by
a larger margin than an earlier revision reported: it beats the OR head
on interval score in $18/19$ paired cells (median difference $-2.573$,
bootstrap $95\%$ CI $[-3.36,-1.20]$, $p=7.6\!\cdot\!10^{-6}$ at an
attainable floor of $3.8\!\cdot\!10^{-6}$) and is narrower in $19/19$
(median $-1.143$). (iii) \textbf{The water-filling prior yields no
accuracy or scoring gain when its noise hint is the model's own
$\hat\sigma$}: on \texttt{diabetes}, as a mean over the six cells, its
interval score is $6.95$
i.i.d.\ against $7.10$ (released) and $7.03$ (KL-repaired), and $8.13$
under shift against $8.02$ and $6.77$, at coverage $0.685$ and $0.703$.
On the median convention of Table~\ref{tab:Barms} the released-arm
comparison flips ($6.77$ against $7.24$ and $6.89$ i.i.d., $7.76$
against $6.92$ and $7.14$ under shift), so the concession is that R2
buys nothing detectable, not that it is uniformly worse.
(iv) The OR head is valid ($1.000$ mean coverage i.i.d., $0.994$ under
shift) but wide, and its $\kappa$-grid saturates in $29$ of $37$
delivered cells.

\textbf{Two ablation rows of the earlier table were artifacts and are
withdrawn.} With the feature scale repaired and the exact $\eta=0$ slice
in place, the no-floor OR band is \emph{valid}, not broken: mean
coverage $0.986$ i.i.d.\ and $0.961$ under shift, against $0.51$--$0.62$
before, at median half-widths $1.46$ and $1.34$. It beats the delivered
band on interval score in both regimes, by a median $1.853$
i.i.d.\ ($p=6.5\!\cdot\!10^{-4}$) and $1.255$ under shift
($p=9.8\!\cdot\!10^{-4}$), and it is the best OR variant under shift
(paired against the pure interval: median $-0.722$, better on $11/18$
cells, $p=0.038$); in-distribution the two are indistinguishable
(median $+0.360$, better on only $7/19$, $p=0.86$), which is why
Table~\ref{tab:Barms} prints the pure interval's $3.42$ below the
no-floor band's $3.54$ there. The pure interval is likewise no longer
degenerate: coverage $0.852$ at median half-width $0.65$ i.i.d., against
$0.02$--$0.06$ at $0.11$--$0.43$ before. The earlier numbers were
produced by a dual that returned negative half-widths, clipped to zero,
on a tube that the corrected code refuses outright.

\begin{table}[t]\centering\footnotesize
\caption{Case B, v5 corrected corpus (39 cells: 5 datasets $\times$ \{i.i.d., shift\}
$\times$ 2 seeds $\times$ noise grid, plus one \texttt{shiftcol} anchor cell).
Coverage is a \emph{mean} over cells (bounded quantity); width and interval score are
\emph{median [IQR]} over cells, and the width column is the within-cell \emph{median}
half-width, not the mean --- the mean is reported separately in Table~\ref{tab:Btail}
because it is the statistic the remaining tail lives in. $n$ is the number of cells the
arm delivered on: the OR arms refuse on two cells (empty consistent set) and
\texttt{or2} was run on \texttt{diabetes} only. Targets are standardized, so a
half-width of $1$ is one training standard deviation.}
\label{tab:Barms}
\setlength{\tabcolsep}{4pt}
\begin{tabular}{@{}l r c c c c@{}}
\toprule
arm & $n$ & coverage & median half-width & interval score & penalty share \\
 & & (mean) & median [IQR] & median [IQR] & of IS (median) \\
\midrule
\multicolumn{6}{@{}l}{\emph{i.i.d.}}\\
DAK (released) & 19 & 0.861 & 0.68 [0.44, 0.86] & 2.49 [1.83, 6.29] & 0.539 \\
DAK (KL repaired) & 19 & 0.873 & 0.68 [0.45, 0.90] & 2.59 [1.62, 5.40] & 0.465 \\
split-CP & 19 & 0.964 & 0.91 [0.61, 1.71] & 2.58 [1.80, 3.74] & 0.231 \\
norm.\ split-CP (temp.) & 19 & 0.961 & 0.90 [0.62, 1.77] & 2.60 [1.82, 3.77] & 0.201 \\
\textbf{triv} (constant pred.) & 19 & 0.963 & 2.02 [1.92, 2.43] & 4.39 [4.07, 5.24] & 0.079 \\
OR head (R1) & 19 & 1.000 & 2.06 [1.18, 3.29] & 5.79 [2.47, 7.02] & 0.000 \\
\quad -- no floor & 19 & 0.986 & 1.46 [0.91, 1.67] & 3.54 [1.99, 4.15] & 0.024 \\
\quad -- pure interval & 19 & 0.852 & 0.65 [0.50, 0.76] & 3.42 [1.47, 6.34] & 0.321 \\
water-fill (R2) & 6 & 0.685 & 0.90 [0.81, 0.95] & 6.77 [5.31, 7.85] & 0.724 \\
\midrule
\multicolumn{6}{@{}l}{\emph{covariate shift (random-direction rank split, 70/30)}}\\
DAK (released) & 19 & 0.749 & 0.72 [0.51, 0.88] & 4.71 [3.55, 6.44] & 0.732 \\
DAK (KL repaired) & 19 & 0.801 & 0.73 [0.52, 0.85] & 3.77 [2.51, 6.25] & 0.633 \\
split-CP & 19 & 0.892 & 1.06 [0.61, 1.90] & 3.82 [3.03, 4.74] & 0.254 \\
norm.\ split-CP (temp.) & 19 & 0.894 & 1.03 [0.61, 1.91] & 3.69 [2.98, 4.94] & 0.235 \\
\textbf{triv} (constant pred.) & 19 & 0.975 & 2.11 [1.96, 2.26] & 4.35 [4.13, 4.92] & 0.026 \\
OR head (R1) & 18 & 0.994 & 2.37 [1.33, 3.34] & 6.85 [3.57, 16.46] & 0.000 \\
\quad -- no floor & 18 & 0.961 & 1.34 [1.16, 1.80] & 5.60 [3.34, 13.65] & 0.030 \\
\quad -- pure interval & 18 & 0.852 & 0.82 [0.62, 1.23] & 6.49 [4.17, 15.57] & 0.146 \\
water-fill (R2) & 6 & 0.703 & 0.89 [0.86, 0.95] & 7.76 [6.50, 8.50] & 0.774 \\
\bottomrule
\end{tabular}
\end{table}

\begin{figure}[t]\centering
\includegraphics[width=\textwidth]{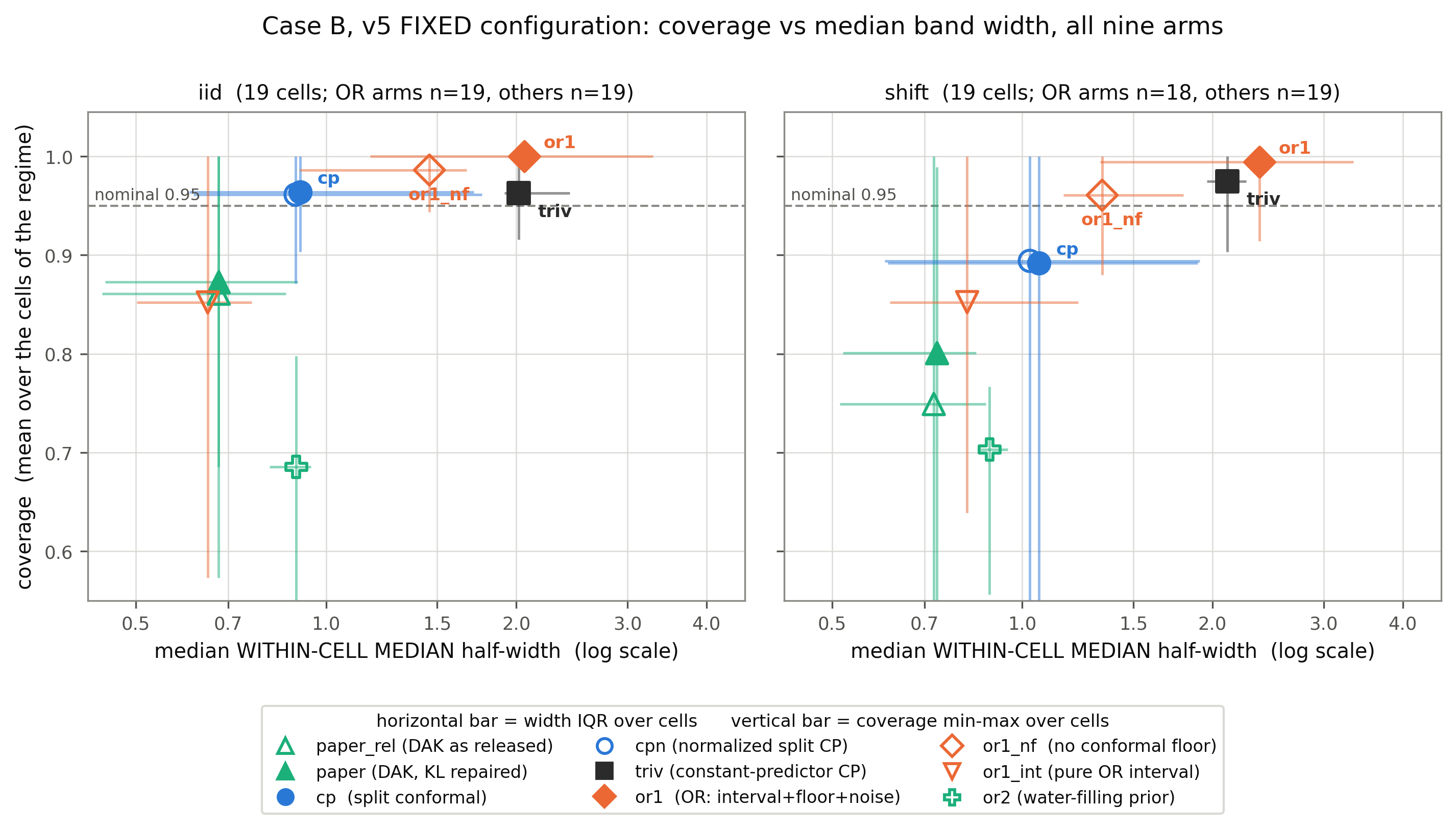}
\caption{Case B width--coverage Pareto over all $38$ i.i.d.\ and shift
cells (the OR arms deliver on $37$ of them; each panel prints its own
counts). Split-CP and
normalized split-CP sit at the smallest valid widths in-distribution; the
constant-predictor band and the OR head both over-cover at larger width;
the miscalibrated DAK arms cluster bottom-left. The pure-interval
ablation is now on-scale (coverage $0.852$), unlike in an earlier
revision.}
\label{fig:Bp}
\end{figure}

\begin{figure}[t]\centering
\includegraphics[width=\textwidth]{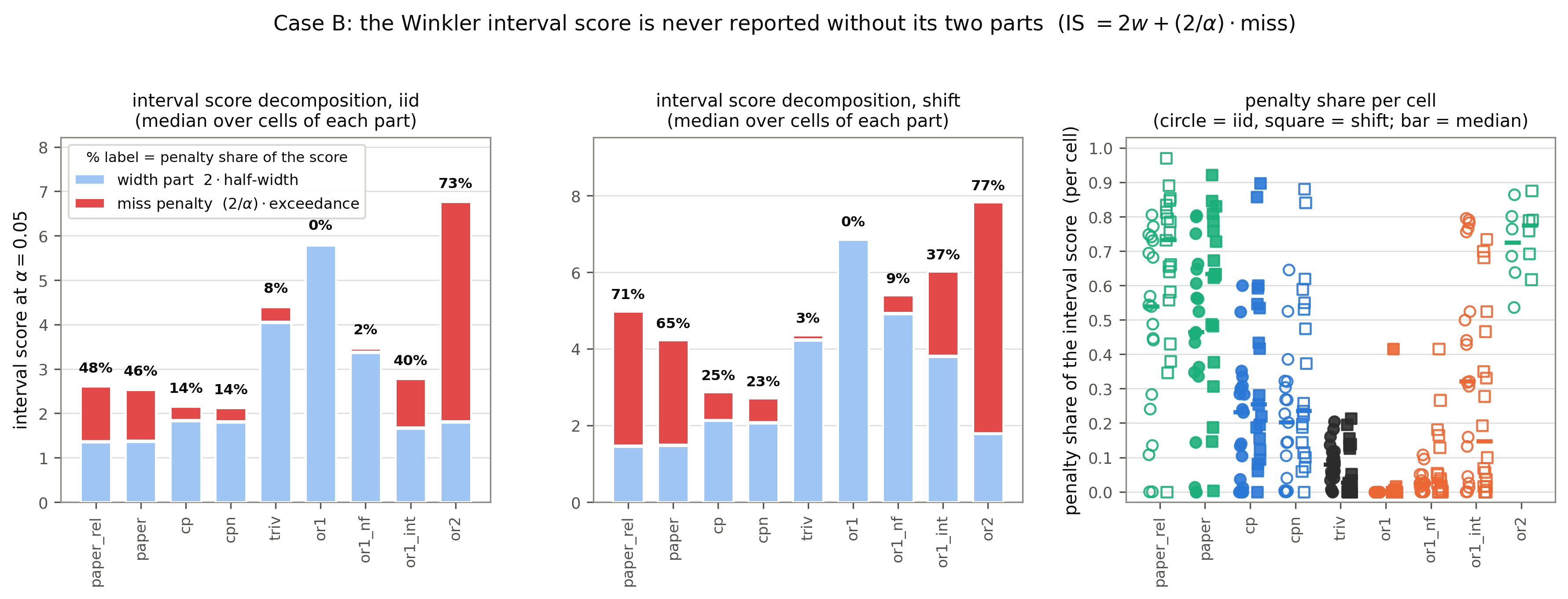}
\caption{The Winkler interval score is never reported without its two
parts, $\mathrm{IS}=2w+(2/\alpha)\cdot\text{miss}$. Left and centre: the
median over cells of the width part and of the miss penalty, stacked per
arm, i.i.d.\ and shift, with the penalty's percentage of the score printed
above each bar. Right: the penalty share of every individual cell, all
nine arms, circles i.i.d.\ and squares shift, with the per-arm median as a
bar. The OR head's score is pure width -- its penalty share is $0.000$ in
both regimes -- and the per-arm medians drawn in the right panel are the
last column of Table~\ref{tab:Barms}.}
\label{fig:Bis}
\end{figure}

\subsection{The width budget and the conformal floor}

\textbf{The OR geometry is not decorative, and the conformal floor is a
net loss} (Table~\ref{tab:Bfloor}, Fig.~\ref{fig:Bdecomp}). The width
decomposition now reads: in-distribution the band is $34\%$ OR interval,
$41\%$ conformal floor, $29\%$ noise allowance; under shift it is $69\%$
OR interval, $12\%$ floor, $18\%$ noise allowance. On \texttt{diabetes}
i.i.d.\ at $\mathrm{rel}=0.1$ (mean of the two seeds) the band $3.45$
splits into OR interval $0.86$,
conformal floor $1.63$, noise allowance $0.96$ -- the OR interval is
$25\%$ of the band, not the $5\%$ that the dead-column feature scale and
the clipped dual produced. What each component buys is now separable and
the answer is uncomfortable for the delivered band: the noise allowance
buys $+0.087$ coverage for $+0.66$ median width and no interval-score
gain ($p=0.86$ i.i.d.), while the conformal floor buys only $+0.010$
coverage for $+0.96$ median width and $+1.85$ interval score. On the
proper score the floor is a loss in both regimes, and we report the
no-floor band alongside the delivered one throughout rather than only as
an ablation.

\begin{table}[t]\centering\footnotesize
\caption{Case B width budget of the OR head, and what each component buys.
$w_{\mathrm{int}}$ is the tight OR interval, $w_{\mathrm{add}}=q$ the conformal floor
(constant in a cell), $w_{\mathrm{obs}}=1.96\hat\sigma$ the noise allowance. Paired
differences are median [bootstrap 95\% CI of the median], Wilcoxon $p$, on the cells
where all three variants delivered; the attainable $p$-floor is $2^{1-n}$
($3.8\times10^{-6}$ at $n{=}19$, $7.6\times10^{-6}$ at $n{=}18$). In the
lower block ``$w$ buys'' is the paired change from dropping that component:
$\Delta$width is the within-cell median half-width and $\Delta$IS the
interval score.}
\label{tab:Bfloor}
\setlength{\tabcolsep}{3pt}
\begin{tabular}{@{}l cc@{}}
\toprule
 & i.i.d.\ ($n{=}19$) & shift ($n{=}18$)\\
\midrule
$w_{\mathrm{int}}$ share of the band & 0.338 [0.281, 0.542] & 0.686 [0.543, 0.798]\\
$w_{\mathrm{add}}$ share of the band & 0.412 [0.180, 0.447] & 0.117 [0.024, 0.209]\\
$w_{\mathrm{obs}}$ share of the band & 0.292 [0.266, 0.352] & 0.182 [0.092, 0.213]\\
\midrule
$w_{\mathrm{int}}$ (absolute) & 0.828 [0.604, 0.960] & 1.898 [0.981, 5.747]\\
$w_{\mathrm{add}}$ (absolute) & 0.961 [0.291, 1.558] & 0.836 [0.230, 1.630]\\
$w_{\mathrm{obs}}$ (absolute) & 0.662 [0.435, 0.922] & 0.620 [0.417, 0.674]\\
\midrule
$w_{\mathrm{obs}}$ buys: $\Delta$coverage & $+0.087$ [+0.02, +0.19], $p=2.9\!\cdot\!10^{-4}$ & $+0.113$ [+0.04, +0.16], $p=6.5\!\cdot\!10^{-4}$\\
$w_{\mathrm{obs}}$ buys: $\Delta$width & $+0.662$ [+0.45, +0.91], $p=3.8\!\cdot\!10^{-6}$ & $+0.620$ [+0.43, +0.67], $p=7.6\!\cdot\!10^{-6}$\\
$w_{\mathrm{obs}}$ buys: $\Delta$IS & $+0.360$ [-0.19, +0.55], $p=0.86$ & $-0.722$ [-1.94, +0.36], $p=0.0385$\\
$w_{\mathrm{add}}$ buys: $\Delta$coverage & $+0.010$ [+0.00, +0.02], $p=0.00221$ & $+0.023$ [+0.00, +0.06], $p=0.00221$\\
$w_{\mathrm{add}}$ buys: $\Delta$width & $+0.961$ [+0.40, +1.54], $p=6.5\!\cdot\!10^{-4}$ & $+0.836$ [+0.26, +1.62], $p=9.8\!\cdot\!10^{-4}$\\
$w_{\mathrm{add}}$ buys: $\Delta$IS & $+1.853$ [+0.80, +2.95], $p=6.5\!\cdot\!10^{-4}$ & $+1.255$ [+0.36, +2.22], $p=9.8\!\cdot\!10^{-4}$\\
\bottomrule
\end{tabular}
\end{table}

\begin{figure}[t]\centering
\includegraphics[width=\textwidth]{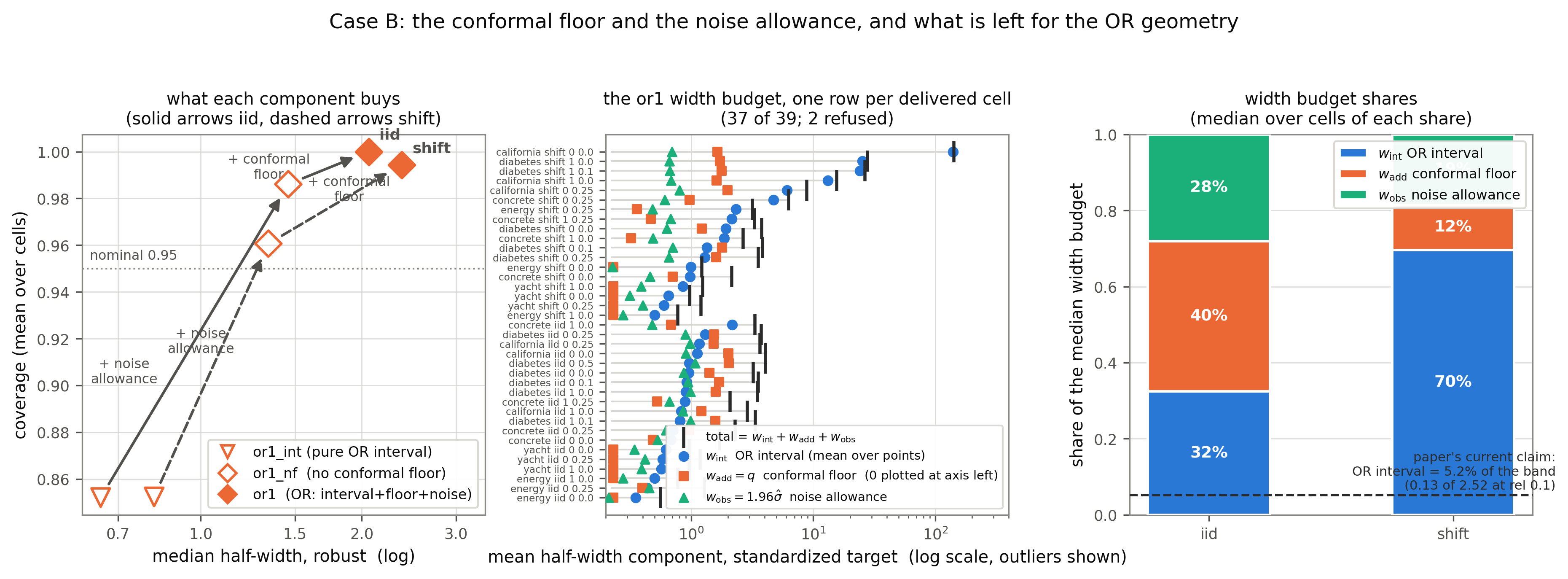}
\caption{The OR band's width budget, and what each component buys.
Left: the three OR variants in (median half-width, coverage) space, with
one arrow per component -- adding the noise allowance
$w_{\mathrm{obs}}$, then the conformal floor $w_{\mathrm{add}}$ -- drawn
solid for i.i.d.\ and dashed for shift, against the nominal $0.95$.
Centre: the absolute budget $w_{\mathrm{int}}+w_{\mathrm{add}}+
w_{\mathrm{obs}}$ stacked for every one of the $37$ delivered cells
($2$ of the $39$ refused), sorted within regime. Right: the same budget
as shares, median over cells. The OR interval is a third of the band
in-distribution and two-thirds under shift.}
\label{fig:Bdecomp}
\end{figure}

\subsection{Coverage under shift, the trivial baseline, and the centre counterfactual}

\textbf{Where the ball geometry earns its keep, predicted before
training.} The $\kappa$-grid saturates in $29/37$ delivered cells, but
not at random: Spearman$(\text{RF CV }R^2,\kappa_\star)=-0.619$,
$p=4.5\!\cdot\!10^{-5}$ over the $37$ cells. The eight cells where it
does not saturate -- $\kappa_\star\in\{1.3,1.7\}$ with a conformal floor
of \emph{exactly} zero, meaning the OR ball already contains every
calibration residual -- are all \texttt{energy} and \texttt{yacht} at
zero injected noise, the two targets with CV $R^2\ge0.965$. The same
split governs whether the OR band's \emph{shape} buys anything: holding
the centre and the mean width fixed and asking whether the OR geometry
covers more than a flat band of the same width would, the skill is
positive on $5/6$ \texttt{energy} and $6/6$ \texttt{yacht} shift and
i.i.d.\ cells against $1/12$ and $3/13$ elsewhere (Fisher exact
$p=0.0039$ and $0.0031$), with a best case of $+0.719$ coverage at
$0.41\times$ the trivial band's width. This is the tabular analogue of
the 1D gap task, and the practical point is that a training-free audit
statistic predicts it.

\textbf{Under shift the ledger changes, but not the way an earlier
revision said it did} (Table~\ref{tab:Bshift}, Fig.~\ref{fig:Bshift},
Fig.~\ref{fig:Btriv}). On the tabular corpus the one thing the OR head
wins universally is coverage: pooled it holds $0.9969$
$[0.9947,0.9984]$ of test points against $0.890$ for split-CP and
$0.891$ for normalized split-CP, and it beats split-CP on coverage on
$5/5$ datasets. It loses the proper score. Against split-CP the OR head
is worse on interval score in $16$ of $18$ delivered shift cells (median
$+2.695$, $p=8.4\!\cdot\!10^{-4}$); blocked on the $10$ distinct splits
it loses $9/10$ ($p=0.0098$). Against normalized split-CP the figures
are $16/18$, median $+2.719$, $p=4.2\!\cdot\!10^{-4}$. Against the
feature-free constant band it loses $12/18$ (median $+2.677$,
$p=0.038$). Repeating all of it on the median-based pointwise interval
score, which is immune to the one catastrophic cell, does not rescue it
($+2.508$ against split-CP in the same sign convention, better in
$1/18$, $p=1.5\!\cdot\!10^{-5}$).
The coverage it does win over the constant band is worth $+0.0075$ in
the median cell and costs $+1.374$ of \emph{mean} half-width (the
median-half-width cost is $+0.366$, $p=0.52$). We therefore
withdraw ``the OR shape covers everything at smaller in-distribution
width'' as a statement about this corpus.

\textbf{The regime is not ``shift''; it is ``shift on a learnable
target''} (Table~\ref{tab:Bshift}). The per-dataset ledger splits
cleanly along the training-free audit statistic. On \texttt{energy} (CV
$R^2=0.965$, and the harshest shift in the corpus at
$d_{\mathrm{M}}=3.04$) split-CP's coverage collapses to $0.784$ while
the OR band holds $1.000$ -- but it does \emph{not} score better there:
its $3.44$, and the no-floor band's $3.20$, both lose to split-CP's
$2.05$ and to normalized split-CP's $1.73$, and what the OR band wins on
\texttt{energy} is coverage alone. Both OR variants do beat the constant
band's $4.03$, but neither is the best of the four arms
Table~\ref{tab:Bshift} tabulates: split-CP is. On \texttt{yacht} (CV $R^2=0.995$)
split-CP covers $0.896$ and the OR band covers $0.971$ at a better
interval score than every conformal arm ($2.73$ and $2.62$ against
$3.24$ and $6.39$), though the two DAK arms themselves score $2.38$ and
$2.41$ there. On the
three datasets with CV $R^2\le0.74$ split-CP keeps $0.83$--$0.96$
coverage and the OR band's mean interval score is $2$--$33\times$ worse
(on within-cell medians, $1.7$--$9.6\times$). That
is the honest scope: the OR shape carries the out-of-distribution signal
where the target is genuinely learnable, and where it is not, a
conformal constant is both cheaper and better.

\textbf{And ``conformal constants are structurally blind'' is the wrong
diagnosis here: the failure is in the centre, not the width.} A
four-way counterfactual over the $20$ shift cells -- the $19$
random-direction cells plus the \texttt{shiftcol} anchor -- gives median coverage
$0.921$ for split-CP (model centre, $q_{\mathrm{cp}}$), $1.000$ for the
model centre with the \emph{constant} band's wider quantile
$q_{\mathrm{triv}}$, $0.830$ for the constant centre with
$q_{\mathrm{cp}}$, and $0.9925$ for the constant band itself; the median
quantiles are $q_{\mathrm{cp}}=1.159$ against $q_{\mathrm{triv}}=2.127$.
Handing the same shift-blind centre a wider constant -- obtainable from
the calibration set alone -- restores coverage to $1.000$. No shape is
required. The OR head does not repair the centre either: its RMSE under
shift is significantly worse than the KL-repaired posterior mean's
(median $+0.104$, better in $4/18$, $p=0.0034$) and no better than
split-CP's ($+0.040$, $7/18$, $p=0.18$). We also report the smallest
constant inflation of split-CP that matches the OR head's coverage in
each shift cell: it is $1.68\times$ in the median, and at that inflation
the shape-free band reaches the OR band's coverage at $1/1.79$ of its
\emph{mean} half-width and is narrower in $16/18$ cells (on the
within-cell median half-width the same comparison is $1/1.20$ and
$14/18$).

\textbf{Two further scope corrections we owe the reader.} First, the
random-direction shift used for $19$ of the $20$ shift cells does not
stress a shift-blind band in the direction one would expect: paired
across the $19$ matched cells, the constant band's coverage \emph{rises}
under shift (median $+0.0216$, higher in $14/19$) while split-CP's falls
($-0.0451$, $p=0.0046$), because the rank split narrows the target
distribution ($y$ standard-deviation ratio below $1$ in $9$ of $10$
shift splits, down to $0.494$; only \texttt{yacht} seed $0$ exceeds
$1$). A regime in which the maximally blind
baseline gets easier is a weak test of blindness, and the coverage
collapses that do occur are dataset-specific rather than generic.
Second, the axis-mode \texttt{shiftcol} cell is a single cell on which
the OR arm refused, so it contributes zero OR observations and is not a
third regime; its headline severity ($0.0075$ of test points inside the
training box) is a definitional artifact of measuring an axis split with
an all-coordinates box, and on the two mode-neutral scales it sits
\emph{inside} the random-direction range. We report it as an anchor cell
and draw nothing from it.

\textbf{The trivial baseline, in the open} (Fig.~\ref{fig:Btriv}).
Corpus-wide and in-distribution the constant-predictor conformal band is
\emph{not} better than the OR head on interval score (median difference
$-0.425$, $11/19$, $p=0.71$) or on median width ($-0.041$, $10/19$,
$p=0.74$), although it does cover less ($-0.034$, $18/19$,
$p=1.9\!\cdot\!10^{-4}$). The earlier claim that the trivial band beats
the OR head is a \texttt{diabetes} effect: it holds $6/6$ there
($-2.799$, $p=0.031$ at the attainable floor) and $3/3$ on
\texttt{california}, and fails on \texttt{concrete}, \texttt{energy} and
\texttt{yacht}. Under shift it does beat the OR head on interval score
in $12/18$ cells ($-2.677$, $p=0.038$), and although it beats split-CP
on coverage in $15/19$ shift cells (median $+0.0693$, $p=0.0016$) it
\emph{loses} to it on interval score in $13/19$ (median $+1.036$,
$p=0.045$) -- which is what Table~\ref{tab:Barms}'s shift block already
shows, \texttt{triv} at $4.35$ against split-CP's $3.82$. We report the
split rather than a pooled headline in either direction. The normalized conformal baseline, meanwhile, is not a
defence of the conformal side: its half-width has a within-cell
coefficient of variation of $0.0144$ (median over cells, IQR
$[0.0096,0.0226]$), it differs from the constant band by a median
$2.0\%$ of $q_{\mathrm{cp}}$, the two make the same in/out decision at
$7369$ of $7395$ test points ($99.65\%$), and pairwise it is
indistinguishable from split-CP on \emph{mean} half-width ($p=0.62$
i.i.d., $p=0.83$ shift; on the within-cell median half-width, $p=0.80$
and $p=0.62$) and on interval score ($p=0.62$, $p=0.52$). It is not
numerically identical -- the centres are, the widths are not -- and the
honest statement is that the learner's own $\hat\sigma(x)$ is so nearly
homoscedastic that normalizing by it changes nothing.

\begin{figure}[t]\centering
\includegraphics[width=\textwidth]{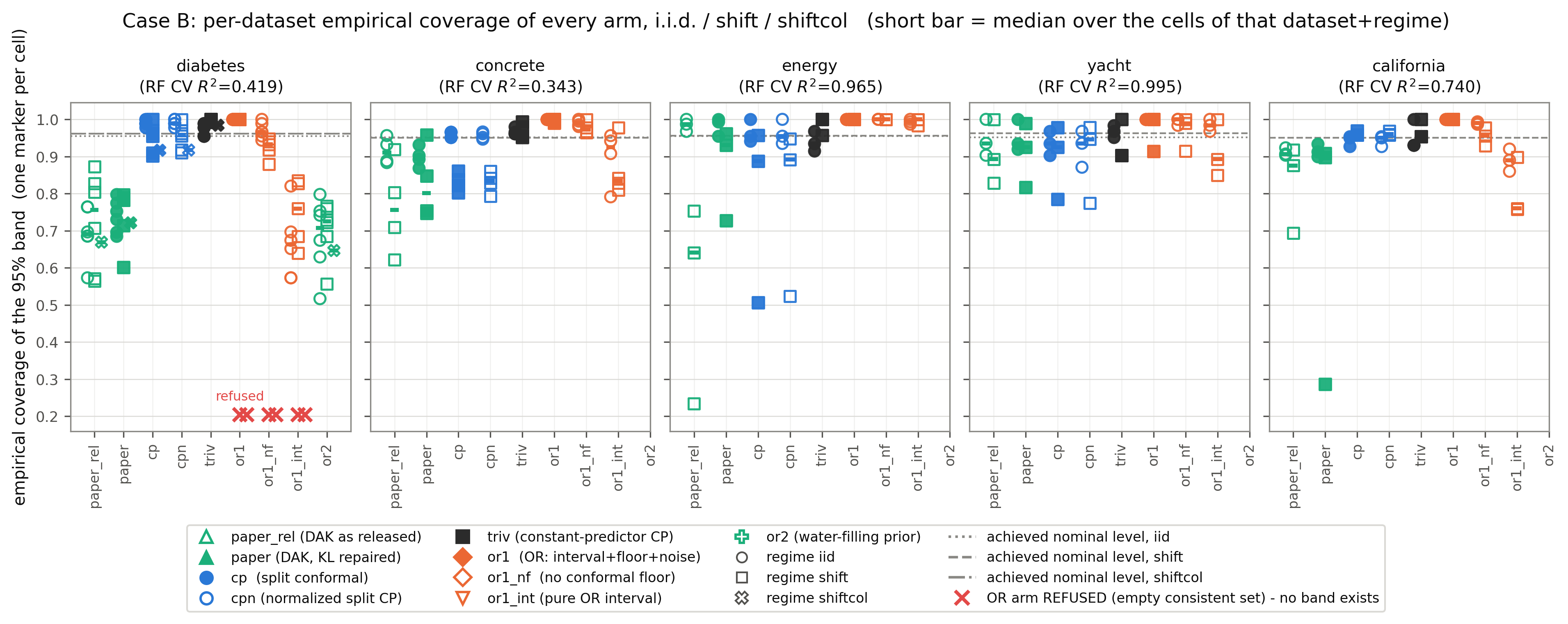}
\caption{Case B coverage per arm and per dataset, all three regimes drawn
(circle i.i.d., square shift, cross \texttt{shiftcol}) with a separate
nominal-level guide for each; every cell is shown, and a cross at the
foot marks the two cells on which the OR arm refused. Coverage is the one
column the OR head wins on all five datasets \emph{under shift};
Table~\ref{tab:Bshift} shows what it costs.}
\label{fig:Bshift}
\end{figure}

\begin{figure}[t]\centering
\includegraphics[width=\textwidth]{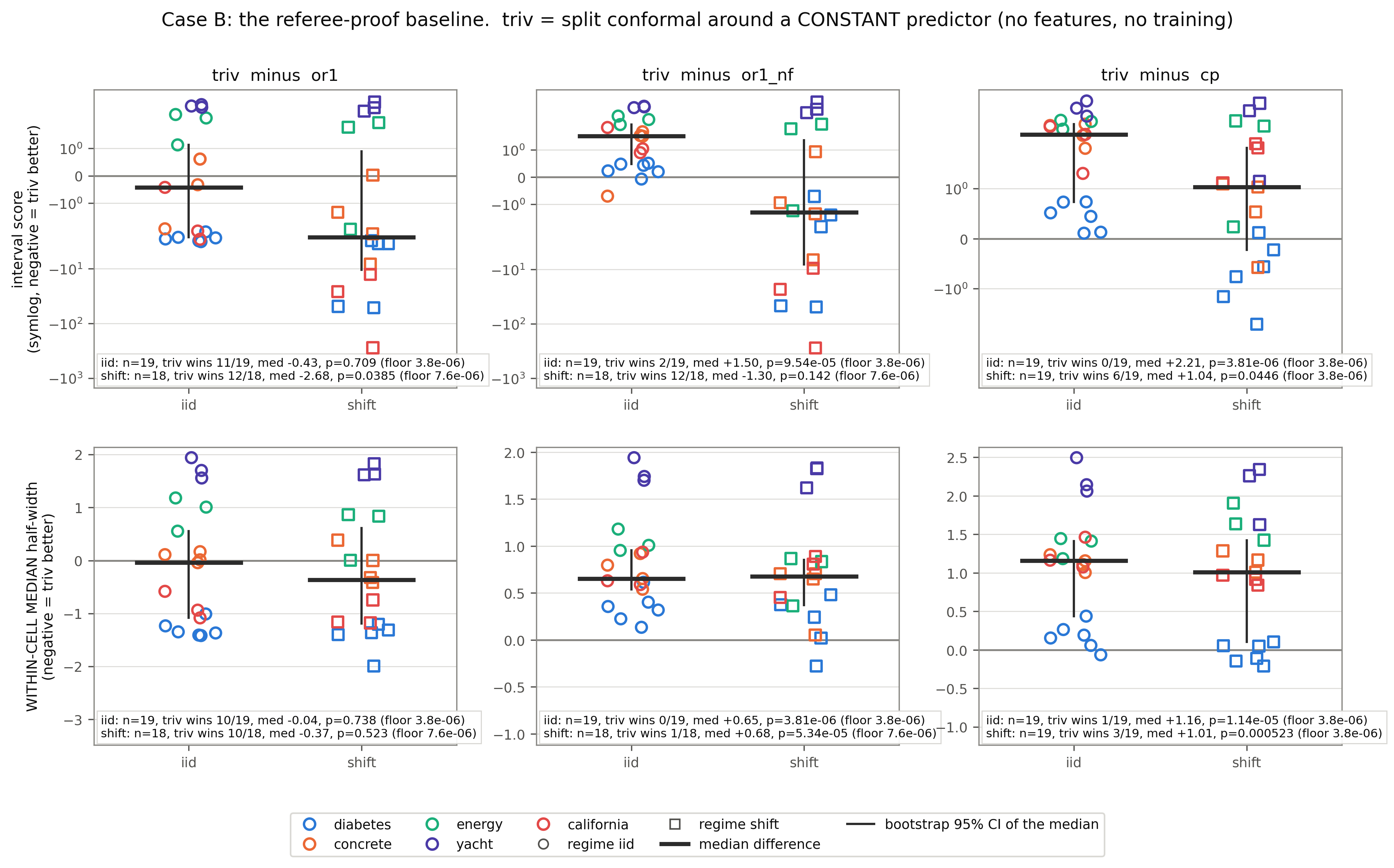}
\caption{The referee-proof baseline. Paired differences of the
constant-predictor conformal band against the OR head, its no-floor
ablation and split-CP, on interval score and on median half-width, in
both regimes; every cell is drawn and coloured by dataset, with the
median difference and its bootstrap $95\%$ CI. A feature-free constant
band beats the delivered OR band on interval score under shift
($12/18$, $p=0.038$) and is indistinguishable from it
in-distribution.}
\label{fig:Btriv}
\end{figure}

\subsection{The per-point tail and the refusals}

\textbf{The tail, and the refusals.} The corrected configuration removes
the width defect but not the worst-case semantics that produced its
symptom. The blow-up census is $0/37$ cells by the study's own criterion
(within-cell median half-width above $20$ on a standardized target),
against $13/38$ before (Fisher exact $p=7.1\!\cdot\!10^{-5}$), and the
mechanism is measurably gone: the median test$/$train feature-norm ratio
is $1.006$ and the median null-space share of the test functional is at
machine zero, where the shipped clamp left a largest-in-cell null-space
share whose median over the $30$ ablation cells is $0.69$ and which is
exactly $1.000$ in $14$ of them (S5 below). What remains is a
per-\emph{point} tail: $95$ of
$7129$ test points ($1.33\%$) in $13$ of $37$ cells carry an OR interval
above $20\,\mathrm{sd}(y_{\mathrm{fit}})$, with a maximum of $61896$ on
a standardized target (Table~\ref{tab:Btail}). It is a shift
phenomenon -- $92$ of those points and $10$ of those cells are under
shift -- and it is carried by the minority of test functionals that
retain a partial null-space component: cells with
$\mathrm{ns}_{\max}>10^{-6}$ have an over-cap point $8/10$ times against
$5/27$ elsewhere (odds ratio $17.6$, $p=1.1\!\cdot\!10^{-3}$;
Spearman$(\mathrm{ns}_{\max},\text{mean half-width})=0.495$,
$p=1.8\!\cdot\!10^{-3}$). Dropping dead columns cannot remove this,
because a rank covariate shift pushes the test functional out of the
training span by construction. The guarded-delivery ledger, in full:
$0$ non-finite half-widths, $0$ negative raw half-widths, $0$
centre-outside firings and $\max(\text{half}/E_\star)=0.965<1$ in every
cell, so Thm.~\ref{thm:local}(d) is never violated -- but $55$ of the
$7129$ delivered test points, spread over $16$ of the $37$ cells, had a
relative primal--dual gap above the $10^{-6}$ tolerance, and on those
points the guard substituted the certified global value (the estimate the
global certificate belongs to, with that certificate as its half-width)
for the conditional dual. The largest recorded relative gap is $0.55$
(\texttt{diabetes/shift/1/0.0}, absolute $169.3$), the next $0.53$
(\texttt{concrete/shift/1/0.25}, absolute $20.0$); $55$ delivered
half-widths are therefore guarded substitutions rather than tight duals,
the same order of magnitude as the $95$ over-cap points above, and we
report them as such. With that caveat the tail is the
exact worst-case certificate doing what it is defined to do at a handful
of out-of-support points, and it is the reason the mean half-width and
the interval score must be reported next to the median. On two cells --
\texttt{diabetes/shift/0/0.5} and the axis-mode \texttt{shiftcol} cell -- the
OR head \emph{refused}: the noise budget implied by the network's own
$\hat\sigma$ was smaller than the best achievable fit-split misfit
(margins $-0.487$ and $-1.553$), the consistent set is empty and no band
exists. That is the correct behaviour and it is why the previous
configuration could report a zero-width band there; a refusal makes the
arm absent, never zero, and every paired comparison above is run
pairwise-complete.

\begin{table}[t]\centering\footnotesize
\caption{Case B blow-up census under the corrected configuration, i.e.\ with
\texttt{FEATNORM=drop} and \texttt{OR\_DELIVERY=fit} and
\texttt{OR\_ETA\_INFLATE=refuse}
on the exact $\eta{=}0$ slice. A \emph{blow-up} is a within-cell median half-width above
$20$ on a standardized target. The census is $0/37$; what remains is a per-\emph{point}
tail in $13/37$ cells, listed here. \texttt{n$>$cap} counts test points whose OR
interval exceeds $20\,\mathrm{sd}(y_{\mathrm{fit}})$; $\mathrm{ns}_{\max}$ is the
largest null-space share of a test functional in the cell.}
\label{tab:Btail}
\setlength{\tabcolsep}{4.5pt}
\begin{tabular}{@{}l r r r r r r r@{}}
\toprule
cell & $n_{\mathrm{test}}$ & n$>$cap & median half & mean half & max half & $\mathrm{ns}_{\max}$ & $\kappa_\star$\\
\midrule
\texttt{california/shift/0/0.0} & 450 & 8 & 0.62 & 139.8 & 61896 & 0.500 & 40 \\
\texttt{diabetes/shift/1/0.0} & 133 & 13 & 1.52 & 25.4 & 830 & 0.370 & 40 \\
\texttt{diabetes/shift/1/0.1} & 133 & 10 & 1.07 & 24.1 & 783 & 0.297 & 40 \\
\texttt{california/shift/1/0.0} & 450 & 36 & 1.28 & 13.2 & 391 & 0.194 & 40 \\
\texttt{california/shift/0/0.25} & 450 & 13 & 0.56 & 6.1 & 516 & 0.337 & 40 \\
\texttt{concrete/shift/0/0.25} & 309 & 8 & 0.62 & 4.7 & 245 & 0.308 & 40 \\
\texttt{concrete/iid/1/0.0} & 206 & 1 & 0.66 & 2.2 & 287 & 0.098 & 40 \\
\texttt{concrete/shift/1/0.25} & 309 & 1 & 1.41 & 2.1 & 37 & 0.000 & 40 \\
\texttt{diabetes/shift/0/0.0} & 133 & 1 & 1.31 & 1.9 & 21 & 0.000 & 40 \\
\texttt{concrete/shift/1/0.0} & 309 & 1 & 0.78 & 1.9 & 184 & 0.055 & 40 \\
\texttt{diabetes/shift/0/0.1} & 133 & 1 & 0.86 & 1.3 & 26 & 0.000 & 40 \\
\texttt{california/iid/0/0.25} & 300 & 1 & 0.83 & 1.2 & 20 & 0.000 & 40 \\
\texttt{concrete/iid/0/0.25} & 206 & 1 & 0.48 & 0.7 & 27 & 0.000 & 40 \\
\midrule
\multicolumn{8}{@{}l}{all 37 delivered cells: 95/7129 test points over the cap (1.33\%), 13/37 cells}\\
\bottomrule
\end{tabular}
\end{table}

\subsection{S5: the 2x2 width-defect ablation}

\paragraph{Which defect produced the blow-ups (S5).}
Two implementation defects were candidates: the feature-scale clamp
(\texttt{Phi.std(0).clamp\_min(1e-6)}, which divides a column whose training
standard deviation is $\sim\!10^{-14}$ by $10^{-6}$ instead of dropping it) and
delivery from all training rows rather than the fit split. The 2x2 ablation
runs both factors on one trained model per cell over 30 shift cells, so the
comparison is fully paired and isolates the head (Table~\ref{tab:Babl},
Table~\ref{tab:Battr}, Fig.~\ref{fig:Babl}). The answer is unambiguous
and it is not the answer the shipped diagnostics suggested.

\emph{The clamp is the width defect, and it acts on the tail, not on the
centre of the width distribution.} Holding delivery at the fit split, the
clamp multiplies the within-cell \emph{mean} half-width by
$235$ (95\% bootstrap CI of the median
[1.13, $2.44\!\cdot\!10^{3}$], Wilcoxon
$p=9.8\!\cdot\!10^{-7}$, $n=26$ paired cells, attainable floor
$3\!\cdot\!10^{-8}$) and the within-cell \emph{maximum} half-width by
$2681$ ($p=7\!\cdot\!10^{-5}$), while the within-cell
\emph{median} half-width moves by a factor of only
$1.022$ [1.006, 1.041]
-- statistically detectable ($p=2.5\!\cdot\!10^{-5}$) and scientifically
negligible. The interval score, which is the loss a user pays, rises by a
factor $111$ ($p=2.1\!\cdot\!10^{-6}$). Only the clamp variants
ever produce a half-width above $10^6$: 10 of 39 delivered clamp cells against
0 of 37 delivered drop cells, and every discordant pair goes the same way.
Tested on the delivered pairs only -- the population those counts come
from -- McNemar's exact test gives $7/0$ discordant cells and
$p=0.0156$ at the fit split and $2/0$ and $p=0.50$ on all rows, the
latter because all-rows delivery refuses on $18$--$19$ of $30$ cells and
leaves only $11$ paired deliveries. Counting instead every one of the
$30$ cells, including the refused ones whose recorded \texttt{half\_max}
is the ${\sim}10^{9}$ stand-in that Sec.~\ref{sec:caseB} insists must not
be read as a width, the discordant counts are $9/0$ and $15/0$ and the
$p$-values $0.0039$ and $6.1\!\cdot\!10^{-5}$, both at the attainable
floor. The direction is the same in every version; only the delivered-pair
test is a statement about a delivered width.

\emph{The effect is strongly dataset-dependent, and the wide CI says so.}
Per dataset, at the fit split, the clamp multiplies the mean half-width by
$2307$ on concrete ($6/6$ cells, $p=0.031$ at the attainable floor), $2946$ on
energy ($6/6$, $p=0.031$), $1.13$ on diabetes ($8/9$, $p=0.027$) and $1.27$ on
yacht ($4/5$, $p=0.125$). The direction is the same everywhere -- the clamp is
never narrower -- but the magnitude spans three orders of magnitude across
datasets, which is why the pooled bootstrap CI runs from $1.13$ to
$2.4\!\cdot\!10^{3}$ and why a single pooled factor should be quoted with its
CI, never alone.

\emph{The mechanism is null-space dimension, measured directly.} Dropping the
dead columns leaves $\mathrm{nullity}(\Phi_{\rm fit})$ at a median of
4 [3, 5]; clamping them keeps them in the ambient space
and the nullity becomes 20
[18, 23] of 56 columns at
the same numerical rank. On that null space the tight interval is bounded only
by the ball radius, so a test functional that lands in it inherits the whole
radius. In 14 of 30
clamp cells at least one test functional lies \emph{entirely} in
$\mathrm{null}(\Phi_{\rm fit})$ (recorded null-space share $>0.999$; 15 of 30
exceed $0.5$), against 0 of 30 drop cells (Clopper--Pearson 95\%
$[0.000,0.116]$), whose largest observed share is $0.42$. Within each variant
the null-space share and the $\log_{10}$ maximum half-width move together
(Spearman $\rho=+0.84$, $p=4.7\cdot10^{-9}$ at clamp / fit split). The drop
rule removes the catastrophic leakage, not all of it: 11 of 30 drop cells still
show a share above $0.05$, which is why the production head's largest
half-width still reaches $26$ $[4.1,253]$ per cell and 15 of its 26 delivered
cells flag at least one point above the $20\,\mathrm{sd}(y_{\rm fit})$ cap.

\emph{Delivering on all rows is a validity defect, not a width defect.} On
the same cells the delivery axis moves the mean half-width by a factor
$0.945$ [0.621, 0.984]
and the maximum by $0.993$; the interaction is
$1.053$ [0.993, 1.278]
($p=0.083$) and is not detectable on any scale. What all-rows
delivery does instead is empty the consistent set. Adding the held-out
calibration rows to the conditioning set raises the least-squares misfit faster
than the tube radius $\eta\propto\sqrt{n}$ grows: on the 15 cells that
refuse under all-rows delivery but not under the fit split, the fit-split ratio
$\eta/\min\|\Phi\theta-y\|$ is $1.086$ $[1.055,1.179]$ while the all-rows
ratio is $0.777$ $[0.692,0.932]$. The refusal rate goes from 3--4 of 30 cells
at the fit split to 18--19 of 30 on all rows (McNemar exact
$p=6.1\!\cdot\!10^{-5}$, 15 discordant cells, all in the same
direction). The shipped code never saw this, because it inflated $\eta$ to
$1.02\times$ the achieved misfit, which makes the tube feasible by
construction.

\emph{A refusal with a tiny half-width is the signature, not a narrow band.}
The shipped configuration (clamp, all rows) refuses on
18 of 30 cells, and on
those cells its recorded median half-width is
$3.97\!\cdot\!10^{-6}$ $[1.21\!\cdot\!10^{-6}, 6.78\!\cdot\!10^{-6}]$ while its recorded \emph{maximum}
half-width is $1.6\!\cdot\!10^{9}$ and its delivered mean band width
is $1.36\!\cdot\!10^{7}$ at a recorded coverage of
$0.970$. Both numbers come from the same object: on an
empty tube the head hands back the exact equality slice at
$\eta=\min\|\Phi\theta-y\|$ with the radius lifted to
$\|\Phi^{+}y\|(1+10^{-9})$, whose half-width is proportional to the
null-space component of the test functional. Under the clamp
$\|\Phi^{+}y\|$ has median $1.9\cdot10^{10}$, so the stand-in is
$10^{-6}$ where the functional is in the row space and $10^{12}$ where it
is not. Under the drop rule the same refusals occur with
$\|\Phi^{+}y\|=0.83$ and the stand-in is uniformly $\sim\!10^{-14}$.
Averaging either into a width is the laundering the refusal test exists to
prevent; the correct report is the refusal rate.

\begin{table}[t]\centering\small
\caption{\textbf{Case B / S5: the 2x2 width-defect ablation.} Thirty
shift cells (4 datasets $\times$ 3 seeds $\times$ 2--4 noise levels), four
head variants per cell built on \emph{one} trained model, so the comparison
isolates the head. \textsc{featnorm} $\in$ \{clamp $10^{-6}$ as shipped, drop
dead columns\}; \textsc{delivery} $\in$ \{all rows as shipped, fit split only\}.
A \emph{refused} delivery has an empty consistent set and is not a width
measurement: refusals are reported as a rate and excluded from the width
quantiles. Half-widths are $w_{\mathrm{int}}$ on a standardized target;
``blow-up'' is the notebook's own criterion at threshold $20$.}
\label{tab:Babl}
\setlength{\tabcolsep}{3pt}
\begin{tabular}{@{}llrrrrrrr@{}}
\toprule
 & & refused & \multicolumn{3}{c}{median [IQR] over the delivered cells} & blow-up & $ns_{\max}$ & median\\
\cmidrule(lr){4-6}
featnorm & delivery & /30 & median $w$ & mean $w$ & max $w$ & (mean) & $>0.999$ & IS\\
\midrule
clamp $10^{-6}$\,(shipped) & all rows & 18 & 0.641 & 266 & $1.17\!\cdot\!10^{4}$ & 0.58 & 15/30 & 533\\
clamp $10^{-6}$ & fit split & 3 & 0.863 & 87.3 & $8.06\!\cdot\!10^{3}$ & 0.56 & 14/30 & 903\\
drop dead & all rows & 19 & 0.588 & 0.841 & 2.53 & 0.09 & 0/30 & 2.91\\
drop dead\,(v5) & fit split & 4 & 0.818 & 1.9 & 26.2 & 0.12 & 0/30 & 6.85\\
\bottomrule
\end{tabular}
\end{table}

\begin{table}[t]\centering\small
\caption{\textbf{Case B / S5: attribution of the width defect.} Each row is a
paired comparison on $\log_{10}$ of the named scale, over the cells where both
variants delivered; the ratio is $10^{\text{median difference}}$ with the 95\%
bootstrap CI of the median, the Wilcoxon signed-rank $p$, the number of paired
cells, and the smallest $p$ attainable at that $n$. \textsc{featnorm} rows
are clamp$/$drop at the delivery named after the comma; \textsc{delivery}
rows are all rows$/$fit split at the feature rule named after the comma;
\textsc{interaction} is (clamp all$/$fit)$/$(drop all$/$fit). The clamp defect does not
live in the median half-width; it lives in the mean, the maximum and the
interval score. The delivery axis is a factor ${\approx}1$ on every width
scale: its cost is validity, not width.}
\label{tab:Battr}
\setlength{\tabcolsep}{4pt}
\begin{tabular}{@{}llrlrrr@{}}
\toprule
scale & effect & ratio & 95\% CI of the median & $p$ & $n$ & floor\\
\midrule
median $w_{\mathrm{int}}$ & FEATNORM, fit split & $1.02$ & $[1.01,\,1.04]$ & $2.5\!\cdot\!10^{-5}$ & 26 & $3\!\cdot\!10^{-8}$\\
 & FEATNORM, all rows & $1.02$ & $[1,\,1.09]$ & $0.042$ & 11 & $9.8\!\cdot\!10^{-4}$\\
 & DELIVERY, drop dead & $0.817$ & $[0.592,\,0.978]$ & $9.8\!\cdot\!10^{-4}$ & 11 & $9.8\!\cdot\!10^{-4}$\\
 & DELIVERY, clamp & $0.893$ & $[0.634,\,0.978]$ & $0.027$ & 12 & $4.9\!\cdot\!10^{-4}$\\
 & INTERACTION & $1$ & $[1,\,1.02]$ & $0.32$ & 11 & $9.8\!\cdot\!10^{-4}$\\
\addlinespace
mean $w_{\mathrm{int}}$ & FEATNORM, fit split & $235$ & $[1.13,\,2442]$ & $9.8\!\cdot\!10^{-7}$ & 26 & $3\!\cdot\!10^{-8}$\\
 & FEATNORM, all rows & $133$ & $[1.46,\,3743]$ & $2.9\!\cdot\!10^{-3}$ & 11 & $9.8\!\cdot\!10^{-4}$\\
 & DELIVERY, drop dead & $0.945$ & $[0.621,\,0.984]$ & $0.042$ & 11 & $9.8\!\cdot\!10^{-4}$\\
 & DELIVERY, clamp & $0.988$ & $[0.849,\,1]$ & $0.042$ & 12 & $4.9\!\cdot\!10^{-4}$\\
 & INTERACTION & $1.05$ & $[0.993,\,1.28]$ & $0.083$ & 11 & $9.8\!\cdot\!10^{-4}$\\
\addlinespace
max $w_{\mathrm{int}}$ & FEATNORM, fit split & $2681$ & $[1,\,7082]$ & $7\!\cdot\!10^{-5}$ & 26 & $3\!\cdot\!10^{-8}$\\
 & FEATNORM, all rows & $2950$ & $[1,\,7832]$ & $0.014$ & 11 & $9.8\!\cdot\!10^{-4}$\\
 & DELIVERY, drop dead & $0.993$ & $[0.898,\,1.05]$ & $0.41$ & 11 & $9.8\!\cdot\!10^{-4}$\\
 & DELIVERY, clamp & $0.993$ & $[0.849,\,1]$ & $0.064$ & 12 & $4.9\!\cdot\!10^{-4}$\\
 & INTERACTION & $0.998$ & $[0.911,\,1]$ & $0.37$ & 11 & $9.8\!\cdot\!10^{-4}$\\
\addlinespace
interval score & FEATNORM, fit split & $111$ & $[1.07,\,1915]$ & $2.1\!\cdot\!10^{-6}$ & 26 & $3\!\cdot\!10^{-8}$\\
 & FEATNORM, all rows & $53.5$ & $[1.13,\,3014]$ & $2.9\!\cdot\!10^{-3}$ & 11 & $9.8\!\cdot\!10^{-4}$\\
 & DELIVERY, drop dead & $0.955$ & $[0.816,\,0.989]$ & $0.032$ & 11 & $9.8\!\cdot\!10^{-4}$\\
 & DELIVERY, clamp & $0.991$ & $[0.902,\,1]$ & $0.052$ & 12 & $4.9\!\cdot\!10^{-4}$\\
 & INTERACTION & $1$ & $[0.995,\,1.13]$ & $0.15$ & 11 & $9.8\!\cdot\!10^{-4}$\\
\addlinespace
\bottomrule
\end{tabular}
\end{table}

\begin{figure}[t]\centering
\includegraphics[width=\textwidth]{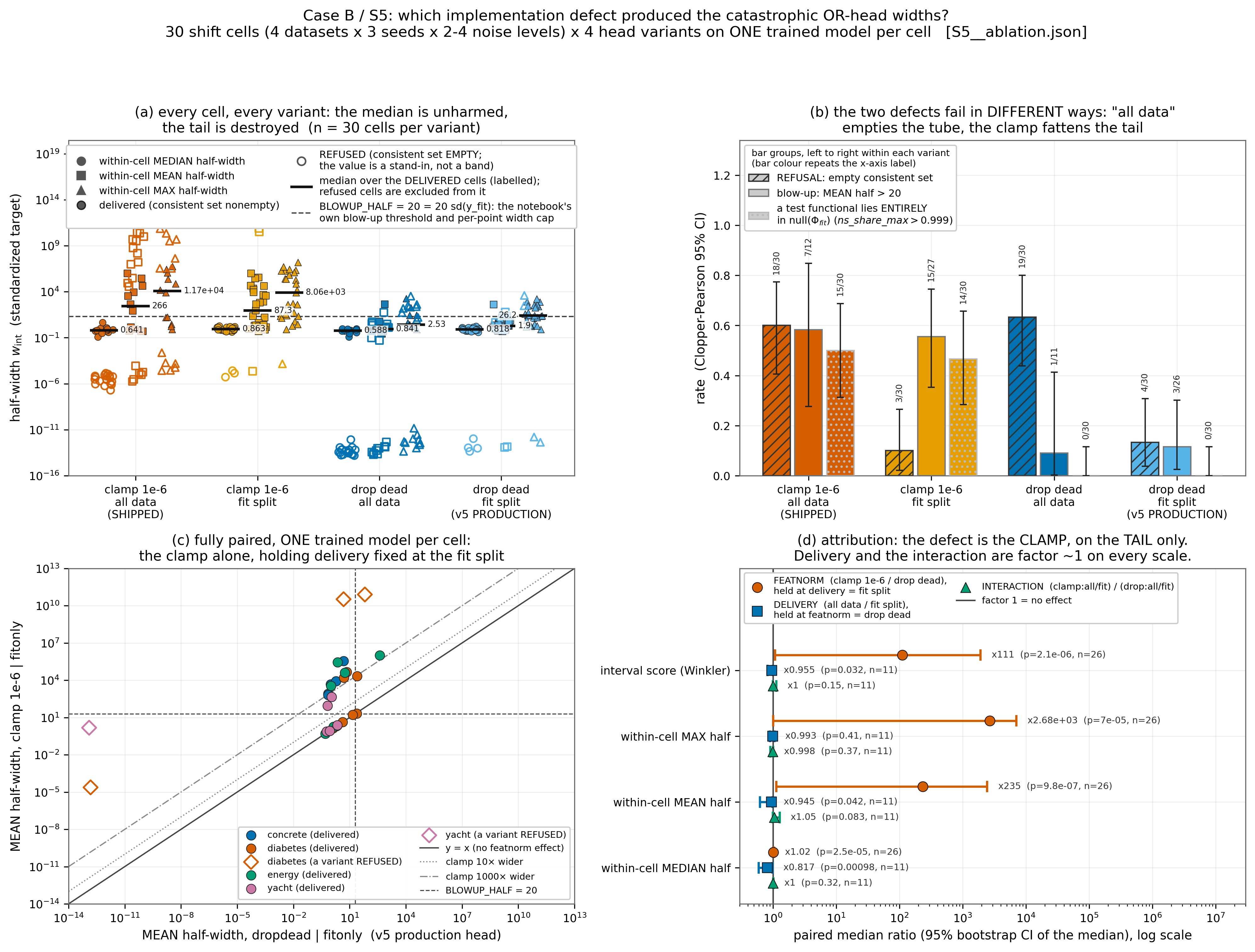}
\caption{The $2\times2$ width-defect ablation on $30$ shift cells, four
head variants per cell built on one trained model. The feature-scale
clamp is the width defect and it acts on the tail: it multiplies the
within-cell mean half-width by $235$ and the maximum by $2681$ while
moving the median by $1.02$. Delivering on all training rows is a
\emph{validity} defect, not a width defect: it moves the mean half-width
by a factor $0.95$, the maximum by $0.99$ and the within-cell median by
$0.82$ -- never by an order of magnitude on any scale, against the
clamp's three -- and instead empties the consistent set, taking
the refusal rate from $3$--$4$ of $30$ to $18$--$19$ of $30$.}
\label{fig:Babl}
\end{figure}

\subsection{S6: the 1D design-gap task at ten seeds}

\paragraph{The gap task at ten seeds (S6).} The shift evidence previously
rested on a single run (Table~\ref{tab:Bgap}, Fig.~\ref{fig:B},
Fig.~\ref{fig:Bregions}). Ten seeds re-draw the noise and the $37/13$ fit/
calibration split on a fixed design; the OR arms were recorded on all ten (no
refusal), the achieved split-conformal level is $13/14=0.9286$ on all ten, and
the conformal add-on chosen at $\kappa_\star$ is exactly zero on all ten,
which means the OR ball already contains every calibration residual and the
conformal repair costs nothing. $\kappa_\star$ is not $1.3$: it takes the
values $\{1.3,1.7,1.7,1.7,2.5,2.5,2.5,4.0,6.0,20.0\}$, median
$2.5$ $[1.7,3.6]$, unsaturated on $10/10$ seeds. The reported $1.3$ is the
minimum of that set.

\emph{The coverage claim survives; the width claim does not.} The OR band's
$1.00/1.00/1.00$ survives as a median: coverage of the truth is $1.000$ in all
three regions, at or above the nominal level on $10/10$ (gap), $10/10$
(in-distribution) and $8/10$ (extrapolation) seeds, with worst-seed values
$0.932$, $1.000$ and $0.750$. No other arm reaches the nominal level in
extrapolation on a single seed; in the design gap only split-CP and
normalized split-CP do, on $2$ of $10$ seeds each
(Table~\ref{tab:Bgap}). The widths do not survive. The reported
$1.08/0.49/2.08$ becomes
$2.80$ $[1.79, 3.75]$, $0.61$ $[0.57, 0.67]$ and
$5.16$ $[4.45, 11.06]$ -- the
in-distribution $0.49$ lies below the ten-seed minimum of $0.553$. The
constant split-CP half-width is not $0.84$ either: it is
$0.772$ $[0.578, 0.951]$, range $0.410$--$1.239$.

\emph{The $1.7\times$ does not hold as stated.} The in-distribution width
ratio split-CP / OR is $1.21$ $[0.92,1.49]$, range
$0.67$--$2.15$, with the OR band narrower on
only $6/10$ seeds (Wilcoxon $p=0.16$,
attainable floor $0.00195$): the effect is not separable from zero at this
seed depth. Two restatements do hold, both at the attainable floor. Against
the OR certificate itself, $w_{\rm int}$, the factor is
$2.67$ $[2.16,3.28]$ on
$10/10$ seeds ($p=0.002$); and
against the only baseline that actually attains the nominal level
in-distribution -- split conformal around a constant predictor, which covers
$0.977$ $[0.963,0.985]$ on $9/10$ seeds -- the OR band is
$2.99$ $[2.85,3.12]$ times
narrower on $10/10$ seeds ($p=0.002$).
The gap between the two versions of the claim is the observation term: the
delivered band is $w_{\rm int}+1.96\hat\sigma$, the learner's noise
estimate overshoots the known $\sigma=0.1$ by a factor
$1.76$ $[1.63, 1.84]$, and $1.96\hat\sigma=0.345$ is larger than
the certificate it is added to ($w_{\rm int}=0.249$ in-distribution).
Substituting the known $\sigma$ brings the factor back to
$1.61$ $[1.20,1.90]$, OR narrower on $8/10$ seeds ($p=0.020$), with the
paper's $1.71$ inside that interquartile range.
We therefore drop the $1.7\times$ claim in the form it was made and keep the two
restatements that hold at the attainable floor. The $1.7\times$ is therefore a
property of the OR geometry \emph{plus an oracle noise level}, not of the
delivered band.

\emph{What replaces it: the interval score.} On the decision-relevant loss
the OR band dominates every baseline in-distribution and in extrapolation. Its
miss-penalty is zero on the median seed in all three regions, and it pays any
penalty at all on $0/10$ seeds in-distribution, $1/10$ in the gap and $2/10$ in
extrapolation, against $8$--$10$ of $10$ for every conformal arm in every
region (the two lowest counts are split-CP's and normalized split-CP's
$8/10$ in the design gap; every other arm-by-region combination is $9$
or $10$ of $10$).
In-distribution the interval score is $1.23$ $[1.13,1.35]$ for the OR band
against $3.69$ (constant-predictor split-CP), $6.94$ (split-CP), $6.39$
(normalized split-CP) and $8.82$ (DAK), each $10/10$ seeds at
$p=0.00195$. In extrapolation it is $11.4$ $[10.0,22.1]$ against $30.7$, $45.0$, $45.2$ and $48.9$, each $9/10$ seeds ($p=0.0059$ against split-CP). In the
design gap alone the comparison is a tie ($5.61$ against $8.03$ for split-CP, $6/10$ seeds, $p=0.77$): there the OR band buys its coverage by being
$3.6\times$ wider than split-CP's constant band as a ratio of the two
ten-seed medians of Table~\ref{tab:Bgap} ($2.804$ against $0.772$; the
median of the ten per-seed ratios is $4.2\times$), and that width costs
about what the conformal band's misses cost. The paper should say so; ``wide
exactly where ignorance lives'' is the right description of the shape, but in
the gap it is not yet a win on loss.

\begin{table}[t]\centering\small
\caption{\textbf{Case B / S6: the 1D design-gap task at ten seeds.} Median
[IQR] over the ten seeds; every arm, every region. Nominal split-conformal
level $13/14=0.9286$, achieved on 10/10 seeds. The OR arms were recorded on
10/10 seeds (no refusals); \texttt{or1\_nf} is omitted because the conformal
add-on $w_{\mathrm{add}}$ is exactly zero on every seed, which makes it
identical to \texttt{or1}. ``$\geq$nom.'' counts seeds whose coverage of the
truth reaches the nominal level. Interval score at $\alpha=0.05$ against the
noiseless truth.}
\label{tab:Bgap}
\setlength{\tabcolsep}{4pt}
\begin{tabular}{@{}lrrrr@{}}
\toprule
arm & coverage & $\geq$nom. & mean half-width & interval score\\
\midrule
\multicolumn{5}{l}{\emph{region: design gap $|x|<1$}}\\
  DAK $1.96\hat\sigma(x)$ & $0.480$ [0.395, 0.588] & 0/10 & $0.596$ [0.589, 0.633] & $7.04$ [4.23, 18.6]\\
  split-CP (constant) & $0.473$ [0.436, 0.689] & 2/10 & $0.772$ [0.578, 0.951] & $8.03$ [3.51, 12.9]\\
  normalized split-CP & $0.473$ [0.426, 0.689] & 2/10 & $0.772$ [0.634, 0.967] & $7.45$ [3.46, 12.5]\\
  constant-predictor split-CP & $0.635$ [0.625, 0.662] & 0/10 & $1.809$ [1.762, 1.844] & $7.29$ [6.31, 7.62]\\
  OR band \texttt{or1} & $1.000$ [1.000, 1.000] & 10/10 & $2.804$ [1.786, 3.747] & $5.61$ [3.57, 7.49]\\
  OR certificate \texttt{or1\_int} & $1.000$ [1.000, 1.000] & 9/10 & $2.469$ [1.465, 3.372] & $5.37$ [4.14, 7.49]\\
\multicolumn{5}{l}{\emph{region: in-distribution $1\leq|x|\leq3$}}\\
  DAK $1.96\hat\sigma(x)$ & $0.530$ [0.493, 0.628] & 0/10 & $0.600$ [0.596, 0.631] & $8.82$ [6.83, 11.9]\\
  split-CP (constant) & $0.620$ [0.497, 0.765] & 2/10 & $0.772$ [0.578, 0.951] & $6.94$ [3.95, 11.9]\\
  normalized split-CP & $0.643$ [0.517, 0.753] & 1/10 & $0.788$ [0.622, 0.977] & $6.39$ [3.94, 11.3]\\
  constant-predictor split-CP & $0.977$ [0.963, 0.985] & 9/10 & $1.809$ [1.762, 1.844] & $3.69$ [3.62, 3.77]\\
  OR band \texttt{or1} & $1.000$ [1.000, 1.000] & 10/10 & $0.614$ [0.566, 0.674] & $1.23$ [1.13, 1.35]\\
  OR certificate \texttt{or1\_int} & $1.000$ [1.000, 1.000] & 10/10 & $0.249$ [0.244, 0.328] & $0.499$ [0.488, 0.656]\\
\multicolumn{5}{l}{\emph{region: extrapolation $|x|>3$}}\\
  DAK $1.96\hat\sigma(x)$ & $0.421$ [0.398, 0.444] & 0/10 & $0.609$ [0.595, 0.641] & $48.9$ [45, 49.8]\\
  split-CP (constant) & $0.454$ [0.411, 0.497] & 0/10 & $0.772$ [0.578, 0.951] & $45$ [41.8, 49]\\
  normalized split-CP & $0.467$ [0.424, 0.487] & 0/10 & $0.789$ [0.614, 0.985] & $45.2$ [41.8, 47.8]\\
  constant-predictor split-CP & $0.395$ [0.332, 0.447] & 0/10 & $1.809$ [1.762, 1.844] & $30.7$ [28.5, 32.5]\\
  OR band \texttt{or1} & $1.000$ [1.000, 1.000] & 8/10 & $5.163$ [4.453, 11.059] & $11.4$ [10, 22.1]\\
  OR certificate \texttt{or1\_int} & $1.000$ [0.954, 1.000] & 8/10 & $4.818$ [4.097, 10.682] & $12.8$ [9.47, 21.8]\\
\bottomrule
\end{tabular}
\end{table}

\begin{table}[t]\centering\small
\caption{\textbf{Case B / S6: which single-seed number survives ten seeds.}
Left, the single-seed value from the earlier corpus; right, the
ten-seed distribution of the same quantity recomputed from the corrected v5
corpus. ``in IQR'' means the single-seed value lies inside the interquartile
range of the ten seeds; ``in range'' means it lies inside the observed
min--max but outside the IQR; ``outside'' means it lies outside the observed
range altogether. This is the second of the paper's two deliberately cross-corpus
tables: the left column is one run of the earlier corpus and the right
column is this run.}
\label{tab:Bgapsurv}
\begin{tabular}{llrrrl}
\toprule
quantity & region & single seed & ten-seed median [IQR] & min--max & verdict\\
\midrule
split-CP coverage & gap & $0.58$ & $0.473$ [0.436, 0.689] & 0.243--1.000 & in IQR\\
split-CP coverage & ind & $0.70$ & $0.620$ [0.497, 0.765] & 0.293--1.000 & in IQR\\
split-CP coverage & ext & $0.43$ & $0.454$ [0.411, 0.497] & 0.395--0.500 & in IQR\\
norm. split-CP coverage & gap & $0.61$ & $0.473$ [0.426, 0.689] & 0.270--1.000 & in IQR\\
norm. split-CP coverage & ind & $0.72$ & $0.643$ [0.517, 0.753] & 0.313--1.000 & in IQR\\
norm. split-CP coverage & ext & $0.42$ & $0.467$ [0.424, 0.487] & 0.408--0.539 & in range\\
DAK coverage & gap & $0.46$ & $0.480$ [0.395, 0.588] & 0.243--0.676 & in IQR\\
DAK coverage & ind & $0.48$ & $0.530$ [0.493, 0.628] & 0.453--0.653 & in range\\
DAK coverage & ext & $0.30$ & $0.421$ [0.398, 0.444] & 0.289--0.474 & in range\\
OR coverage & gap & $1.00$ & $1.000$ [1.000, 1.000] & 0.932--1.000 & in IQR\\
OR coverage & ind & $1.00$ & $1.000$ [1.000, 1.000] & 1.000--1.000 & in IQR\\
OR coverage & ext & $1.00$ & $1.000$ [1.000, 1.000] & 0.750--1.000 & in IQR\\
split-CP width $q$ & gap & $0.84$ & $0.772$ [0.578, 0.951] & 0.410--1.239 & in IQR\\
split-CP width $q$ & ind & $0.84$ & $0.772$ [0.578, 0.951] & 0.410--1.239 & in IQR\\
split-CP width $q$ & ext & $0.84$ & $0.772$ [0.578, 0.951] & 0.410--1.239 & in IQR\\
OR width & gap & $1.08$ & $2.804$ [1.786, 3.747] & 1.000--11.757 & in range\\
OR width & ind & $0.49$ & $0.614$ [0.566, 0.674] & 0.553--0.849 & \textbf{outside}\\
OR width & ext & $2.08$ & $5.163$ [4.453, 11.059] & 2.025--33.718 & in range\\
\bottomrule
\end{tabular}
\end{table}

\begin{figure}[t]\centering
\includegraphics[width=\textwidth]{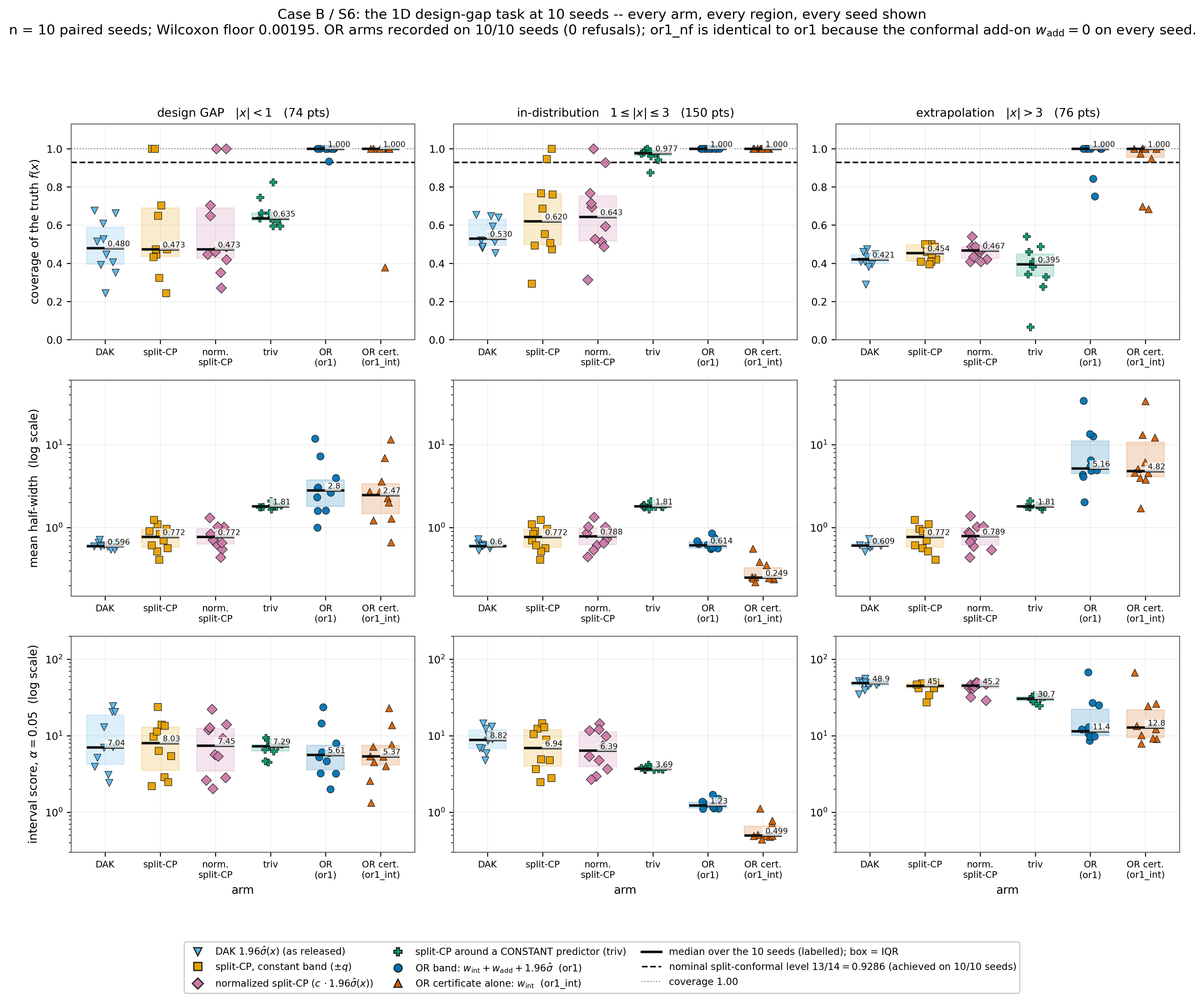}
\caption{Case B gap task by region over ten seeds: coverage of the truth
and interval score for every arm in the design gap, in-distribution and
in extrapolation. Every seed is plotted. The OR band is the only arm that
reaches the nominal level in extrapolation on any seed; in the gap
split-CP and normalized split-CP reach it on $2$ of $10$ seeds each,
drawn at coverage $1.000$ in the top-left panel. The OR band
dominates on interval score in-distribution and in extrapolation while
tying in the gap.}
\label{fig:Bregions}
\end{figure}

\clearpage

\section{Case C: full results (ALAS-BO)}\label{app:caseC}

\subsection{Regret, per benchmark and paired}

ALAS-BO \citep{ALAS2026} couples a learnable $\alpha$-stable mixture
kernel with UCB ($\beta{=}0.2$) or EI \citep{Srinivas2010,%
ChowdhuryGopalan2017}. Deviations (all arms): pinned
\texttt{botorch 0.18.1}; mixtures $Q{=}3$ -- a $Q{=}7$ fidelity check on
Branin (2 seeds, earlier corpus) preserves method ordering (EI
$0.17$--$0.21$, OR-LCB $0.13$--$0.39$). Benchmarks: Branin-2D,
Hartmann-6, Griewank-5D, Ackley-5D and Levy-10D; the run record pins
the dimension of Levy and Ackley explicitly, and Griewank's $d=5$ is
confirmed empirically -- its median nearest-neighbour distance at
$t=0$ is bitwise identical to Ackley's in all $32$ shared (arm, seed)
comparisons. The corpus is $167$ trajectories from a single session,
all of length $19$, $6$ or $7$ seeds per (benchmark, arm): $7$ for
Branin, Griewank and Hartmann-6 throughout, $6$ for Levy and for the
three Ackley $\kappa$ arms. Every paired test below pairs only on
shared seeds. All $80$ (benchmark, arm, seed) trajectories this
corpus shares with an earlier CPU corpus are \emph{bitwise identical}
to it at all $19$ iterations, so the new seeds extend rather than
overturn the earlier entries -- the one place in this paper where a
cross-machine reproduction is exact. Writing $m(x)$ for the posterior mean (the BO
symbol $\mu$ is avoided to prevent a clash with the dual parameter),
\textbf{OR-LCB} picks $\argmin_x[m(x)-c_\star(x)]$ with $c_\star$ the
information radius ($\eps=\rho_{\mathrm{oc}}$, $\eta=\hat\sigma\sqrt t$);
the direct floor-law test \textbf{$\kappa{\times}$floor} uses
$\kappa c_\star(x)$, $\kappa\in\{2,5\}$.

\textbf{Results} (Table~\ref{tab:C}, Table~\ref{tab:CpairEI},
Fig.~\ref{fig:C}). Paired against
EI on shared seeds, OR-LCB's final-regret median differences are
$+0.063$ (Branin, $n{=}7$, $p{=}0.078$), $+0.211$ (Hartmann-6, $n{=}7$,
$p{=}0.063$), $+5.07$ (Griewank-5D, $n{=}7$, $p{=}0.219$), $+0.595$
(Ackley-5D, $n{=}6$, $p{=}0.313$) and $0.000$ (Levy-10D, $n{=}6$,
$p{=}1$, five of six seeds tied exactly); pooling the per-benchmark
normalized paired $\log_{10}$ ratios over all $33$ complete blocks,
OR-LCB is worse than EI by $+0.075$ dex ($p{=}0.0011$). EI is the
stronger default. OR-LCB still avoids UCB's Branin blow-up (worst $0.34$
against $3.88$), though EI's worst there ($0.19$) is smaller still. At
these seed counts a per-benchmark test can for the first time reach
significance -- the attainable minimum two-sided Wilcoxon $p$ is
$0.0156$ at $n{=}7$ and $0.0313$ at $n{=}6$, against $0.0625$ at the
five seeds of an earlier revision, where no per-benchmark test could
clear $0.05$ by construction -- and exactly one final-regret contrast
does: $2{\times}$floor is worse than EI on Branin (median $+0.122$,
$[+0.026,+0.282]$, six of seven seeds, $p{=}0.031$). A Friedman test
over the $33$ blocks separates the arms ($\chi^2{=}19.33$,
$p{=}6.8\!\cdot\!10^{-4}$), with mean ranks EI $2.41$, UCB $2.64$,
$5{\times}$floor $3.05$, $\kappa{=}1$ $3.42$, $2{\times}$floor $3.49$.

\textbf{A second ordering} (Table~\ref{tab:CaucEI}). Ranking by AUC -- mean regret over the
$19$-point trajectory -- disagrees with the final-regret ranking on three
of five benchmarks. On Ackley-5D UCB is first by final regret ($5.58$)
and third by AUC; on Griewank-5D EI is first by final ($8.31$) and last
by AUC ($32.30$); on Branin UCB is first by final ($0.134$) and last by
AUC ($2.01$), while $5{\times}$floor is fourth by final and first by AUC
($0.945$). The Friedman ordering is unchanged in direction
($\chi^2{=}25.89$, $p{=}3.3\!\cdot\!10^{-5}$; EI $2.09$, UCB $2.88$,
$5{\times}$floor $3.09$, $2{\times}$floor $3.44$, $\kappa{=}1$ $3.50$),
and on AUC three per-benchmark contrasts clear $0.05$, all on Branin and
all against EI: UCB and $2{\times}$floor at $p{=}0.0156$, the attainable
floor, losing on seven of seven seeds, and OR-LCB at $p{=}0.031$ on six
of seven. Where a marginal median and a paired difference disagree the
paired difference is the statement to trust: on Branin AUC,
$5{\times}$floor has the lower marginal median ($0.945$ against EI's
$1.120$) yet loses to EI on six of seven paired seeds ($+0.150$,
$p{=}0.109$).

\begin{table}[t]\centering\footnotesize
\caption{Case C: median (worst) final simple regret at the same $23$-evaluation
budget ($n_{\mathrm{init}}{=}5$ then $18$ BO steps), from one GPU session
(\texttt{botorch 0.18.1}, $Q{=}3$). Seed counts are uneven and are given in the
$n$ column: the Ackley $\kappa$ arms and all Levy arms have $6$ seeds, everything
else has $7$; every paired test pairs only on shared seeds. All $80$
(benchmark, arm, seed) trajectories this corpus shares with the earlier CPU corpus
are \emph{bitwise identical} to it, so the added seeds extend rather than
overturn the earlier entries. or\_full and or\_ei were not in this run; their
$5$-seed values are unchanged and reported in App.~\ref{app:exp}.}
\label{tab:C}
\setlength{\tabcolsep}{4pt}
\begin{tabular}{@{}lcccccc@{}}
\toprule
& $n$ & EI & UCB & OR-LCB ($\kappa{=}1$) & $2{\times}$floor & $5{\times}$floor\\
\midrule
Branin & 7 & 0.14 (0.19) & 0.13 (3.88) & 0.20 (0.34) & 0.29 (0.58) & 0.26 (0.89)\\
Hartmann-6 & 7 & 0.70 (1.65) & 0.96 (2.05) & 0.96 (2.49) & 0.96 (2.49) & 0.96 (2.49)\\
Griewank & 7 & 8.31 (20.79) & 9.71 (25.49) & 12.40 (22.71) & 10.98 (19.98) & 9.92 (19.98)\\
Ackley-5D & 7/6 & 6.59 (8.95) & 5.58 (9.60) & 7.92 (9.60) & 7.92 (9.60) & 7.34 (9.60)\\
Levy-10D & 6 & 4.47 (9.17) & 4.47 (9.17) & 4.47 (9.17) & 4.47 (9.17) & 4.47 (9.17)\\
\bottomrule
\end{tabular}
\end{table}

\begin{table}[t]\centering\scriptsize
\caption{Case C, final simple regret paired against EI. Seed counts are uneven, so each row pairs only on the seeds both arms have. ``Better/worse/tied'' counts seeds on which the row arm ends below/above/exactly at EI; an exact tie is a bitwise-identical trajectory. ``Floor'' is the smallest two-sided Wilcoxon $p$ attainable at that $n$ ($2^{1-n}$): at the $5$ seeds of the earlier revision it was $0.0625$, so no per-benchmark test could reach $0.05$ by construction. The pooled rows test the per-benchmark normalized paired $\log_{10}$ ratio against zero over all $33$ complete blocks.}
\label{tab:CpairEI}
\setlength{\tabcolsep}{4pt}
\begin{tabular}{@{}llrrrrrr@{}}
\toprule
Benchmark & arm & $n$ & median diff. & bootstrap $95\%$ CI of the median & better/worse/tied & Wilcoxon $p$ & floor\\
\midrule
Levy-10D & UCB & $6$ & $+0.0000$ & $[-0.0606,\,+0.0000]$ & $1/0/5$ & $1$ & $0.03125$\\
Levy-10D & OR-LCB & $6$ & $+0.0000$ & $[-0.0606,\,+0.0000]$ & $1/0/5$ & $1$ & $0.03125$\\
Levy-10D & $2{\times}$floor & $6$ & $+0.0000$ & $[-0.0606,\,+0.0000]$ & $1/0/5$ & $1$ & $0.03125$\\
Levy-10D & $5{\times}$floor & $6$ & $+0.0000$ & $[-0.0606,\,+0.0000]$ & $1/0/5$ & $1$ & $0.03125$\\
Hartmann-6 & UCB & $7$ & $+0.2105$ & $[+0.0000,\,+0.3871]$ & $0/5/2$ & $0.0625$ & $0.01562$\\
Hartmann-6 & OR-LCB & $7$ & $+0.2105$ & $[+0.0000,\,+0.3871]$ & $0/5/2$ & $0.0625$ & $0.01562$\\
Hartmann-6 & $2{\times}$floor & $7$ & $+0.2105$ & $[+0.0000,\,+0.3871]$ & $0/5/2$ & $0.0625$ & $0.01562$\\
Hartmann-6 & $5{\times}$floor & $7$ & $+0.2105$ & $[+0.0000,\,+0.3871]$ & $0/5/2$ & $0.0625$ & $0.01562$\\
Ackley-5D & UCB & $6$ & $+0.3485$ & $[-2.6163,\,+1.7060]$ & $2/3/1$ & $1$ & $0.03125$\\
Ackley-5D & OR-LCB & $6$ & $+0.5954$ & $[-0.4959,\,+2.7887]$ & $1/4/1$ & $0.3125$ & $0.03125$\\
Ackley-5D & $2{\times}$floor & $6$ & $+0.5954$ & $[-0.4959,\,+2.7887]$ & $1/4/1$ & $0.3125$ & $0.03125$\\
Ackley-5D & $5{\times}$floor & $6$ & $+0.1823$ & $[-0.8769,\,+1.2199]$ & $1/3/2$ & $0.875$ & $0.03125$\\
Griewank & UCB & $7$ & $-1.2086$ & $[-2.0878,\,+6.8006]$ & $4/2/1$ & $1$ & $0.01562$\\
Griewank & OR-LCB & $7$ & $+5.0683$ & $[-2.0878,\,+9.2188]$ & $2/5/0$ & $0.2188$ & $0.01562$\\
Griewank & $2{\times}$floor & $7$ & $+1.6075$ & $[-2.0878,\,+9.2188]$ & $2/4/1$ & $0.4375$ & $0.01562$\\
Griewank & $5{\times}$floor & $7$ & $+0.1839$ & $[-2.3604,\,+5.3799]$ & $3/4/0$ & $0.8125$ & $0.01562$\\
Branin & UCB & $7$ & $-0.0060$ & $[-0.0197,\,+0.0006]$ & $5/2/0$ & $0.4688$ & $0.01562$\\
Branin & OR-LCB & $7$ & $+0.0629$ & $[+0.0063,\,+0.1500]$ & $1/6/0$ & $0.07812$ & $0.01562$\\
Branin & $2{\times}$floor & $7$ & $+0.1218$ & $[+0.0259,\,+0.2819]$ & $1/6/0$ & $0.03125$ & $0.01562$\\
Branin & $5{\times}$floor & $7$ & $+0.0986$ & $[-0.0308,\,+0.2734]$ & $2/5/0$ & $0.07812$ & $0.01562$\\
\midrule
pooled log-ratio & UCB & $33$ & $+0.0000$ & $[-0.0011,\,+0.0016]$ & $12/12/9$ & $0.6071$ & $2.33{\times}10^{-10}$\\
pooled log-ratio & OR-LCB & $33$ & $+0.0750$ & $[+0.0000,\,+0.1735]$ & $5/20/8$ & $0.001079$ & $2.33{\times}10^{-10}$\\
pooled log-ratio & $2{\times}$floor & $33$ & $+0.0718$ & $[+0.0000,\,+0.1625]$ & $5/19/9$ & $7.48{\times}10^{-4}$ & $2.33{\times}10^{-10}$\\
pooled log-ratio & $5{\times}$floor & $33$ & $+0.0167$ & $[+0.0000,\,+0.0851]$ & $7/17/9$ & $0.02584$ & $2.33{\times}10^{-10}$\\
\bottomrule
\end{tabular}
\end{table}

\begin{table}[t]\centering\scriptsize
\caption{Case C, final simple regret paired against OR-LCB ($\kappa{=}1$), same conventions as Tab.~\ref{tab:CpairEI}. The tie counts are the $\kappa$-inertness made visible: $22/33$ blocks tie exactly between $\kappa{=}1$ and $\kappa{=}2$, $17/33$ between $\kappa{=}1$ and $\kappa{=}5$, and $20/33$ between $\kappa{=}2$ and $\kappa{=}5$.}
\label{tab:CpairOR}
\setlength{\tabcolsep}{4pt}
\begin{tabular}{@{}llrrrrrr@{}}
\toprule
Benchmark & arm & $n$ & median diff. & bootstrap $95\%$ CI of the median & better/worse/tied & Wilcoxon $p$ & floor\\
\midrule
Levy-10D & EI & $6$ & $+0.0000$ & $[+0.0000,\,+0.0606]$ & $0/1/5$ & $1$ & $0.03125$\\
Levy-10D & UCB & $6$ & $+0.0000$ & $[+0.0000,\,+0.0000]$ & $0/0/6$ & $1$ & $0.03125$\\
Levy-10D & $2{\times}$floor & $6$ & $+0.0000$ & $[+0.0000,\,+0.0000]$ & $0/0/6$ & $1$ & $0.03125$\\
Levy-10D & $5{\times}$floor & $6$ & $+0.0000$ & $[+0.0000,\,+0.0000]$ & $0/0/6$ & $1$ & $0.03125$\\
Hartmann-6 & EI & $7$ & $-0.2105$ & $[-0.3871,\,+0.0000]$ & $5/0/2$ & $0.0625$ & $0.01562$\\
Hartmann-6 & UCB & $7$ & $+0.0000$ & $[+0.0000,\,+0.0000]$ & $1/0/6$ & $1$ & $0.01562$\\
Hartmann-6 & $2{\times}$floor & $7$ & $+0.0000$ & $[+0.0000,\,+0.0000]$ & $0/0/7$ & $1$ & $0.01562$\\
Hartmann-6 & $5{\times}$floor & $7$ & $+0.0000$ & $[+0.0000,\,+0.0000]$ & $0/0/7$ & $1$ & $0.01562$\\
Ackley-5D & EI & $6$ & $-0.5954$ & $[-2.7887,\,+0.4959]$ & $4/1/1$ & $0.3125$ & $0.03125$\\
Ackley-5D & UCB & $6$ & $-0.7418$ & $[-2.7081,\,+0.0000]$ & $3/0/3$ & $0.25$ & $0.03125$\\
Ackley-5D & $2{\times}$floor & $6$ & $+0.0000$ & $[+0.0000,\,+0.0000]$ & $0/0/6$ & $1$ & $0.03125$\\
Ackley-5D & $5{\times}$floor & $6$ & $-0.3810$ & $[-1.9819,\,+0.0000]$ & $3/0/3$ & $0.25$ & $0.03125$\\
Griewank & EI & $7$ & $-5.0683$ & $[-9.2188,\,+2.0878]$ & $5/2/0$ & $0.2188$ & $0.01562$\\
Griewank & UCB & $7$ & $+0.0000$ & $[-6.2769,\,+0.9336]$ & $3/2/2$ & $0.3125$ & $0.01562$\\
Griewank & $2{\times}$floor & $7$ & $-0.4871$ & $[-2.7244,\,+0.0000]$ & $4/0/3$ & $0.125$ & $0.01562$\\
Griewank & $5{\times}$floor & $7$ & $-2.4762$ & $[-7.4287,\,+0.0000]$ & $5/1/1$ & $0.09375$ & $0.01562$\\
Branin & EI & $7$ & $-0.0629$ & $[-0.1500,\,-0.0063]$ & $6/1/0$ & $0.07812$ & $0.01562$\\
Branin & UCB & $7$ & $-0.0623$ & $[-0.1697,\,-0.0067]$ & $6/1/0$ & $0.2969$ & $0.01562$\\
Branin & $2{\times}$floor & $7$ & $+0.1154$ & $[+0.0084,\,+0.2309]$ & $1/6/0$ & $0.1094$ & $0.01562$\\
Branin & $5{\times}$floor & $7$ & $+0.0840$ & $[-0.0184,\,+0.2105]$ & $2/5/0$ & $0.2969$ & $0.01562$\\
\midrule
pooled log-ratio & EI & $33$ & $-0.0750$ & $[-0.1735,\,+0.0000]$ & $20/5/8$ & $0.001079$ & $2.33{\times}10^{-10}$\\
pooled log-ratio & UCB & $33$ & $+0.0000$ & $[-0.0830,\,+0.0000]$ & $13/3/17$ & $0.01509$ & $2.33{\times}10^{-10}$\\
pooled log-ratio & $2{\times}$floor & $33$ & $+0.0000$ & $[+0.0000,\,+0.0000]$ & $5/6/22$ & $0.4236$ & $2.33{\times}10^{-10}$\\
pooled log-ratio & $5{\times}$floor & $33$ & $+0.0000$ & $[+0.0000,\,+0.0000]$ & $10/6/17$ & $0.5695$ & $2.33{\times}10^{-10}$\\
\bottomrule
\end{tabular}
\end{table}

\begin{table}[t]\centering\scriptsize
\caption{Case C, AUC (mean regret over the $19$-point trajectory) paired against EI; conventions as in Tab.~\ref{tab:CpairEI}. AUC is reported as a second ordering because final regret ignores the path; the two orderings disagree on three of five benchmarks.}
\label{tab:CaucEI}
\setlength{\tabcolsep}{4pt}
\begin{tabular}{@{}llrrrrrr@{}}
\toprule
Benchmark & arm & $n$ & median diff. & bootstrap $95\%$ CI of the median & better/worse/tied & Wilcoxon $p$ & floor\\
\midrule
Levy-10D & UCB & $6$ & $+0.0000$ & $[+0.0000,\,+0.5668]$ & $0/1/5$ & $1$ & $0.03125$\\
Levy-10D & OR-LCB & $6$ & $+0.0000$ & $[+0.0000,\,+0.5668]$ & $0/1/5$ & $1$ & $0.03125$\\
Levy-10D & $2{\times}$floor & $6$ & $+0.0000$ & $[+0.0000,\,+0.5668]$ & $0/1/5$ & $1$ & $0.03125$\\
Levy-10D & $5{\times}$floor & $6$ & $+0.0000$ & $[+0.0000,\,+0.5668]$ & $0/1/5$ & $1$ & $0.03125$\\
Hartmann-6 & UCB & $7$ & $+0.0935$ & $[+0.0000,\,+0.2253]$ & $1/5/1$ & $0.0625$ & $0.01562$\\
Hartmann-6 & OR-LCB & $7$ & $+0.0935$ & $[+0.0000,\,+0.2253]$ & $1/5/1$ & $0.0625$ & $0.01562$\\
Hartmann-6 & $2{\times}$floor & $7$ & $+0.0935$ & $[+0.0000,\,+0.2253]$ & $1/5/1$ & $0.0625$ & $0.01562$\\
Hartmann-6 & $5{\times}$floor & $7$ & $+0.0935$ & $[+0.0000,\,+0.2253]$ & $1/5/1$ & $0.0625$ & $0.01562$\\
Ackley-5D & UCB & $6$ & $-0.1948$ & $[-0.8289,\,+1.2636]$ & $4/2/0$ & $0.8438$ & $0.03125$\\
Ackley-5D & OR-LCB & $6$ & $+0.7537$ & $[-0.4963,\,+1.6545]$ & $1/5/0$ & $0.3125$ & $0.03125$\\
Ackley-5D & $2{\times}$floor & $6$ & $+0.7537$ & $[-0.4963,\,+1.6545]$ & $1/5/0$ & $0.3125$ & $0.03125$\\
Ackley-5D & $5{\times}$floor & $6$ & $+0.5927$ & $[-0.5422,\,+1.2413]$ & $1/5/0$ & $0.3125$ & $0.03125$\\
Griewank & UCB & $7$ & $-0.7294$ & $[-2.7192,\,+0.8308]$ & $4/3/0$ & $0.5781$ & $0.01562$\\
Griewank & OR-LCB & $7$ & $+0.8427$ & $[-2.1666,\,+7.9078]$ & $2/5/0$ & $0.4688$ & $0.01562$\\
Griewank & $2{\times}$floor & $7$ & $+0.5759$ & $[-2.1666,\,+7.6769]$ & $2/5/0$ & $0.4688$ & $0.01562$\\
Griewank & $5{\times}$floor & $7$ & $+0.1303$ & $[-3.5321,\,+7.6769]$ & $3/4/0$ & $0.6875$ & $0.01562$\\
Branin & UCB & $7$ & $+0.8898$ & $[+0.0565,\,+1.7596]$ & $0/7/0$ & $0.01562$ & $0.01562$\\
Branin & OR-LCB & $7$ & $+0.2480$ & $[+0.1060,\,+1.2761]$ & $1/6/0$ & $0.03125$ & $0.01562$\\
Branin & $2{\times}$floor & $7$ & $+0.2762$ & $[+0.1918,\,+0.6979]$ & $0/7/0$ & $0.01562$ & $0.01562$\\
Branin & $5{\times}$floor & $7$ & $+0.1499$ & $[+0.0818,\,+0.8243]$ & $1/6/0$ & $0.1094$ & $0.01562$\\
\midrule
pooled log-ratio & UCB & $33$ & $+0.0058$ & $[+0.0000,\,+0.0601]$ & $9/18/6$ & $0.01426$ & $2.33{\times}10^{-10}$\\
pooled log-ratio & OR-LCB & $33$ & $+0.0255$ & $[+0.0000,\,+0.0733]$ & $5/22/6$ & $9.97{\times}10^{-4}$ & $2.33{\times}10^{-10}$\\
pooled log-ratio & $2{\times}$floor & $33$ & $+0.0532$ & $[+0.0017,\,+0.0876]$ & $4/23/6$ & $3.44{\times}10^{-4}$ & $2.33{\times}10^{-10}$\\
pooled log-ratio & $5{\times}$floor & $33$ & $+0.0246$ & $[+0.0000,\,+0.0627]$ & $6/21/6$ & $0.00425$ & $2.33{\times}10^{-10}$\\
\bottomrule
\end{tabular}
\end{table}

\begin{figure}[t]\centering
\includegraphics[width=\textwidth]{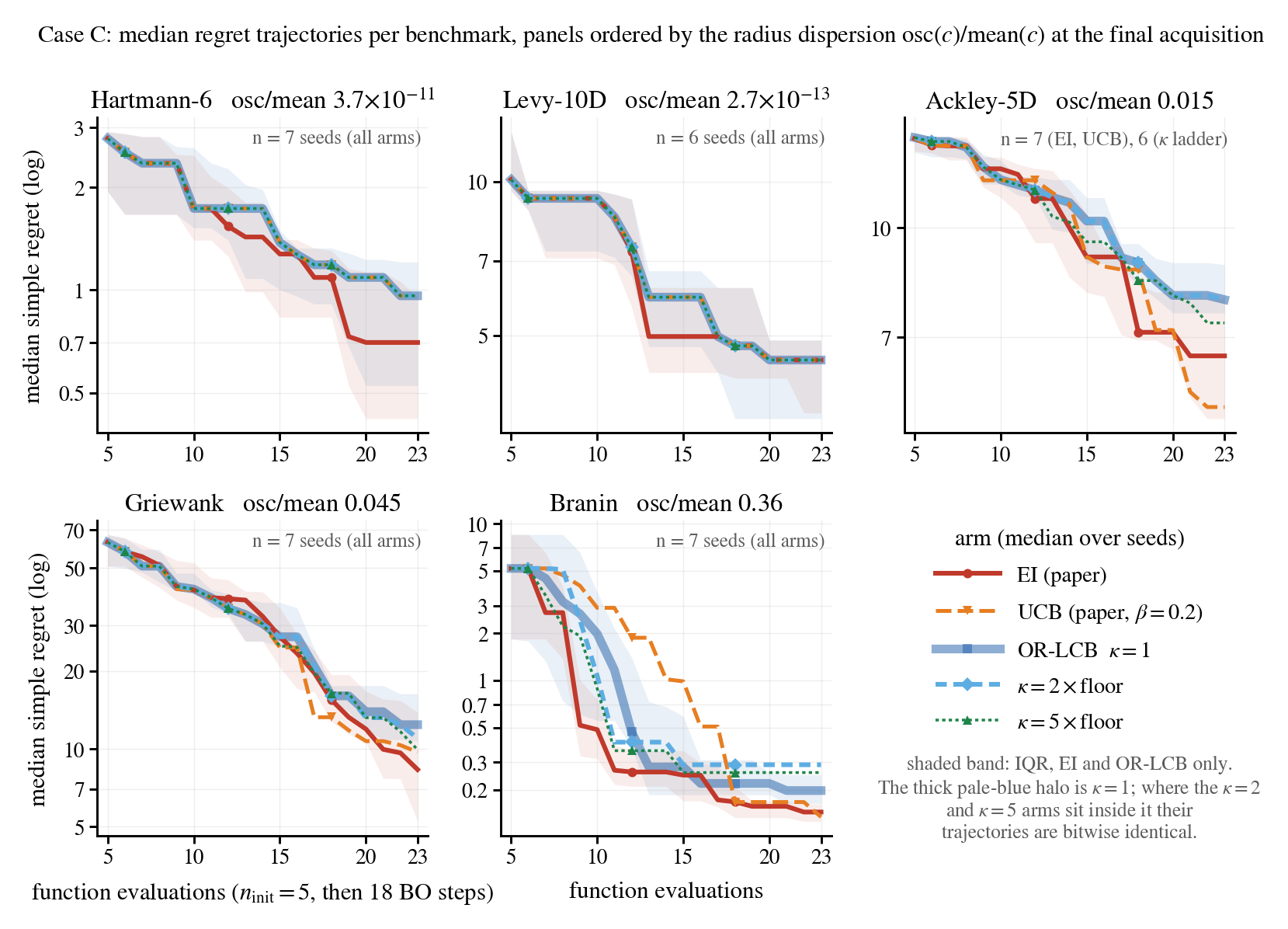}
\caption{Case C median regret per benchmark, panels ordered by radius
dispersion at the final acquisition -- the two saturated benchmarks sit
below the certifier's $\mu$-grid floor, so their order between themselves
is arbitrary (Rem.~\ref{rem:gridfloor}); all arms share the initial design.
Line widths are chosen so that a bitwise-identical overlay is visible:
the thick pale halo is $\kappa{=}1$, and where the $\kappa{=}2$ (dashed)
and $\kappa{=}5$ (dotted) arms sit inside it the trajectories agree
element for element. Seed counts are printed in each panel.}
\label{fig:C}
\end{figure}

\begin{figure}[t]\centering
\includegraphics[width=\textwidth]{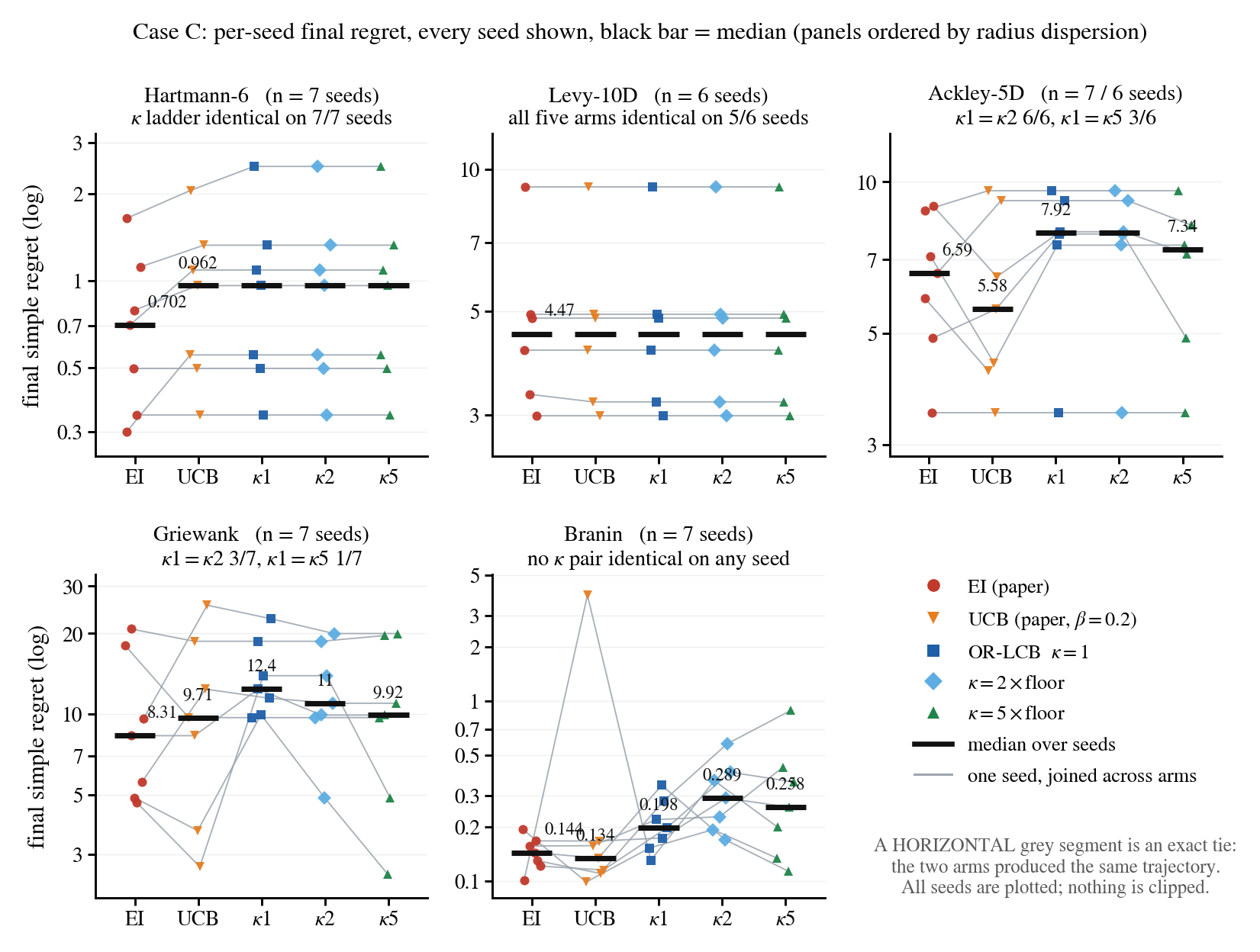}
\caption{Case C per-seed final regret; every seed is plotted, nothing is
clipped, the black bar is the median. Each grey polyline is one seed
followed across the five arms, so a horizontal segment is an exact tie in
\emph{final} regret; on this corpus every such tie is also a
bitwise-identical trajectory (Table~\ref{tab:CpairOR}), which the plot
itself cannot show. The ties fill the $\kappa$
columns on Hartmann-6 and Levy-10D and vanish on Branin.}
\label{fig:Cfin}
\end{figure}

\subsection{The kappa ladder and its grading}

\textbf{The $\kappa$ ladder, benchmark by benchmark}
(Table~\ref{tab:Ckappa}, Table~\ref{tab:CpairOR},
Fig.~\ref{fig:Cfin}). $\kappa$ is now run at
$\{1,2,5\}$ on all five benchmarks rather than at $\{1,2\}$ on two of
them, and the two rungs behave differently. \textbf{$\kappa{=}2$ is
inert wherever $\kappa{=}1$ is}: it selects the same candidate at every
step on $6/6$ Levy-10D, $7/7$ Hartmann-6 and $6/6$ Ackley-5D seeds, and
it departs only on Griewank-5D ($1/7$ inert) and Branin-2D ($0/7$).
\textbf{$\kappa{=}5$ is not.} It stays identical on Levy-10D ($6/6$) and
almost so on Hartmann-6 ($6/7$; the one departure is at $t{=}15$ and the
running minimum absorbs it, which is why the regret trajectory records
$7/7$), but it breaks away from $\kappa{=}1$ on $3$ of $6$ Ackley-5D
seeds and on $7$ of $7$ Griewank-5D seeds. The Ackley departures are
genuine, not numerical: they occur at $t{=}12$, $7$ and $10$ at
oscillation-to-gap ratios $2.30$, $1.76$ and $2.38$, with oscillations
of order $10^{-2}$, twelve orders of magnitude above machine epsilon.
And where $\kappa{=}5$ acts it \emph{helps}: on Ackley-5D it improves
the final regret on all three divergent seeds ($-0.76$, $-0.97$,
$-2.99$) and ties on the other three, moving the median $7.92\to7.34$;
on Griewank-5D it is better on five seeds, worse on one and tied on one,
moving the median $12.40\to9.92$ against $10.98$ at $\kappa{=}2$. No
paired contrast clears $0.05$ (Griewank $-2.48$, $p{=}0.094$;
Ackley-5D $-0.38$, $p{=}0.25$; Hartmann-6 and Levy-10D are ties
throughout; Branin $+0.084$, $p{=}0.297$; pooled median $0.000$,
$p{=}0.57$ with $17$ of $33$ blocks exact ties), but $\kappa{=}5$ never
worsens a median, improves two, and has the better Friedman mean rank of
the three rungs. The claim of an earlier revision that ``$\kappa{=}5$
does not help'' rested on a two-seed Griewank median of $14.85$; those
two trajectories reproduce bitwise here, and the median over all seven
seeds is $9.92$. We withdraw it. The defensible statement is: the
inertness of the certified width is a property of the \emph{size} of the
scaling as much as of the objective, $\kappa{=}2$ is inert on three of
five benchmarks and $\kappa{=}5$ on two, and on the two benchmarks where
$\kappa{=}5$ does act it moves the median the right way without
reaching significance at six or seven seeds.

\textbf{The grading, and what it does and does not resolve}
(Fig.~\ref{fig:Cmech}). Ordering the benchmarks by the radius dispersion
$\mathrm{osc}(\hat c)/\mathrm{mean}(\hat c)$, the action-level inertness
of the full ladder is monotone non-increasing -- Levy-10D $6/6$,
Hartmann-6 $6/7$, Ackley-5D $3/6$, Griewank-5D $0/7$, Branin-2D $0/7$ --
and within every benchmark it is monotone non-increasing in
$|\Delta\kappa|$. That is the shape of a graded law rather than a binary
property of two benchmarks, and we report it as such. It is not,
however, a resolved five-level ordering. Pairwise Fisher exact tests
with Holm correction over the ten benchmark pairs resolve exactly two
groups, $\{$Branin, Griewank$\}$ and $\{$Hartmann-6, Levy$\}$ (four
comparisons at Holm $p=0.006$--$0.037$); Ackley-5D is indistinguishable
from both (Holm $p\ge0.42$), and the two comparisons the word ``graded''
would most need -- Branin against Griewank at the bottom and Hartmann-6
against Levy at the top -- are both $p=1.00$. Three further limits.
First, dispersion and \emph{dimension} are confounded across these five
objectives (exact-permutation Spearman against inertness: dispersion
$-0.97$, $p=0.033$; dimension $+0.92$, $p=0.067$), and the only
dimension-controlled comparison available -- Ackley against Griewank,
which share $d=5$ and, we verified, bitwise-identical initial designs
and candidate clouds -- is Fisher $p=0.070$. We cannot separate the two
explanations from this corpus. Second, the ordering is a budget
snapshot: recomputing inertness at a truncated budget of six iterations
gives Griewank $7/7$ and Ackley $6/6$ and destroys the ordering
entirely. The variable it actually tracks is the iteration at which the
design escapes the flat-radius regime (median first $t$ with
$\mathrm{osc}/\mathrm{mean}>10^{-6}$: Branin $0$, Griewank $5$, Ackley
$6$, Hartmann-6 $9$, Levy never within $18$), and the first $\kappa$
divergence always follows that escape. Escape time is a
budget-by-dimension quantity, not a property of the objective. Third,
the dispersion statistic does not explain \emph{which} seeds diverge
inside a benchmark: on Ackley the seed with the largest bound violation
over its run ($4\,\mathrm{osc}/\Delta=26.9$) is inert while a seed at
$4.7$ is not, and the same inversion occurs on Hartmann-6 ($222$ inert,
$1.4$ divergent). $\mathrm{osc}(\hat c)/\mathrm{mean}(\hat c)$ is a
between-benchmark descriptor and we use it as one.

\textbf{The inert end is where every acquisition is inert, and we say so
with the corpus's own control.} Levy-10D and Hartmann-6 are not
demonstrations that the OR radius specifically is flat; they are
configurations in which the acquisition function does not matter at all.
On Levy-10D five of six seeds produce one action path across all five
arms -- EI, UCB and $\kappa\in\{1,2,5\}$ alike -- and on Hartmann-6
UCB's action path coincides with OR-LCB's on $5$ of $7$ seeds and its
regret trajectory on $6$ of $7$. The UCB-versus-OR-LCB action-identity
rate across the five benchmarks ($0/7$, $2/7$, $2/6$, $5/7$, $6/6$)
reproduces the same ordering with no $\kappa$ anywhere in it. The two
top rungs of Table~\ref{tab:Ckappa} are therefore saturated nulls -- as
this paper already said of Levy, and now says of Hartmann-6 -- and the
informative comparisons are Branin, Griewank and Ackley. The certified
width's \emph{level} carries no spatial information either:
$\mathrm{mean}(\hat c)$ at a given $t$ agrees across all five benchmarks
to $0.003$ at $t{=}0$ and $0.56$ at $t{=}17$. Only its oscillation can
act, and escape time governs that.

The regret trajectory behind Table~\ref{tab:Ckappa} is a running minimum
and can be bitwise identical while the arms acquire
different points -- that happens in $10$ of $37$ action-divergent Hartmann-6
arm pairs ($27\%$), and correcting for it moves four cells of that table
(Hartmann-6 $2{\leftrightarrow}5$ and $1{\leftrightarrow}5$ from $7/7$ to
$6/7$, Griewank $1{\leftrightarrow}2$ from $3/7$ to $1/7$ and
$1{\leftrightarrow}5$ from $1/7$ to $0/7$) and two of the full-ladder column.
Its Clopper--Pearson $95\%$ intervals depend only on $k/n$:
$7/7\,[0.590,1]$, $6/6\,[0.541,1]$, $6/7\,[0.421,0.996]$,
$3/6\,[0.118,0.882]$, $3/7\,[0.099,0.816]$, $1/7\,[0.004,0.579]$,
$0/7\,[0,0.410]$.

The measured radius dispersions $\mathrm{osc}(\hat c)/\mathrm{mean}(\hat c)$
that order these benchmarks, and the normalized candidate-to-data distances
behind them, are reported with Lem.~\ref{lem:flat} in
App.~\ref{app:proofs}.

\begin{table}[t]\centering\footnotesize
\caption{$\kappa$-inertness, measured at the \emph{action} level: a
$\kappa$ pair is inert on a seed iff the two arms select the same
candidate at every one of the $18$ BO iterations. We measure the action
path and not the regret trajectory because the regret trajectory is a
running minimum and can be bitwise identical while the arms acquire
different points. The independent unit is the seed;
the three $\kappa$ pairs within a seed are \emph{not} independent, since
identity is an equivalence relation, so we do not pool them, and
Clopper--Pearson $95\%$ intervals depend only on $k/n$ ($7/7\,[0.590,1]$ to
$0/7\,[0,0.410]$). The dispersion column is the
median over BO steps of $\mathrm{osc}(\hat c)/\mathrm{mean}(\hat c)$; on
the top two rows the radius minimizer is pinned to the lower end of the
certifier's $\mu$ grid and the recorded value is an $O(\mu_{\mathrm{lo}})$
artifact of that grid rather than a property of the objective
(Rem.~\ref{rem:gridfloor}), which is why it is written as a bound. The escape
column is the first BO iteration at which
$\mathrm{osc}(\hat c)/\mathrm{mean}(\hat c)$ exceeds $10^{-6}$, and the first
$\kappa$ divergence always follows it.}
\label{tab:Ckappa}
\setlength{\tabcolsep}{5pt}
\begin{tabular}{@{}lccccccc@{}}
\toprule
& & \multicolumn{3}{c}{action-inert seeds} & full & escape & UCB\\
\cmidrule(lr){3-5}
Benchmark & $\mathrm{osc}(\hat c)/\mathrm{mean}(\hat c)$ & $1{\leftrightarrow}2$ & $2{\leftrightarrow}5$ & $1{\leftrightarrow}5$ & ladder & step & $=$OR-LCB\\
\midrule
Levy-10D & below the $\mu$-grid floor & $6/6$ & $6/6$ & $6/6$ & $6/6$ & never & $6/6$\\
Hartmann-6 & below the $\mu$-grid floor & $7/7$ & $6/7$ & $6/7$ & $6/7$ & $9$ & $5/7$\\
Ackley-5D & $0.0066$ & $6/6$ & $3/6$ & $3/6$ & $3/6$ & $6$ & $2/6$\\
Griewank-5D & $0.0110$ & $1/7$ & $3/7$ & $0/7$ & $0/7$ & $5$ & $2/7$\\
Branin-2D & $0.3263$ & $0/7$ & $0/7$ & $0/7$ & $0/7$ & $0$ & $0/7$\\
\bottomrule
\end{tabular}
\end{table}

\subsection{The invariance criterion tested step by step}

\begin{table}[t]\centering\footnotesize
\caption{Prop.~\ref{prop:inv} tested directly. Its criterion
$|\Delta\kappa|\,\mathrm{osc}(c)<\Delta_\kappa$ was evaluated and compared with
what the acquisition actually did, at every BO iteration of every arm and seed
(left) and over whole trajectories (right). The proposition is \emph{sufficient},
so the boldface cell -- criterion fires, argmin moves -- is the only cell that
could refute it; it is empty in both tests. The criterion is conservative rather
than tight: it fires on $65.8\%$ of steps and covers $84.7\%$ of the steps on
which invariance actually occurred.}
\label{tab:Cinv}
\setlength{\tabcolsep}{6pt}
\begin{tabular}{@{}lrrrr@{}}
\toprule
& \multicolumn{2}{c}{one step ($n{=}9{,}018$ records)}
& \multicolumn{2}{c}{whole trajectory ($n{=}99$ pairs)}\\
\cmidrule(lr){2-3}\cmidrule(lr){4-5}
& argmin coincided & argmin moved & bitwise identical & differ\\
\midrule
criterion fires & $5{,}936$ & $\mathbf{0}$ & $35$ & $\mathbf{0}$\\
criterion silent & $1{,}076$ & $2{,}006$ & $23$ & $41$\\
\bottomrule
\end{tabular}
\end{table}

\textbf{The proposition is a validated predictor, and this is the part
that survives everything} (Table~\ref{tab:Cinv}, Fig.~\ref{fig:Cinv}).
At every BO iteration of every arm and seed we recorded the criterion
$|\kappa'-\kappa|\,\mathrm{osc}(c)<\Delta_\kappa$ of
Prop.~\ref{prop:inv} alongside whether the argmin actually coincided:
$9{,}018$ (benchmark, arm, seed, step, $\kappa$-pair) records.
Prop.~\ref{prop:inv} is sufficient, so ``criterion fires, argmin moves''
is the only cell that could refute it; that count is $0$ of $9{,}018$ at
the step level, $0$ of $99$ at the trajectory level, and $0$ within every
benchmark and every $\kappa$ pair separately. Three ways it could be
spurious, checked and excluded. It is not a tie artifact: the minimum
runner-up gap anywhere in the corpus is $1.02\!\cdot\!10^{-7}$ and no
record has a gap below $10^{-9}$. It is not vacuous globally: the largest
$|\Delta\kappa|\,\mathrm{osc}(c)/\Delta_\kappa$ on a firing record is
$0.998$, with $46$ records above $0.9$ and five above $0.99$, so the
criterion came within $0.2\%$ of binding and did not fail. And it is not
an artifact of the identity measure, because it is tested on the argmin
directly rather than on the regret trajectory. What it is, is
conservative. It fires on $65.8\%$ of steps and covers $84.7\%$ of the
steps on which invariance in fact occurred; above its threshold the
argmin still coincided on $1{,}076$ of $3{,}082$ steps ($34.9\%$,
Clopper--Pearson $[0.332,0.366]$), and on Ackley specifically on $380$ of
$436$ ($87\%$). Binned by the margin ratio
$r=|\Delta\kappa|\,\mathrm{osc}(c)/\Delta_\kappa$ the observed invariance
rate is exactly $1.000$ in every bin with $r<1$ -- $5{,}936/5{,}936$,
including $337/337$ in $[0.5,1)$ -- and then falls to $0.83$, $0.80$,
$0.82$, $0.18$ and $0.023$ in $[1,1.5)$, $[1.5,2.5)$, $[2.5,10)$,
$[10,10^3)$ and above $10^3$. A usable operational rule follows: when the
criterion fires on at least $80\%$ of the $18$ BO steps the trajectory
was bitwise identical in $48/48$ cells (Clopper--Pearson
$[0.926,1.000]$); when it fires on fewer than half, in $0/28$. Two scope
limits. The criterion is vacuous exactly on the saturated benchmarks, and
vacuous by \emph{always} firing rather than never: it fires on
$1620/1620$ Levy records and on $1863/1890$ Hartmann-6 records, at
inverse margin ratios whose medians are $1.7\!\cdot\!10^{10}$ and
$1.1\!\cdot\!10^{9}$, and the margin ratio
$|\Delta\kappa|\,\mathrm{osc}(c)/\Delta_\kappa$ never even reaches
$0.5$ on Levy ($0$ of $540$ steps) against $18$ of $630$ on Hartmann-6.
Those two benchmarks therefore supply $3{,}483$ of the $5{,}936$ firings
($59\%$) and none of the information; the criterion's non-trivial
exercise is the remaining $2{,}453$ firings on Branin ($55$), Griewank
($1{,}106$) and Ackley ($1{,}292$). And the implementation anchors the runner-up gap at
the smaller $\kappa$ only, while the proposition is symmetric;
anchoring at either endpoint recovers $43$ of the $1{,}076$ misses and
still yields zero counterexamples. We report this as a floating-point
validation of a proved sufficient condition, plus a measurement of how
conservative that condition is, and not as evidence for a two-line
proof.

\textbf{What this leaves.} The certified floor anchors validity, and a
checkable margin condition says exactly when scaling it can do nothing:
that condition held $5{,}936$ times in this corpus without a single
counterexample, and it is the practical content of Rem.~\ref{rem:floor}.
Useful exploration schedules must reshape the width's geometry -- through
the nugget path $\nu(x)$ or hybrid distance terms -- rather than rescale
its magnitude, which is why an intentionally loose fixed-nugget bound of
an earlier revision reached $4.40$ on Griewank where every certified rung
reaches $9.9$--$12.4$: its advantage came from its accidental
$x$-dependence, its shape, not its scale. Two honest caveats on the
identity results themselves. Case C contains no within-session
determinism control -- no arm is re-run against itself -- so bitwise
identity is conditional on the GP fit being reproducible across calls,
which we did not independently verify; the argmin margins along inert
runs are large enough that a deterministic backend change could not flip
a pick -- over the $15$ fully inert runs the smallest runner-up gap
anywhere along a run is $1.3\!\cdot\!10^{-4}$ and the per-run minimum has
median $2.0\!\cdot\!10^{-3}$, against acquisition values of order
$1$ -- but non-determinism within a backend is not
excluded. And identity is budget-contingent: two cells of
Table~\ref{tab:Ckappa} diverge only at $t{=}15$ (Hartmann-6
$2{\leftrightarrow}5$ and $1{\leftrightarrow}5$, both on seed $2$), and a
$17$-iteration budget would move two further Griewank cells
($1{\leftrightarrow}2$ from $1/7$ to $4/7$ and $1{\leftrightarrow}5$ from
$0/7$ to $1/7$), so the table is budget-specific. The
$50$-evaluation deep-budget probe that would settle where the grading
converges was not run at this preset.

\begin{figure}[t]\centering
\includegraphics[width=\textwidth]{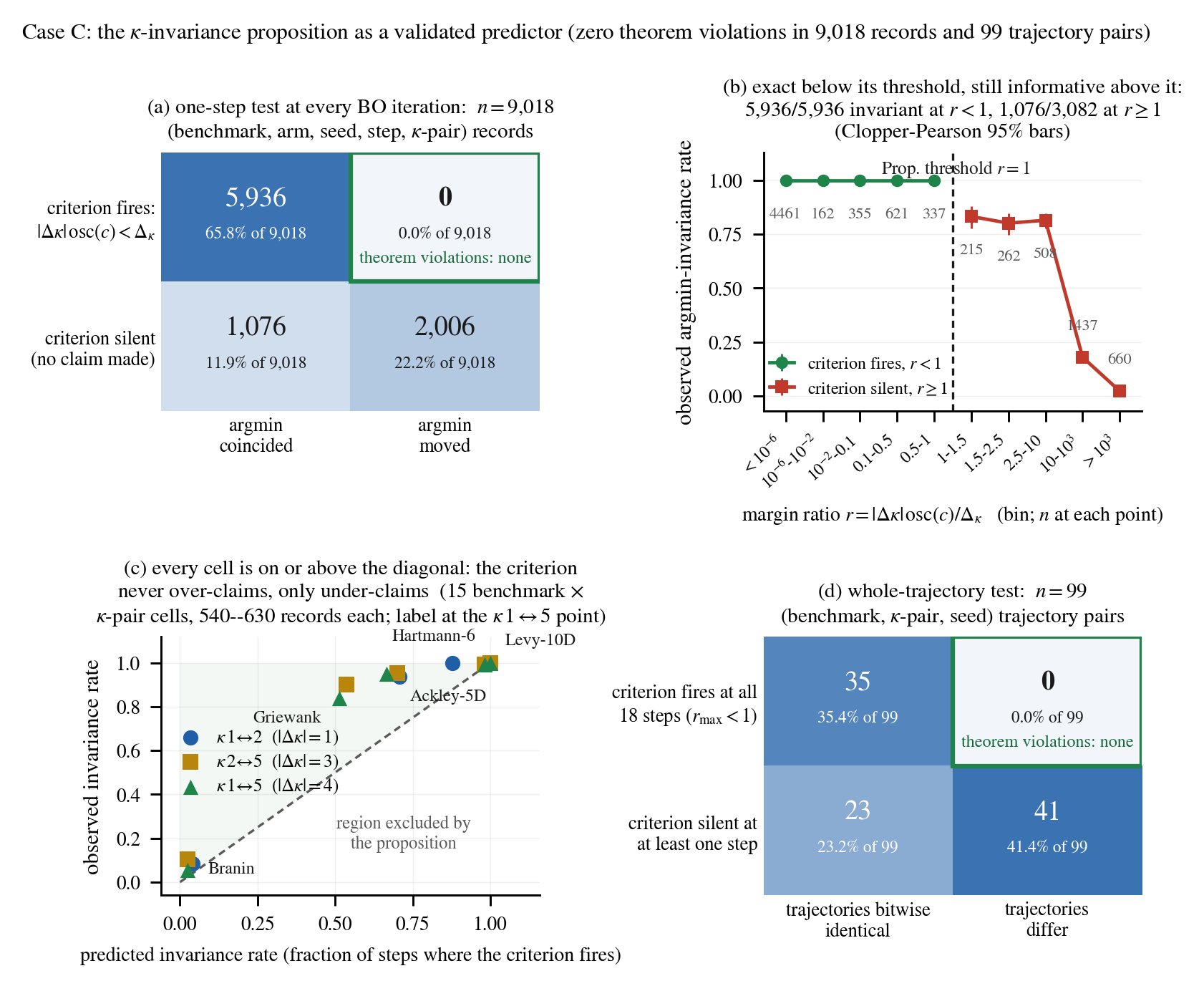}
\caption{Prop.~\ref{prop:inv} tested against what the acquisition
actually did. (a) The one-step confusion table over every BO iteration of
every arm and seed; the outlined cell is the only one that could refute a
sufficient condition and it is empty. (b) Observed argmin-invariance
against the margin ratio
$r=|\Delta\kappa|\,\mathrm{osc}(c)/\Delta_\kappa$: exactly $1.000$ in
every bin below the threshold and still $0.80$--$0.83$ just above it,
which is how conservative the bound is. (c) Every
benchmark-by-$\kappa$-pair cell lies on or above the diagonal: the
criterion never over-claims, only under-claims. (d) The same test at
whole-trajectory resolution.}
\label{fig:Cinv}
\end{figure}

\end{document}